\documentclass[11pt]{article}

\usepackage{fullpage}

\usepackage[utf8]{inputenc} 
\usepackage[T1]{fontenc}    
\usepackage[colorlinks=true, citecolor=darkblue, linkcolor=darkblue, urlcolor=darkblue]{hyperref}       
\usepackage{url}            
\usepackage{booktabs}       
\usepackage{amsfonts}       
\usepackage{nicefrac}       
\usepackage{microtype}      
\usepackage[table, x11names]{xcolor}         
\definecolor{darkblue}{RGB}{0,0,139}
\usepackage{bm, bbm, amsthm, amsmath,amsfonts,amssymb, dsfont}
\usepackage{algorithm}
\usepackage{algpseudocode}
\usepackage{cleveref}
\usepackage{kotex}
\usepackage{graphicx}
\usepackage{subcaption}
\usepackage{tikz}
\usepackage{natbib}
\usetikzlibrary{arrows.meta, positioning}

\newtheorem{theorem}{Theorem}
\newtheorem{lemma}{Lemma}

\newtheorem{corollary}{Corollary}

\newtheorem{assumption}{Assumption}

\newcommand{\calA}{{\mathcal{A}}}

\newcommand{\calF}{{\mathcal{F}}}

\newcommand{\calL}{{\mathcal{L}}}
\newcommand{\calM}{{\mathcal{M}}}
\newcommand{\calN}{{\mathcal{N}}}
\newcommand{\calO}{{\mathcal{O}}}
\newcommand{\calP}{{\mathcal{P}}}

\newcommand{\calS}{{\mathcal{S}}}

\newcommand{\calV}{{\mathcal{V}}}

\newcommand{\calX}{{\mathcal{X}}}

\newcommand{\bbE}{\mathbb{E}}

\newcommand{\bbN}{\mathbb{N}}

\newcommand{\bbP}{\mathbb{P}}

\newcommand{\bbR}{\mathbb{R}}

\newcommand{\bfw}{\mathbf{w}}
\newcommand{\bfx}{\mathbf{x}}

\newcommand{\Regret}{\mathrm{Regret}}
\newcommand{\Violation}{\mathrm{Violation}}

\newcommand{\spn}{\mathrm{sp}}

\DeclareMathOperator*{\argmax}{arg\,max}
\DeclareMathOperator*{\argmin}{arg\,min}
\newcommand{\relint}{\mathrm{relint}}
\newcommand{\ext}{\mathrm{ext}}

\newcommand{\conv}{\mathrm{conv}}
\newcommand{\clconv}{\overline{\mathrm{conv}}}
\newcommand{\bigO}{\widetilde{{\mathcal{O}}}}
\newcommand{\PiSR}{{\Pi}}
\newcommand{\PiSD}{{\Pi^{\textnormal{SD}}}}

\newcommand{\calPSR}[1]{\widehat\calP({#1})}
\newcommand{\calPSD}[1]{\widehat\calP^{\textnormal{SD}}({#1})}
\newcommand{\slater}{\gamma}
\newcommand{\occ}[2]{q_{#2}^{#1}}

\newcommand{\bmSigma}{\bm{\Sigma}}
\input{arXiv/style}

\title{
Learning Weakly Communicating Average-Reward CMDPs: Strong Duality and Improved Regret
}

\author{
Kihyun Yu$^{1}$ \quad Beomhan Baek$^{2}$ \quad Dabeen Lee$^{2}$\\[0.3em]
$^{1}$Department of Industrial \& Systems Engineering, KAIST\\
$^{2}$Department of Mathematical Sciences, Seoul National University\\
\texttt{khyu99@kaist.ac.kr, \{bhbaek2001,dabeenl\}@snu.ac.kr}
}
\author{
Kihyun Yu$^{1}$ \quad Beomhan Baek$^{2}$ \quad Dabeen Lee$^{2}$\\[0.3em]
$^{1}$KAIST \quad $^{2}$Seoul National University\\
\texttt{khyu99@kaist.ac.kr, \{bhbaek2001,dabeenl\}@snu.ac.kr}
}

\date{}

\begin{document}
\maketitle
\begin{abstract}
We study infinite-horizon average-reward constrained Markov decision processes (CMDPs) under the weakly communicating assumption. Our contributions are twofold. First, we establish strong duality for weakly communicating average-reward CMDPs over stationary policies with finite state and action spaces. Despite the absence of a linear programming formulation and the resulting nonconvexity under the weakly communicating setting, we show that strong duality still holds by carefully exploiting the geometric structure of the occupation measure set. Second, building on this result, we propose a primal--dual clipped value iteration algorithm for learning weakly communicating average-reward linear CMDPs. Our algorithm achieves regret and constraint violation bounds of $\widetilde{\mathcal{O}}(T^{2/3})$, improving upon the best known bounds, where $T$ denotes the number of interactions. Our approach extends clipped value iteration to the constrained setting and adapts it to a finite-horizon approximation, which stabilizes the dual variable and is crucial for achieving improved regret bounds. To analyze this, we develop a novel approach based on strong duality that enables the decomposition of the composite Lagrangian regret into separate bounds on regret and constraint violation.
\end{abstract}

\section{Introduction}
Safe reinforcement learning (RL) studies how to optimize performance while ensuring that safety constraints are satisfied during learning~\citep{garcia2015comprehensive}. It is crucial in safety-critical applications such as autonomous driving, where unsafe actions may lead to collisions~\citep{driving-isele2018safe}; robotics, where constraint violations can damage the environment~\citep{robotics-achiam2017constrained}; healthcare, where unreliable decisions may harm patients~\citep{healthcare-coronato2020reinforcement}; and large language models, where unsafe outputs must be avoided~\citep{dai2023safe}. These problems can be naturally formulated as constrained Markov decision processes (CMDPs)~\citep{altman1999constrained}.

To capture long-run optimal behavior, infinite-horizon average-reward CMDPs have been widely studied. In this setting, the long-run average is defined as the limit of the expected time-average of the reward and constraint functions, and the objective is to find a policy that maximizes the long-run average reward while ensuring that the long-run average constraint satisfies a prescribed threshold. Accordingly, a number of learning algorithms have been developed for this problem~\citep{wei2022provably, singh2022learning, chen2022learning, ghosh2023achieving}.

A key feature of learning average-reward CMDPs is that both the difficulty of the problem and the resulting performance guarantees depend critically on the structural assumptions of the transition dynamics, such as ergodic, unichain, and weakly communicating assumptions~\citep{puterman1994markov}.\footnote{If an MDP is ergodic, then it is unichain; if it is unichain, then it is weakly communicating. For more details on MDP classifications, see \Cref{appendix:additional preliminary}.} For instance, in the tabular setting, \citet{chen2022learning} proposed computationally efficient algorithms under different assumptions, achieving regret bounds of $\bigO(\sqrt{T})$ under the ergodic assumption and $\bigO(T^{2/3})$ under the weakly communicating assumption, while also providing a computationally inefficient algorithm achieving $\bigO(\sqrt{T})$ under the weakly communicating assumption, where $T$ denotes the number of interactions. More recently, under a similar assumption, \citet{ghosh2023achieving} extended the algorithm to the linear setting, establishing a regret bound of $\bigO(T^{3/4})$. These examples illustrate that weaker structural assumptions, while more general, lead to increased analytical challenges and weaker performance guarantees.

The fundamental reason for this gap is that weaker structural assumptions admit fewer technical tools. A prominent example is strong duality. Although it plays a central role in many CMDP algorithms~\citep{paternain2019constrained}, its applicability in average-reward CMDPs depends on the underlying structural assumptions. In particular, in the average-reward CMDP literature, strong duality has been established under the unichain assumption~\citep{altman1999constrained}. More recently, \citet{bai2024learning} established strong duality under general policy parameterization; however, their result relies on the ergodic assumption. In contrast, beyond these stronger structural assumptions, its validity under the weakly communicating assumption remains unclear. These examples highlight that weaker structural assumptions limit the available technical tools---such as strong duality---thereby motivating the need to extend these tools to enable improved learning algorithms.

Motivated by this limitation, we study weakly communicating average-reward CMDPs, starting with an investigation of strong duality under this assumption. The main challenge is that, unlike the unichain case, this problem does not admit an equivalent linear programming (LP) reformulation. Specifically, under the unichain assumption, the problem can be reformulated as a linear program (LP) via occupation measures~\citep{altman1999constrained}, allowing strong duality to be directly applied. In contrast, under the weakly communicating assumption, the occupation measure formulation no longer yields an LP, and the resulting problem becomes nonconvex. This fundamental difference significantly complicates the analysis and necessitates new techniques to establish strong duality in this more general setting. Despite these challenges, our main contributions are summarized as follows.
\begin{itemize}
    \item We establish strong duality for weakly communicating average-reward CMDPs. In particular, we consider the average-reward CMDP over stationary policies and show that, when the problem is feasible, an optimal dual variable exists and the primal and dual problems attain the same optimal value. The main challenge in establishing strong duality lies in the fact that the set of occupation measures does not form a polytope and is, in fact, nonconvex. To address this, we show that certain geometric properties---namely, that the set behaves like a polytope with some missing boundary points---are sufficient to invoke strong duality for LPs. To the best of our knowledge, this is the first strong duality result for weakly communicating average-reward CMDPs over stationary policies, and it admits broad applicability in the analysis of a wide class of learning algorithms.
    
    \item Building on our strong duality result, we propose a primal–dual clipped value iteration algorithm for learning weakly communicating average-reward linear CMDPs. Our algorithm achieves regret and constraint violation bounds of $\bigO(T^{2/3})$, improving upon the best known bound of $\bigO(T^{3/4})$~\citep{ghosh2023achieving}. To design the algorithm, we extend clipped value iteration~\citep{hong2024reinforcement} to the constrained setting and adapt it to the finite-horizon approximation, whereas prior work applies it to the discounted-reward approximation. The finite-horizon approximation stabilizes the dual variable, making it a crucial modification for achieving improved regret bounds in the constrained setting, together with value function clipping. To complete the analysis, we develop a novel approach that fundamentally relies on the strong duality result. This enables us to decompose the composite Lagrangian regret into separate bounds on regret and constraint violation.
\end{itemize}
Further details on related work can be found in the appendix.


\section{Preliminaries}\label{sec:preliminary}


\paragraph{Infinite-horizon average-reward CMDP}
Let $\calM = (\calS, \calA, P, r, g, b)$ denote a CMDP, where $\calS$, $\calA$ are the finite state and action spaces, $P$ is the transition kernel, $r,g:\calS\times\calA \to [0,1]$ are the reward and constraint functions, respectively, and $b\in[0,1]$ is the constraint threshold. In addition, $P(s'|s,a)$ represents the transition probability from state $s \in \calS$ to state $s' \in \calS$ by taking action $a\in \calA$. For simplicity, we assume that $r,g$ are deterministic. Given $\calM$ and initial state $s_1 \in \calS$, we define an average-reward CMDP as 
\begin{equation}\label{eq:average-reward CMDP}
    \sup_{\pi \in \PiSR} \ J_r^\pi(s_1) \quad\textnormal{s.t.}\quad J_g^\pi(s_1) \geq b,\tag{CMDP}
\end{equation}
where $J_r^\pi(s_1)$, $J_g^{\pi}(s_1)$ denote the gains associated with the policy $\pi$, $\PiSR$ denotes the set of stationary randomized policies, and $J_r^*(s_1)$ denotes the optimal value of \eqref{eq:average-reward CMDP}. We define $J_r^\pi(s) \triangleq \lim_{T\to \infty}\frac{1}{T}\bbE_{\pi}\big[\sum_{t=1}^T r(s_t,a_t)|s_1=s\big]$, where the expectation is taken over the trajectory $\{(s_t,a_t)\}_{t=1}^T$ generated by $P$ and $\pi$, with $s_{t+1} \sim P(\cdot|s_t,a_t)$ and  $a_t \sim \pi(\cdot|s_t)$ for all $t\in [T]$. 
We define the state bias functions as $v_r^\pi(s)\triangleq\lim_{T\to \infty}\frac{1}{T}\sum_{t=1}^T\bbE_{P,\pi}\left[\sum_{i=1}^t (r(s_i,a_i) - J_r^\pi(s_i)) | s_1=s\right]$. Note that $J_g^\pi(s)$ and $v_g^\pi(s)$ can be defined in similar ways. Given any bounded function $V: \calS\to \bbR$, we define the span of $V$ as $\spn(V) \triangleq \max_{s\in\calS}V(s) - \min_{s'\in\calS}V(s')$, and $PV(s,a) \triangleq \sum_{s'}P(s'|s,a)V(s')$.

\paragraph{Weakly communicating setting}
Throughout the paper, we assume weakly communicating CMDPs, defined as follows.
An MDP $\calM$ is said to be \emph{weakly communicating} if 
there exists a stationary randomized policy which induces a Markov chain with a single closed irreducible class and a set of states which is transient under all stationary policies (Proposition 8.3.1 in \citet{puterman1994markov}). For additional notions of Markov chains and MDP classifications, we refer the reader to \Cref{appendix:additional preliminary}.


We introduce the notion of occupation measure, which represents the limiting expected frequency of visiting the state--action pair $(s,a)$ under policy $\pi$ starting from $s_1$. Given an MDP and an initial state $s_1$, we define the \emph{occupation measure} associated with a stationary policy $\pi$ as $\occ{\pi}{s_1} \in \mathbb{R}^{|\calS|\times|\calA|}$. For each $(s,a) \in \calS \times \calA$, it is given by $\occ{\pi}{s_1}(s,a) \triangleq \lim_{T\to\infty}\frac{1}{T}\mathbb{E}_{\pi}\left[\sum_{t=1}^T \mathds{1}\{s_t=s,a_t=a\}\mid s_1\right]$.

For a stationary policy $\pi$, when the state and action spaces are finite, the above limit exists (Proposition 8.9.1 in \citep{puterman1994markov}), and $J_r^\pi(s_1)$ can be expressed as $r^\top \occ{\pi}{s_1}$, with an analogous expression for $g$. In the weakly communicating model, the occupation measure may depend on the initial state, which motivates explicitly including $s_1$ in the notation. Accordingly, let $\calPSR{s_1}$ denote the set of occupation measures achievable by stationary policies from $s_1$, defined as
\begin{equation*}
    \calPSR{s_1} \triangleq \left\{ \occ{\pi}{s_1}\in \bbR^{|\calS|\times|\calA|} : \pi \in \PiSR \right\}.
\end{equation*}
Using occupation measures, we rewrite \eqref{eq:average-reward CMDP} as follows. 
\begin{equation}\label{eq:occ average-reward CMDP}
    \sup_{q \in \calPSR{s_1}} \ r^\top q \quad\textnormal{s.t.}\quad g^\top q \geq b.\tag{Occ-CMDP}
\end{equation}

Additionally, we define relative interior and extreme points of a convex set as follows. Given a convex set $\calP\subseteq \bbR^n$, a point $x \in \calP$ is called \emph{relative interior} if for all $y \in \calP$, there exists $\epsilon > 1$ such that $\epsilon x + (1-\epsilon)y \in \calP$. A point $x \in \calP$ is called an \emph{extreme} point if $x = \epsilon y +(1-\epsilon)z$ implies that $x=y=z$ for $y,z \in \calP$ and $\epsilon \in (0,1)$. Let $\relint(\calP)$ and $\ext(\calP)$ denote the set of relative interior and extreme points of $\calP$, respectively. 

\paragraph{Online linear CMDP} 
We next formulate the online infinite-horizon average-reward linear CMDPs on which \Cref{sec:average-reward linear CMDP} mainly focuses. The agent learns an optimal policy for an unknown CMDP through interaction with the environment. The interaction starts from an arbitrary initial state $s_1 \in \calS$. At each step $t \in [T]$, the agent observes $s_t$ and takes an action $a_t \sim \pi_t(\cdot \mid s_t)$. The environment then samples $s_{t+1} \sim P(\cdot \mid s_t, a_t)$. The performance metrics are regret and constraint violation, defined as $\Regret(T) = \sum_{t=1}^T \left(J_r^*(s_1) - r(s_t,a_t)\right)$ and $\Violation(T) = \sum_{t=1}^T \left(b - g(s_t,a_t)\right)$, where $J_r^*(s_1)$ denotes the optimal value of \eqref{eq:average-reward CMDP}. 

A CMDP $\calM$ is called \emph{linear} if the following conditions hold. For a known feature mapping $\phi:\calS\times \calA \to \bbR^{d}$, the reward and constraint functions, and transition kernel admit the following linear representations: for some $\theta_r,\theta_g \in \bbR^d$ and $\psi:\calS \to \bbR_+^d$, $r(s,a) = \phi(s,a)^\top \theta_r$, $g(s,a) = \phi(s,a)^\top \theta_g$, and $P(s'|s,a) = \phi(s,a)^\top \psi(s')$ for all $(s,a,s') \in \calS \times\calA \times \calS$. We further assume that $\|\phi(s,a)\|_2 \leq 1$ for all $(s,a)\in\calS\times\calA$, and $\|\theta_r\|_2,\|\theta_g\|_2\leq \sqrt{d}, \|\sum_{s'\in\calS}\psi(s')\|_2\leq \sqrt{d}$. For simplicity, we assume that $\theta_r,\theta_g$ are known, while $\psi$ is unknown\footnote{Our results for the online algorithm can be extended to the case of unknown stochastic rewards and constraints without affecting the main contributions up to constant factors.}. 

We next introduce the Slater condition, a standard assumption for learning CMDPs~\citep{ghosh2023achieving}. While our strong duality theorem itself does not require the Slater condition, this condition guarantees an upper bound on the optimal dual variable (\Cref{lem:optimal dual variable bound}), which is useful for designing and analyzing our online algorithm.
\begin{assumption}[Slater condition]\label{assum:Slater condition}
    There exists a Slater policy $\bar\pi \in \PiSR$ such that $J_g^{\bar\pi}(s_1) \geq b+\slater$ for some Slater constant $\slater >0$. Moreover, we assume that $\slater$ is known, while $\bar\pi$ is unknown.
\end{assumption}

\section{Strong duality}\label{sec:strong duality}

In this section, we establish strong duality and highlight the challenges that arise, particularly in the weakly communicating setting. We begin by introducing the Lagrangian formulation. For a dual variable $\lambda \geq 0$, the Lagrangian associated with the primal problem~\eqref{eq:average-reward CMDP} is defined as
\begin{equation*}\label{eq:lagrangian}
\calL(\pi,\lambda) = J_r^\pi(s_1) + \lambda \big(J_g^\pi(s_1) - b\big).
\end{equation*}
The corresponding dual function is defined as the supremum of $\calL$ over stationary policies:
\begin{equation*}\label{eq:dual function}
D(\lambda) = \sup_{\pi \in \PiSR} \calL(\pi,\lambda).
\end{equation*}
The dual problem is then given by $\inf_{\lambda \geq 0} D(\lambda)$. Let $\lambda^*$ denote a minimizer of $D(\lambda)$, when it exists. We are now ready to state the strong duality result.
\begin{theorem}\label{thm:strong duality}
    Let $\calM = (\calS, \calA, P, r,g,b)$ be a weakly communicating CMDP with $|\calS|, |\calA| < \infty$, and let $s_1 \in \calS$ be any initial state. Suppose that \eqref{eq:average-reward CMDP} is feasible.
    Then the following properties hold:
    \begin{enumerate}
        \item There exists $\lambda^*\geq 0$ such that $D(\lambda^*) = \inf_{\lambda\geq 0}D(\lambda)$.
        \item Strong duality, i.e., $J_r^*(s_1) = D(\lambda^*)$.
    \end{enumerate}
\end{theorem}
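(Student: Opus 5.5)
We argue in the occupation-measure formulation \eqref{eq:occ average-reward CMDP}. Since $\calL(\pi,\lambda)=(r+\lambda g)^\top \occ{\pi}{s_1}-\lambda b$, we have $D(\lambda)=\sup_{q\in\calPSR{s_1}}(r+\lambda g)^\top q-\lambda b$. The key structural claim concerns $\calP\triangleq\clconv(\calPSR{s_1})$, the closed convex hull. We will show that $\calP$ is a polytope whose vertices are the occupation measures $\occ{\pi}{s_1}$ of stationary \emph{deterministic} policies, and that $\calPSR{s_1}$ sits between $\relint(\calP)$ and $\calP$ in the following sense:
\begin{equation*}
\relint(\calP)\cup\ext(\calP)\subseteq\calPSR{s_1}\subseteq\calP,
\end{equation*}
and more generally that every face of $\calP$ meets $\calPSR{s_1}$ in a set containing that face's relative interior.

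For the polytope part, recall that finitely many deterministic policies exist, and the occupation measure from $s_1$ of any $\pi\in\PiSR$ is a convex combination of occupation measures of deterministic policies. This follows from the standard decomposition of the limiting (Cesàro) chain into recurrent classes: one mixes the policy state by state and uses the fact that, on each reached recurrent class, the stationary distribution is a convex combination over deterministic selections. Conversely, every deterministic occupation measure lies in $\calPSR{s_1}$. Together these give $\calP=\conv\{\occ{\pi}{s_1}:\pi\in\PiSD\}$, and $\ext(\calP)\subseteq\calPSR{s_1}$.

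For the relative-interior part, take a fully supported mixture of the policies attaining the vertices. By the weakly communicating property, the states visited by these policies from $s_1$ communicate among themselves. We would argue that perturbing a strictly randomized policy sweeps out a full-dimensional neighborhood within $\mathrm{aff}(\calP)$. This yields $\relint(\calP)\subseteq\calPSR{s_1}$, and the same argument, restricted to the subset of actions defining a face, applies face by face. The missing points are then exactly boundary points: mixtures of occupation measures of deterministic policies that induce different closed classes, which no single stationary policy can realize.

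Given this geometry, we compare with the LP $\max\{r^\top q: q\in\calP,\ g^\top q\ge b\}$. Because $\calP$ is a polytope and the problem is feasible (so the LP is feasible), LP duality gives a finite optimal multiplier $\lambda^*\ge0$ with
\begin{equation*}
\mathrm{LP}^*=\max_{q\in\calP}(r+\lambda^* g)^\top q-\lambda^* b.
\end{equation*}
A linear function attains the same supremum over $\calPSR{s_1}$ as over $\calP$, since the maximum is attained at a vertex, and vertices lie in $\calPSR{s_1}$. Hence $D(\lambda)$ equals the LP dual function for every $\lambda$, so $\lambda^*$ minimizes $D$ (part 1) and $D(\lambda^*)=\mathrm{LP}^*$.

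The remaining issue is to show that $J_r^*(s_1)=\sup\{r^\top q: q\in\calPSR{s_1},\ g^\top q\ge b\}$ equals $\mathrm{LP}^*$. The inequality $\le$ is immediate. For $\ge$, take an LP optimizer $q^*$ and let $F$ be the minimal face of $\calP$ containing $q^*$, so $q^*\in\relint(F)$. If $q^*\in\calPSR{s_1}$ we are done. Otherwise, we must produce feasible points of $\calPSR{s_1}$ with value approaching $r^\top q^*$. We take a feasible point $\hat q\in\calPSR{s_1}$, which exists by assumption, and consider the segment from $\hat q$ to $q^*$. Points $(1-\epsilon)q^*+\epsilon\hat q$ are feasible since the constraint is linear. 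The task is to show they lie in $\calPSR{s_1}$ for small $\epsilon$, or else to replace them with nearby feasible points in $\relint$ of some face containing both, using the face-wise relative-interior claim. This is exactly the point where we need $\calPSR{s_1}$ to contain the relative interior of the smallest face spanned by $q^*$ and $\hat q$. Taking $\epsilon\to0$ then gives $J_r^*(s_1)\ge r^\top q^*$, and so strong duality (part 2) follows.

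The main obstacle is the geometric lemma that $\calPSR{s_1}$ contains the relative interior of every face of $\calP$. This requires a careful construction of randomized stationary policies realizing prescribed mixtures across multiple recurrent classes. Here the weakly communicating assumption is essential, because it lets one policy route mass between classes through transient states with tunable probabilities.
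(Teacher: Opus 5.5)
There is a genuine gap, and it sits in the lemma you call the main obstacle: the claim that $\calPSR{s_1}$ contains the relative interior of every face of $\calP$ is false. Your dual side is fine. With $\calP=\clconv(\calPSR{s_1})=\conv(\calPSD{s_1})$, LP duality gives $\lambda^*$, and since vertices are realizable, $D(\lambda)$ equals the LP dual function, which settles part 1 and $D(\lambda^*)=\mathrm{LP}^*$.

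The relative interiors of the nonempty faces of a polytope partition it, since each point lies in the relative interior of its minimal face. So your face-wise claim is equivalent to $\calPSR{s_1}=\calP$, i.e., to convexity of $\calPSR{s_1}$. That fails, as your own remark about missing cross-class mixtures already says. Concretely, take two states and two actions with deterministic transitions: $a_1$ keeps $s_1$ at $s_1$ and sends $s_2$ to $s_1$, while $a_2$ sends $s_1$ to $s_2$ and keeps $s_2$ at $s_2$. The face $\{q\in\calP: q(s_1,a_2)=0\}$ is the edge joining the occupation measures of the two self-loops. Its midpoint is not realizable: a stationary policy from $s_1$ that charges both self-loops must play $a_2$ at $s_1$ and $a_1$ at $s_2$ with positive probability, which makes the chain irreducible and forces $q(s_1,a_2)>0$.

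Your closing step breaks on this instance. Take $r=g$ equal to $1$ on $(s_1,a_1)$ and $(s_2,a_2)$ and $0$ elsewhere, with $b=1$. Then the feasible part of $\calP$ is exactly this edge, every point of it is LP-optimal, and the only feasible realizable points are its endpoints. If $q^*$ is the midpoint, every point $(1-\epsilon)q^*+\epsilon\hat q$ with $\epsilon\in(0,1)$ lies in the missing open edge.

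What you actually need is only $\relint(\calP)\cup\ext(\calP)\subseteq\calPSR{s_1}$ together with a case split, which is how the paper concludes. If some $\bar q\in\relint(\calP)$ satisfies $g^\top\bar q>b$ (a nonstrict inequality also suffices), then the half-open segment from $\bar q$ to $q^*$ lies in $\relint(\calP)$ by the line segment principle. It is feasible, and its $r$-values tend to $r^\top q^*$. Otherwise $g^\top q\le b$ on $\relint(\calP)$, hence on its closure $\calP$. The feasible region is then the face $\{q\in\calP: g^\top q=b\}$, and an LP optimum over it is attained at a vertex of that face, which is a vertex of $\calP$ and therefore realizable.

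Your argument for $\relint(\calP)\subseteq\calPSR{s_1}$ is also only a heuristic. Perturbing one randomized policy covers at best a neighborhood of one point, not all of $\relint(\calP)$. And with $\calP$ defined abstractly as $\clconv(\calPSR{s_1})$, you have no description of its relative interior. The paper instead works with the explicit polytope cut out by normalization, nonnegativity and flow balance, where $q\in\relint(\calP)$ if and only if $q$ is positive on every coordinate that is positive somewhere in $\calP$. For such $q$, the policy $\pi(a|s)\propto q(s,a)$ keeps every transition that the weakly communicating policy uses inside its closed irreducible class $\calS_C$. Since states outside $\calS_C$ are transient under every stationary policy, $\calS_C$ is also closed under $\pi$, so $\pi$ is unichain. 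Its occupation measure from any $s_1$ is then its unique stationary measure. This equals $q$, because $\sum_a q(\cdot,a)$ is stationary for $\pi$. Vertices are recovered separately by choosing a linear objective uniquely maximized at the vertex and invoking an optimal deterministic policy.
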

Even when the underlying MDP is only weakly communicating, this theorem guarantees that, with finite state and action spaces and feasibility, the dual problem attains its minimum at some nonnegative dual variable. It further establishes strong duality, so the optimal values of the primal and dual problems coincide. This property is central in many analyses, and in \Cref{sec:average-reward linear CMDP}, we show how it can be leveraged to analyze an online algorithm and obtain improved regret upper bounds.

\subsection{Challenges and techniques: strong duality}
While strong duality for CMDPs may appear well established—suggesting that \Cref{thm:strong duality} should follow directly—this is not the case in our setting. The difficulty arises because we assume only a weakly communicating structure. In particular, under this assumption, \eqref{eq:average-reward CMDP} cannot be easily reformulated as an equivalent LP, making the application of LP strong duality nontrivial.

To clarify this challenge, we first explain why establishing strong duality becomes more straightforward under stronger structural assumptions. For instance, \cite{altman1999constrained} considers the setting where the MDP satisfies a unichain structure. In such a case, the problem admits an equivalent LP reformulation, and the corresponding duality follows directly from LP duality. Specifically, one can show that the set of occupation measures induced by stationary policies is indeed a polytope (Theorem 4.2 in \citet{altman1999constrained}), i.e., $\calPSR{s_1} = \calP$, where the polytope $\calP$ is defined as
\begin{align*}\label{eq:calP def}
    \calP\triangleq \left\{q\in \bbR^{|\calS|\times|\calA|}: 
    {\begin{aligned}
        &\sum_{s,a} q(s,a) =1, \ q(s,a) \geq 0 \ \forall  (s,a) \in \calS\times\calA,\\
        &\sum_{a}q(s,a) = \sum_{s',a'}q(s',a')P(s|s',a'), \ \forall s\in \calS
    \end{aligned}}
    \right\}.
\end{align*}
As a consequence, \eqref{eq:occ average-reward CMDP} admits the following equivalent LP reformulation, and this connection allows us to rely on LP strong duality under the unichain assumption:
\begin{equation}\label{eq:LP unichain}
\eqref{eq:occ average-reward CMDP}
\ \overset{\calPSR{s_1} = \calP}{=} \
\begin{aligned}
    \max_{q \in \calP}\; r^\top q \; \text{s.t.}\; g^\top q \geq b.
\end{aligned}
\end{equation}

However, this is not applicable under the weakly communicating assumption, as $\calPSR{s_1} \neq \calP$, and $\calPSR{s_1}$ may be nonconvex~\citep{mannor2005empirical}. This prevents an LP reformulation of the CMDP, in contrast to the unichain case as in \eqref{eq:LP unichain}. Consequently, even when expressed in terms of occupation measures, \eqref{eq:occ average-reward CMDP} remains a nonconvex problem, which constitutes the main obstacle to establishing strong duality, particularly in the weakly communicating setting.

To overcome these limitations in the weakly communicating setting, our key observation is that $\calPSR{s_1}$ still possesses several favorable geometric properties; perhaps surprisingly, these properties satisfy sufficient conditions for applying LP strong duality, even though $\calPSR{s_1} \neq \calP$ and may be nonconvex. These properties are formalized in the following lemma.\footnote{This observation was noted in Remark 3.2 of \cite{mannor2005empirical} without a rigorous proof. For completeness, we provide a proof in \Cref{appendix:Strong duality}.}
\begin{lemma}\label{lem:relint & ext}
    Let $\calM = (\calS, \calA, P, r)$ be a weakly communicating MDP with $|\calS|, |\calA| < \infty$, and let $s_1\in\calS$ be any initial state. We have $\relint(\calP) \subseteq \calPSR{s_1}$, $\ext(\calP) \subseteq \calPSR{s_1}$, and $\calPSR{s_1} \subseteq \calP$.
\end{lemma}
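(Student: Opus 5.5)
The inclusion $\calPSR{s_1}\subseteq\calP$ is the easiest part, and I would prove it first. For a stationary $\pi$, let $P_\pi$ be the induced state transition matrix. Let $P_\pi^*=\lim_{T\to\infty}\frac1T\sum_{t=0}^{T-1}P_\pi^t$ be its Cesàro limit (Proposition 8.9.1 of \citet{puterman1994markov}). Then $\occ{\pi}{s_1}(s,a)=P_\pi^*(s_1,s)\pi(a|s)$. The entries are nonnegative and sum to one. Stationarity follows from $P_\pi^*P_\pi=P_\pi^*$, which is exactly the flow-balance constraint defining $\calP$.

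For the other two inclusions, I would use the weakly communicating structure. There is a closed set $C$ that is communicating under some stationary policy, and a set $\calS\setminus C$ that is transient under every policy. Any $q\in\calP$ must vanish on $\calS\setminus C$, since a stationary distribution cannot charge states transient under all policies. Next, for $q\in\calP$ let $S_q=\{s:\sum_a q(s,a)>0\}$ and set $\pi_q(a|s)=q(s,a)/\sum_{a'}q(s,a')$ on $S_q$. The induced chain keeps $S_q$ closed, and $q$ is a stationary distribution of it. The goal is to extend $\pi_q$ outside $S_q$ so that, started from $s_1$, the chain reaches $S_q$ almost surely and the resulting limiting distribution is exactly $q$.

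\textbf{Relative interior.} Take $q\in\relint(\calP)$. I would show that $S_q=C$ and that $q(s,a)>0$ for every pair $(s,a)$ with $s\in C$ and $P(\cdot|s,a)$ supported in $C$. This holds because some point of $\calP$ is positive on each such pair, namely the occupation measure of a policy that mixes all $C$-preserving actions, which is irreducible on $C$ by communication; $\relint$ points then dominate in support. Hence $\pi_q$ is irreducible on $C$, so $q$ is its unique stationary distribution on $C$. On $\calS\setminus C$ I would pick any actions that lead to $C$ with probability one; such actions exist because these states are transient under all policies. Then $\occ{\pi_q}{s_1}=q$ regardless of $s_1$.

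\textbf{Extreme points.} Extreme points of $\calP$ correspond to stationary deterministic policies whose support is a single recurrent class. More precisely, I would argue that $S_q$ is a single closed class under $\pi_q$, that $\pi_q$ is deterministic on $S_q$, and that $q$ is the unique stationary distribution there; otherwise $q$ splits as a convex combination of two distinct points of $\calP$. The remaining task is to define the policy off $S_q$ so that $S_q$ is reached almost surely from $s_1$. If $s_1\in S_q$, nothing more is needed. For $s\in C\setminus S_q$, communication of $C$ gives a path to $S_q$; I would choose actions along a shortest-path tree to $S_q$, giving a deterministic policy under which $S_q$ is reached with probability one. For $s\notin C$, I would use the transience actions as before. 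The chain is then absorbed into the single recurrent class $S_q$, so $\occ{\pi}{s_1}=q$.

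\textbf{Expected obstacle.} The hardest step is the relative-interior support characterization. It requires showing rigorously that relint points charge every $C$-preserving pair and that the induced chain on $C$ is irreducible. It also requires handling pairs in $C$ whose transitions can leave to transient states. Such pairs must carry zero mass in all of $\calP$, which I would verify by closedness and flow balance.
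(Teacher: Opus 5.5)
Your proposal is correct. Your direct Ces\`aro-limit proof of $\calPSR{s_1}\subseteq\calP$ stands in for the paper's citation of Proposition 3.1 in \citet{mannor2005empirical}. Your relative-interior argument is essentially the paper's: normalize $q$ into $\pi_q$, and use that a relative-interior point is positive on every coordinate charged by some point of $\calP$. The paper's witness is the occupation measure of the weakly communicating policy $\check\pi$; yours is a policy mixing all $C$-preserving actions. This shows $C$ is the unique closed irreducible class with $\calS\setminus C$ transient, and you conclude by uniqueness of the stationary distribution.

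The genuine difference is in $\ext(\calP)\subseteq\calPSR{s_1}$. The paper never describes the vertices. It takes $r$ whose unique maximizer over $\calP$ is $q^*$, and uses the already established inclusion $\relint(\calP)\subseteq\calPSR{s_1}$ to get $\sup_{q\in\calPSR{s_1}}r^\top q=r^\top q^*$. It then invokes existence of an optimal stationary deterministic policy (via Theorem 8.9.3 of \citet{puterman1994markov}), and by uniqueness that policy's occupation measure is $q^*$. This is short, but it leans on the relint inclusion and on multichain theory.

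Your route is self-contained and constructive. It yields an explicit deterministic policy and a structural description of the vertices, at the price of two steps you only sketch.

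\emph{Determinism on $S_q$.} This does not follow from restricting $q$ to pieces; that only rules out several recurrent classes. One option is a basic-solution count. On the support of $q$, the $|S_q|$ flow rows sum to zero, so together with normalization they have rank at most $|S_q|$. A randomizing state makes the support of size at least $|S_q|+1$, giving $y\neq 0$ with $q\pm\epsilon y\in\calP$. Alternatively, use a renewal argument at the randomizing state.

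\emph{Absorption into $S_q$.} A shortest path from $C\setminus S_q$ to $S_q$ may pass through $\calS\setminus C$, where you use arbitrary actions. This is harmless. A closed class inside $C\setminus S_q$ is ruled out by the tree action at its state of minimal distance. And $\calS\setminus C$ is transient under every policy, so $S_q$ is the only recurrent class.
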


\begin{figure}[t]
    \centering
    \begin{subfigure}{0.25\textwidth}
        \centering
        \includegraphics[width=\linewidth]{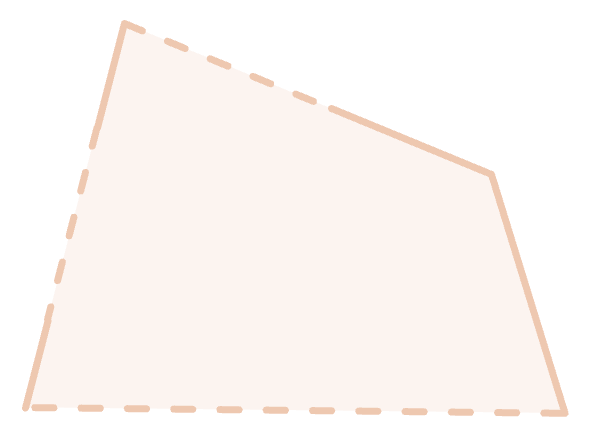}
    \end{subfigure}
    \hspace{2cm}
    \begin{subfigure}{0.25\textwidth}
        \centering
        \raisebox{-1mm}{\includegraphics[width=\linewidth]{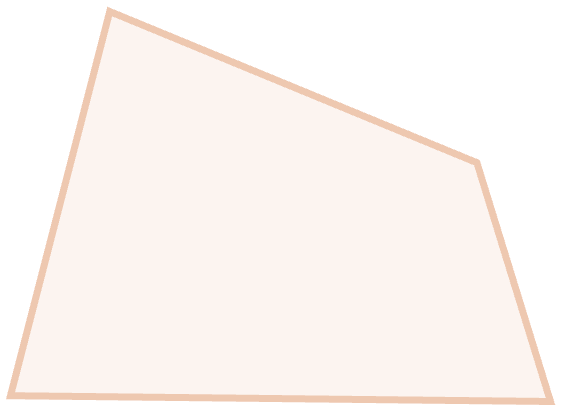}}
    \end{subfigure}
    \caption{Comparison of $\calPSR{s_1}$ and $\calP$ under the weakly communicating assumption.}
    \label{fig:calPSR}
\end{figure}
Intuitively, this lemma suggests that \eqref{eq:occ average-reward CMDP} may retain key properties of LPs, even though it is not itself an LP. This is because $\calPSR{s_1}$ has a structure similar to that of a polytope, in the sense that only certain boundary points are missing from $\calP$, as illustrated in \Cref{fig:calPSR}. In particular, $\relint(\calP) \subseteq \calPSR{s_1}$ in \Cref{lem:relint & ext} allows us to recover the same optimal value as follows:
\begin{equation*}\label{eq:LP weakly}
\eqref{eq:occ average-reward CMDP}
\ \overset{\relint(\calP) \subseteq \calPSR{s_1}}{=} \
\begin{aligned}
    \max_{q \in \calP}\; r^\top q \;
    \text{s.t.}\; g^\top q \geq b.
\end{aligned}
\end{equation*}
Thus, although $\calPSR{s_1} \neq \calP$, the two problems share the same optimal value, establishing a key connection between \eqref{eq:occ average-reward CMDP} and an LP over $\calP$. Now, since $\calP$ is a polytope, strong duality for LPs applies, allowing us to express the right-hand side equivalently as
\begin{equation*}
    \max_{q \in \calP}\; r^\top q \ \text{s.t.}\ g^\top q \geq b 
    \quad=\quad
    \min_{\lambda \geq 0} \max_{q \in \calP}\ r^\top q + \lambda (g^\top q - b).
\end{equation*}
Since $\calPSR{s_1} \neq \calP$ in our case, it remains to show that $\calP$ in the right-hand side can be replaced by $\calPSR{s_1}$ so as to recover the dual function $D(\lambda)$. This follows from the inclusion $\ext(\calP) \subseteq \calPSR{s_1}$. Specifically, for any linear objective, an LP over $\calP$ attains its optimum at an extreme point, which implies that the same optimal value is obtained when optimizing over $\calPSR{s_1}$. Namely, for any $\lambda \geq 0$,
\[
\max_{q \in \calP} \; r^\top q + \lambda(g^\top q - b)
\;=\;
\sup_{q \in \calPSR{s_1}} \; r^\top q + \lambda(g^\top q - b).
\]
Note that the right-hand side coincides with the dual function $D(\lambda)$ expressed in terms of occupation measures. Taking $\min_{\lambda\geq0}$ on both sides yields
\[
\min_{\lambda\geq0}\max_{q \in \calP} \; r^\top q + \lambda(g^\top q - b)
\;=\;
\min_{\lambda\geq 0} D(\lambda).
\]
Consequently, combining these, we obtain $\eqref{eq:average-reward CMDP}
= \min_{\lambda\geq 0}D(\lambda)$---strong duality for average-reward CMDP under the weakly communicating assumption.

\paragraph{Discussion on multichain CMDPs}
A natural further question is whether strong duality holds under more general structural assumptions, such as multichain CMDPs (see \Cref{appendix:additional preliminary} for the formal definition). This appears more challenging, since the key ingredient, \Cref{lem:relint & ext}, may no longer hold. For comparison, under the weakly communicating assumption, the inclusion $\relint(\calP)\subseteq \calPSR{s_1}$ heavily relies on the existence of a stationary policy that induces a single closed irreducible class, together with properties of the relative interior. However, such a policy is not guaranteed for multichain CMDPs. Therefore, the desired inclusion may fail, and so may $\ext(\calP) \subseteq \calPSR{s_1}$, making strong duality over stationary policies unclear.

\section{Improved algorithm for average-reward linear CMDPs}\label{sec:average-reward linear CMDP}
In this section, we propose an algorithm for learning weakly communicating average-reward linear CMDPs, establishing improved regret bounds. We emphasize that the improvement stems from a novel primal-dual algorithm, together with an analysis enabled by \Cref{thm:strong duality}.

\paragraph{Finite-horizon approximation}
As a preliminary, we first introduce a method for learning average-reward MDP problems---\emph{episodic finite-horizon approximation} framework~\citep{wei2021learning, chen2022learning, ghosh2023achieving}. This approach partitions the total iterations $T$ into $K$ episodes of length $H$, such that $T = HK$, and applies algorithms designed for finite-horizon CMDPs. Rather than directly estimating the average-reward quantities $J_r^\pi$ or $v_r^\pi$, the method focuses on learning $Q_{r,h}^\pi$ and $V_{r,h}^{\pi}$, referred to as $Q$-function and value function under finite-horizon approximation. For a fixed horizon $H$ and any reward function $r$, these functions are defined as $Q_{r,h}^\pi(s,a) = \bbE_\pi\left[\sum_{j=h}^H r(s_j,a_j)|s_h=s,a_h=a\right],\ V_{r,h}^\pi(s) = \bbE_\pi\left[\sum_{j=h}^H r(s_j,a_j)|s_h=s\right].$ The validity of the finite-horizon approximation is supported by the following lemma, which establishes a connection between two quantities, $V_{r,h}^\pi$ and $J_r^\pi$, arising from different settings.
\begin{lemma}
[Lemma 4 in \citet{chen2022learning}]
\label{lem:chen2022 lemma 4}
    For any $\pi \in \PiSR$ and reward function $r \in \bbR^{|\calS| \times|\calA|}$ such that $J_r^{\pi}(s) = J_r^\pi$ for all $s\in\calS$, we have $|V_{r,h}^{\pi}(s) - (H-h+1)J_r^\pi| \leq \spn(v_r^\pi)$ for $(s,h)\in \calS \times [H]$.
\end{lemma}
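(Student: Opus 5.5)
We will prove the bound through the average-reward Poisson equation for the stationary policy $\pi$, and then unroll the finite-horizon Bellman recursion backward from $h=H$. Because $J_r^\pi(s)=J_r^\pi$ is constant across states, the bias function $v_r^\pi$ from Section~\ref{sec:preliminary} (the Cesàro-limit definition) satisfies the evaluation equation
\[
J_r^\pi + v_r^\pi(s) = \sum_a \pi(a|s)\big(r(s,a) + Pv_r^\pi(s,a)\big), \qquad s\in\calS .
\]
For finite state and action spaces this is standard (Puterman, Ch.~8). Write $P_\pi$ for the state transition matrix induced by $\pi$ and $r_\pi(s)=\sum_a\pi(a|s)r(s,a)$. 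We will verify the equation by applying $P_\pi$ to the Cesàro-averaged partial sums defining $v_r^\pi$. The boundary terms vanish as $T\to\infty$, and constancy of the gain makes $P_\pi$ act trivially on the $J_r^\pi$ terms.

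Next, define $W_h(s) \triangleq (H-h+1)J_r^\pi + v_r^\pi(s)$. Iterating the evaluation equation gives
\[
W_h = r_\pi + P_\pi W_{h+1}, \qquad W_{H+1}=v_r^\pi .
\]
The finite-horizon value function satisfies the same recursion, $V_{r,h}^\pi = r_\pi + P_\pi V_{r,h+1}^\pi$, but with terminal condition $V_{r,H+1}^\pi=0$. Subtracting the two recursions, the reward terms cancel, and we obtain
\[
V_{r,h}^\pi - W_h = P_\pi\big(V_{r,h+1}^\pi - W_{h+1}\big) = \cdots = -P_\pi^{H-h+1} v_r^\pi .
\]
Hence
\[
V_{r,h}^\pi(s) - (H-h+1)J_r^\pi = v_r^\pi(s) - \big(P_\pi^{H-h+1}v_r^\pi\big)(s).
\]
The right-hand side is the difference between $v_r^\pi(s)$ and a convex combination of values of $v_r^\pi$. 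Its absolute value is therefore at most $\max v_r^\pi - \min v_r^\pi = \spn(v_r^\pi)$, which is the claimed bound.

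The main obstacle is the first step: justifying the Poisson equation for $v_r^\pi$ as it is defined in this paper, namely as a Cesàro limit rather than through the deviation matrix. That definition subtracts $J_r^\pi(s_i)$ along the trajectory, which becomes the constant $J_r^\pi$ under the hypothesis. We will rely on the existence of the Cesàro limit $P_\pi^* = \lim_T \frac{1}{T}\sum_{t} P_\pi^{t-1}$ and on the identity $P_\pi P_\pi^* = P_\pi^*$. Together with $P_\pi^* r_\pi = J_r^\pi\mathbf{1}$, these give $v_r^\pi = (I-P_\pi+P_\pi^*)^{-1}(I-P_\pi^*)r_\pi$ up to the Cesàro convention. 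From that representation the evaluation equation follows by direct algebra.
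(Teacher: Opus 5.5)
Your proof is correct. The Poisson equation $J_r^\pi + v_r^\pi = r_\pi + P_\pi v_r^\pi$ does follow from the paper's Cesàro definition of $v_r^\pi$, which matches exactly the deviation-matrix representation $v_r^\pi = (I-P_\pi+P_\pi^*)^{-1}(I-P_\pi^*)r_\pi$; unrolling it gives the exact identity $V_{r,h}^\pi - (H-h+1)J_r^\pi = v_r^\pi - P_\pi^{H-h+1}v_r^\pi$, whose absolute value is at most $\spn(v_r^\pi)$. The paper does not reprove this lemma but imports it from \citet{chen2022learning}, and your argument is the standard Bellman-equation telescoping proof behind that result.
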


\subsection{Challenges and techniques: average-reward linear CMDPs}
Under the finite-horizon approximation framework, the main challenge in the linear setting can be summarized as follows: obtaining a refined regret bound in the finite-horizon regime remains nontrivial even for time-homogeneous $P$. More specifically, since directly estimating $P$ is intractable in the linear setting, one instead estimates the optimal value functions $\{V_h^*\}_{h=1}^H$. However, estimating this sequence introduces an additional multiplicative factor in $H$, since $\{V_h^*\}_{h=1}^H$ varies with $h$, even when $P$ is time-homogeneous~\citep{zihan2024horizonfree}. As $H$ may depend on $T$, this leads to an unfavorable regret bound in the linear setting~\citep{ghosh2023achieving}, highlighting a key challenge compared to the tabular setting~\citep{chen2022learning}. Hence, this necessitates a different approach tailored to this setting.

To improve the regret bound, our technique---inspired by \citet{hong2024reinforcement}---focuses on the boundedness condition of $V_h^*$, while still estimating the sequence $\{V_h^*\}_{h=1}^H$. In particular, by leveraging \Cref{lem:chen2022 lemma 4}, we have $\spn(V_h^*) \leq 2\zeta$ for some $\zeta$ that is independent of $H$, whereas \citet{ghosh2023achieving} relies on the trivial bound $\|V_h^*\|_\infty \leq H$. Here, $\zeta$ is a quantity related to the span of bias functions, which will be specified later in \Cref{subsec:alg}. This suggests replacing the $\calO(H)$ dependence in the regret bound with $\calO(\zeta)$, yielding a significant improvement, since $H$ may depend on $T$, whereas $\zeta$ does not. Consequently, our refined regret bound arises from exploiting the bound $\spn(V_h^*) \leq 2\zeta$, while still estimating the optimal value functions $\{V_h^*\}_{h=1}^H$.

To incorporate the described technique into the constrained setting, a natural approach is to consider a primal–dual extension of the algorithm for the unconstrained setting~\citep{hong2024reinforcement}. However, analyzing the resulting algorithm becomes nontrivial, as it requires additional tools. Importantly, our strong duality result plays a key role. To be more specific, our analysis based on \Cref{thm:strong duality} provides a new tool by establishing a lower bound on $\Regret(T)$. This, in turn, allows us to derive separate bounds on $\Regret(T)$ and $\Violation(T)$ from the composite Lagrangian regret $\Regret(T) + \lambda\Violation(T)$. In Section~\ref{subsec:analysis}, we provide additional details.

\subsection{Proposed algorithm}\label{subsec:alg}

\begin{algorithm}[t]
\caption{PD-LSCVI-UCB}
\label{alg:main}
\textbf{Input: } horizon $H=T^{1/3}$; $K= T^{2/3}$; clipping parameter $\zeta$; bonus factor $\beta$; Slater constant $\slater$; step size $\eta$;\\
\textbf{Initialize: } $\bmSigma_{1} = I$; $\lambda_1 =0$; $V_{H+1}^k(\cdot) \leftarrow 0$ for all $k\in [K]$;
\begin{algorithmic}[1]
\State Receive state $s_1$.
\For{$k=1,\ldots,K$}
\For{$h=H,\ldots,1$} 
    \State $\bfw_{h+1}^k \leftarrow \bmSigma_k^{-1} \sum_{\tau=1}^{k-1}\sum_{j=1}^H \phi(s_j^\tau,a_j^\tau) (V_{h+1}^k(s_{j+1}^\tau) - \min_{s'}V_{h+1}^k(s'))$ \label{line:PV}
    \State $Q_h^k(\cdot,\cdot) \leftarrow \left(r(\cdot,\cdot)+ \lambda_k g(\cdot,\cdot) + \phi(\cdot,\cdot)^\top \bfw_{h+1}^k +\min_{s'}V_{h+1}^k(s') + \beta\|\phi(\cdot,\cdot)\|_{\bmSigma_k^{-1}}\right)\wedge \left(1+\frac{2}{\slater}\right)H$ \label{line:Q}
    \State $\widetilde V_h^k(\cdot) \leftarrow \max_{a} Q_h^k(\cdot,a)$ \label{line:tilde V}
    \State $V_h^k(\cdot) \leftarrow \widetilde V_h^k(\cdot) \wedge (\min_{s'} \widetilde V_h^k(s') + 2\zeta)$ \label{line:V}
\EndFor
\For{$h=1,\ldots, H$}
    \State Take $a_h^k \in \argmax_{a} Q_h^k(s_h^k,a)$ and observe $s_{h+1}^k \sim P(\cdot|s_h^k,a_h^k)$
\EndFor
\State $s_1^{k+1} \leftarrow  s_{H+1}^k$
\State $\bmSigma_{k+1} \leftarrow \bmSigma_k + \sum_{h=1}^H \phi(s_h^k,a_h^k)\phi(s_h^k,a_h^k)^\top$
\State $\lambda_{k+1} \leftarrow \left[\lambda_k  + \eta\left(Hb -\sum_{h=1}^Hg(s_h^k,a_h^k)\right)\right]_{\left[0, \frac{2}{\slater}\right]}$
\EndFor
\end{algorithmic}
\end{algorithm}
We present \emph{primal–dual least-squares clipped value iteration with upper confidence bound} (PD-LSCVI-UCB, \Cref{alg:main}). The algorithm adopts the finite-horizon approximation framework with $K$ episodes and the horizon $H$. For brevity, let $s_h^k = s_{(k-1)H + h}$, $a_h^k = a_{(k-1)H+h}$, and let $H=T^{1/3}, \ K=T^{2/3}$ be an integer. In each episode $k\in [K]$, $Q_h^k, \widetilde V_h^k$ are computed via backward induction from $h=H$ to $1$, providing optimistic estimates under a composite reward function $r + \lambda_k g$, where $\lambda_k$ denotes the dual variable in episode $k$. The clipped value function estimate $V_h^k$ is then computed. The dual variable is updated based on the cumulative constraint violation in the episode, i.e., $Hb - \sum_{h=1}^H g(s_h^k,a_h^k)$, and clipped to the interval $[0, 2/\slater]$. Here, the clipping parameter $2/\slater$ is chosen because it is a strict upper bound on the optimal dual variable, as shown in the following lemma.
\begin{lemma}\label{lem:optimal dual variable bound}
    Suppose that \Cref{assum:Slater condition} holds. Then $\lambda^* \leq 1/\slater$.
\end{lemma}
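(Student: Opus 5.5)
The plan is the standard Slater argument, combined with strong duality from \Cref{thm:strong duality}. The Slater policy $\bar\pi$ is feasible, so \eqref{eq:average-reward CMDP} is feasible. \Cref{thm:strong duality} then gives a minimizer $\lambda^*\geq 0$ of $D$ with $D(\lambda^*) = J_r^*(s_1)$.

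First, lower bound $D(\lambda^*)$ by evaluating the Lagrangian at $\bar\pi$. By definition of $D$ as a supremum over $\PiSR$,
\begin{equation*}
D(\lambda^*) \;\geq\; \calL(\bar\pi,\lambda^*) \;=\; J_r^{\bar\pi}(s_1) + \lambda^*\big(J_g^{\bar\pi}(s_1) - b\big) \;\geq\; J_r^{\bar\pi}(s_1) + \lambda^*\slater,
\end{equation*}
using $\lambda^*\geq 0$ and \Cref{assum:Slater condition}.

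Second, combine this with strong duality. This gives $\lambda^*\slater \leq J_r^*(s_1) - J_r^{\bar\pi}(s_1)$. Since $r\in[0,1]$, every gain satisfies $J_r^\pi(s_1) = r^\top \occ{\pi}{s_1} \in [0,1]$, because the occupation measure lies in $\calP$, a probability simplex slice (\Cref{lem:relint & ext}). Hence $J_r^*(s_1)\leq 1$ and $J_r^{\bar\pi}(s_1)\geq 0$. Therefore $\lambda^*\slater\leq 1$, that is, $\lambda^*\leq 1/\slater$.

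The only step needing care is the existence of $\lambda^*$ and the equality $D(\lambda^*)=J_r^*(s_1)$. This is exactly where the weakly communicating setting is nontrivial, and it is supplied entirely by \Cref{thm:strong duality}; the rest is routine. I expect no real obstacle beyond noting that the bound holds for every minimizer $\lambda^*$.
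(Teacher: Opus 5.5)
Your proposal is correct and follows essentially the same argument as the paper. You obtain feasibility from the Slater policy, invoke \Cref{thm:strong duality} to get $D(\lambda^*)=J_r^*(s_1)$, lower bound $D(\lambda^*)$ by $\calL(\bar\pi,\lambda^*)\geq J_r^{\bar\pi}(s_1)+\lambda^*\slater$, and conclude using that all gains lie in $[0,1]$.
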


\paragraph{Value function clipping (Line \ref{line:V})} 
This step is crucial for exploiting the boundedness of the span of optimal value functions. The key intuition is that if the optimal value functions have span bounded by $2\zeta$ for some $\zeta$, then the value function estimates can also be restricted to this range. Hence, even when the estimates are clipped at $2\zeta$, they continue to overestimate the optimal value functions, ensuring sufficient yet tight exploration.

To successfully implement value function clipping, we assume the existence of an optimal stationary policy with constant gain. This assumption is required to apply \Cref{lem:chen2022 lemma 4}, and has also been adopted in prior works~\citep{chen2022learning, ghosh2023achieving}. On the other hand, weakly communicating average-reward CMDPs need not admit an optimal policy; a detailed discussion is provided in \Cref{appendix:additional discussion}. This highlights a key difference from the unconstrained setting, where an optimal stationary policy with constant gain always exists. For clarity, we formalize this via the following assumption.
\begin{assumption}\label{assum:zeta}
An optimal policy $\pi^*$ for \eqref{eq:average-reward CMDP} exists. Its gains $J_r^*(s)$ and $J_g^*(s)$ are constant across $s \in \calS$. Moreover, the spans of its state bias functions, $\spn(v_r^*)$ and $\spn(v_g^*)$, are known.
\end{assumption}


Building on this assumption, we define the clipping parameter $\zeta$ as
\begin{equation}\label{eq:zeta}
    \zeta \triangleq \spn(v_r^*) + \frac{2}{\slater}\spn(v_g^*).
\end{equation}
Combining this with \Cref{lem:chen2022 lemma 4}, we obtain
\begin{equation*}
    \spn(V_{r,h}^*) \leq \max_{s,s'} \bigl|V_{r,h}^*(s) - (H-h+1)J_r^*\bigr|
    + \bigl|V_{r,h}^*(s') - (H-h+1)J_r^*\bigr|
    \leq 2\spn(v_r^*).
\end{equation*}
The same argument applies to $g$. As a consequence, since $\lambda_k \in [0, 2/\gamma]$, we have
\begin{equation}\label{eq:sp leq 2zeta}
    \spn\!\left(V_{r,h}^* + \lambda_k V_{g,h}^*\right)
    \leq \spn(V_{r,h}^*) + \lambda_k \spn(V_{g,h}^*)
    \leq 2\zeta.
\end{equation}
As required, $2\zeta$ serves as an upper bound on the span of the optimal value functions, motivating us to clip $\widetilde V_h^k$ at this level. Finally, PD-LSCVI-UCB achieves the following results.
\begin{theorem}\label{thm:regret analysis}
    Let $\calM = (\calS, \calA, P, r,g,b)$ be a weakly communicating linear CMDP with $|\calS|, |\calA| < \infty$, and let $s_1 \in \calS$ be any initial state. Suppose that Assumptions \ref{assum:Slater condition}, \ref{assum:zeta} hold. Let $K = T^{2/3}, H = T^{1/3}, \beta = \bigO((\zeta+1)d), \eta = 2/(\slater\sqrt{KH^2})$, and let $\spn(v_{\lambda^*}^*)$ be defined in \Cref{lem:regret lower bound}. With probability at least $1-3\delta$, PD-LSCVI-UCB (\Cref{alg:main}) guarantees
    \begin{align*}
        &\Regret(T) = \bigO\left(\left(\zeta + \frac{2}{\gamma}\right) T^{2/3} + (\zeta+1)(\sqrt{d^3 T} + d^2T^{1/3})\right),\\
        &\Violation(T) = \bigO\left(\left(\slater\zeta + 2\right) T^{2/3} + \slater(\zeta+1)(\sqrt{d^3 T} + d^2T^{1/3}) + \slater\spn({v_{\lambda^*}^*})\sqrt{T}\right),
    \end{align*}
    where $\bigO(\cdot)$ hides polynomial factors in $\log(dT|\calA|\zeta/(\delta\gamma))$.
\end{theorem}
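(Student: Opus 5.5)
We follow the standard primal--dual template: first bound the composite Lagrangian regret $\Regret(T)+\lambda\Violation(T)$ uniformly over $\lambda\in[0,2/\slater]$, then separate the two quantities using \Cref{thm:strong duality}. Fix an episode $k$ and set $\pi_k$ greedy with respect to $Q_h^k$. The first step is \emph{optimism}: on a high-probability event, obtained from a self-normalized concentration bound for $\bfw_{h+1}^k$ with a covering argument over the clipped value class (span at most $2\zeta$, which gives the choice $\beta=\bigO((\zeta+1)d)$), we show by backward induction that $V_h^k\ge V_{r,h}^{*}+\lambda_k V_{g,h}^{*}$ minus nothing. The induction uses \eqref{eq:sp leq 2zeta} so that the clipping in Line~\ref{line:V} never cuts below the target; it also uses that the truncation at $(1+2/\slater)H$ is harmless. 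Together with \Cref{lem:chen2022 lemma 4} this yields $V_1^k(s_1^k)\ge H(J_r^*+\lambda_k J_g^*)-2\zeta$.

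The second step is the standard elliptical-potential decomposition of $\sum_k\bigl(V_1^k(s_1^k)-\sum_h (r+\lambda_k g)(s_h^k,a_h^k)\bigr)$. This splits into bonus terms, bounded by $\beta\sqrt{dKH\log}$, i.e.\ $(\zeta+1)\sqrt{d^3T}$; martingale terms, where the span of the clipped $V$ gives Azuma with $\zeta\sqrt T$; and clipping-gap terms $V_h^k-\widetilde V_h^k\le 0$, which are favorable. Because $P$ is time-homogeneous, the potential is summed over all $T$ samples with no extra $H$ factor; this is exactly the benefit of clipping. The $d^2T^{1/3}$ term arises from the lower-order contribution of the $H$ rows per episode added to $\bmSigma$ at once. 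Summing gives, with $\lambda_k$ varying,
\[
\sum_{k}\Bigl(H(J_r^*+\lambda_kJ_g^*)-\sum_h (r+\lambda_k g)(s_h^k,a_h^k)\Bigr)\lesssim \zeta K+(\zeta+1)(\sqrt{d^3T}+d^2T^{1/3}).
\]
Using $J_g^*\ge b$ and the online gradient descent analysis of the projected dual update with step $\eta$, which has per-episode gradients bounded by $H$, we convert this into the bound
\[
\Regret(T)+\lambda\Violation(T)\lesssim \zeta K+\frac{\lambda^2}{\eta}+\eta KH^2+(\zeta+1)(\sqrt{d^3T}+d^2T^{1/3})
\]
for every $\lambda\in[0,2/\slater]$. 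With $K=T^{2/3}$ and $\eta$ as chosen, the dual terms give $(2/\slater)T^{2/3}$. Taking $\lambda=0$ yields the regret bound.

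For the violation we need a lower bound on $\Regret(T)$, which the paper packages as \Cref{lem:regret lower bound}, and this step is the main obstacle. \Cref{thm:strong duality} says that for every stationary $\pi$, $J_r^\pi+\lambda^*(J_g^\pi-b)\le J_r^*$. We would apply this along the trajectory through the policy $\pi^*_{\lambda^*}$ optimal for the unconstrained MDP with reward $r+\lambda^* g$: its gain equals $D(\lambda^*)=J_r^*$, and it is constant in the weakly communicating case with bias span $\spn(v^*_{\lambda^*})$. A Bellman-equation telescoping argument along the realized trajectory then gives $\sum_t (r+\lambda^* g)(s_t,a_t)\le T(J_r^*+\lambda^*b)+\spn(v^*_{\lambda^*})\sqrt{T}$ up to logs via Azuma, that is, $\Regret(T)+\lambda^*\Violation(T)\ge -\spn(v^*_{\lambda^*})\sqrt T$. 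Choosing $\lambda=\lambda^*+1/\slater\le 2/\slater$ (by \Cref{lem:optimal dual variable bound}) in the composite bound and subtracting this lower bound isolates $\Violation(T)/\slater$. Multiplying by $\slater$ gives the stated violation bound, including the $\slater\spn(v^*_{\lambda^*})\sqrt T$ term. A union bound over the concentration, Azuma and lower-bound events gives probability $1-3\delta$.
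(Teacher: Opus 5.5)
Your proposal is correct and follows the paper's proof essentially step for step: optimism of the clipped estimates by backward induction using the $2\zeta$ span bound, and elliptical-potential (with the per-episode determinant-doubling correction) and Azuma bounds. It then uses online gradient descent for the dual and a regret lower bound from the weakly communicating Bellman optimality inequality, which holds for every $(s,a)$ and so applies along the algorithm's trajectory, combined with $J_{\lambda^*}^*(s_1)=D(\lambda^*)=J_r^*(s_1)$. The only deviation is your choice $\lambda=\lambda^*+1/\slater$ instead of the paper's $\lambda=2/\slater$; it yields coefficient exactly $1/\slater$ on $\Violation(T)$ and so avoids the paper's case split on the sign of $\Violation(T)$.
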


\paragraph{Comparisons with prior works}
We provide a comparison with prior algorithms for learning average-reward linear MDPs. \citet{hong2024reinforcement} proposed a clipped value iteration algorithm for learning average-reward unconstrained linear MDPs. Beyond the primal–dual extension, there is another key difference. Their work employs a discounted approximation framework, whereas ours adopts a finite-horizon approximation. Under the discounted approximation, the horizon can increase up to $H=\calO(T)$, as it is determined by the doubling trick (see Line 5 of their algorithm). However, this becomes problematic in the constrained setting, since $\lambda_{k+1}-\lambda_k$ can also grow up to $\calO(\eta T)$, which makes the dual variable unstable. To overcome this, we adopt a finite-horizon approximation with $H = T^{1/3}$ and show that their technique extends to this framework.

\citet{ghosh2023achieving} proposed a primal–dual algorithm for learning average-reward linear CMDPs. While their algorithm is also based on the finite-horizon approximation, it does not employ value function clipping (Line~\ref{line:V}), and thus the tighter span bound~\eqref{eq:sp leq 2zeta} is not fully leveraged. Moreover, their analysis relies on strong duality for finite-horizon CMDPs, resulting in regret bounds of $\bigO(T^{3/4})$. In contrast, our method employs value function clipping, and its analysis is enabled by strong duality for average-reward CMDPs (\Cref{thm:strong duality}), yielding improved regret bounds of $\bigO(T^{2/3})$. Indeed, these bounds match those of the tabular setting in terms of $T$~\citep{chen2022learning}.

\subsection{Outline of analysis}\label{subsec:analysis}
The overall strategy for the analysis consists of the following two steps. In the first step, we derive an upper bound $\Delta$ on the composite Lagrangian regret $\Regret(T) + \lambda \Violation(T)$ that holds for any $\lambda \in [0, 2/\slater]$. This immediately yields an upper bound on $\Regret(T)$ by taking $\lambda = 0$. On the other hand, bounding $\Violation(T)$ requires a lower bound on $\Regret(T)$, since $\Violation(T) \leq \frac{\slater}{2}(\Delta - \Regret(T))$. Therefore, in the second step, we obtain such a lower bound by leveraging strong duality.

\paragraph{Step 1: Upper bound on $\bm{\Regret(T) + \lambda \Violation(T)}$}
To upper bound the composite Lagrangian regret, we first provide the following useful lemmas. In particular, \Cref{lem:good event} captures the estimation error and characterizes the optimistic bonus. Then, \Cref{lem:optimism} shows that the clipped value function estimates optimistically estimate the composite value functions.
\begin{lemma}\label{lem:good event} 
Let $\beta = \bigO((\zeta+1)d)$. With probability at least $1-\delta$, for $(s,a,h,k) \in \calS\times\calA \times [H] \times [K]$,
\begin{align*}
    |\phi(s,a)^\top(\bfw_{h+1}^{k} - \bfw_{h+1}^{k,*})| \leq  \beta\|\phi(s,a)\|_{\bmSigma_k^{-1}}.
\end{align*}    
where $\bfw_{h+1}^{k,*} = \sum_{s} \psi(s)(V_{h+1}^k(s) - \min_{s'}V_{h+1}^k(s'))$.
\end{lemma}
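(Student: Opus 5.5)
Since the statement is a uniform concentration bound for a ridge-regression estimate, I would prove it with the self-normalized martingale inequality combined with a covering argument over the class of clipped value functions, as in LSVI-UCB and its clipped variant in \citet{hong2024reinforcement}.

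\emph{Step 1: error decomposition.} Define the shifted target $U(s') = V_{h+1}^k(s') - \min_{s''}V_{h+1}^k(s'')$. By the clipping in Line~\ref{line:V}, $U\in[0,2\zeta]$. Using $P(s'|s,a)=\phi(s,a)^\top\psi(s')$, we have $\phi(s,a)^\top\bfw_{h+1}^{k,*} = PU(s,a)$. Writing $\bmSigma_k = I + \sum_{\tau<k}\sum_j \phi_j^\tau\phi_j^{\tau\top}$ with $\phi_j^\tau=\phi(s_j^\tau,a_j^\tau)$, I would decompose
\[
\bfw_{h+1}^k - \bfw_{h+1}^{k,*} = -\bmSigma_k^{-1}\bfw_{h+1}^{k,*} + \bmSigma_k^{-1}\sum_{\tau<k}\sum_{j}\phi_j^\tau\bigl(U(s_{j+1}^\tau) - PU(s_j^\tau,a_j^\tau)\bigr).
\]
By Cauchy--Schwarz in the $\bmSigma_k^{-1}$ norm, the first term contributes at most $\|\bfw_{h+1}^{k,*}\|_2\|\phi\|_{\bmSigma_k^{-1}} \le 2\zeta\sqrt d\,\|\phi\|_{\bmSigma_k^{-1}}$, because $\psi\ge0$ and $\|\sum_{s}\psi(s)\|_2\le\sqrt d$. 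The second term contributes at most $\|\phi\|_{\bmSigma_k^{-1}}$ times the self-normalized sum $\|\sum\phi_j^\tau\epsilon_j^\tau\|_{\bmSigma_k^{-1}}$.

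\emph{Step 2: self-normalized bound for a fixed function.} For a fixed $U$ with range in $[0,2\zeta]$, the noises $\epsilon_j^\tau$ form a martingale difference sequence with respect to the time-ordered filtration, and each is $2\zeta$-bounded. The self-normalized inequality of Abbasi-Yadkori et al.\ then gives, with probability $1-\delta'$, a bound of order $\zeta\sqrt{d\log(KH)+\log(1/\delta')}$, using $\det\bmSigma_k\le(1+KH/d)^d$.

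\emph{Step 3: uniform bound via covering (main obstacle).} The difficulty is that $V_{h+1}^k$ depends on the same data, so Step 2 cannot be applied to it directly. I would parametrize the function class $\calV$ containing every possible $V_{h+1}^k$. Each such function is obtained from $Q(\cdot,\cdot)=\bigl(r+\lambda g+\phi^\top\bfw + c + \beta\|\phi\|_{\Sigma^{-1}}\bigr)\wedge(1+2/\slater)H$ by taking a max over actions and then clipping at $\min+2\zeta$. The parameters are $\lambda\in[0,2/\slater]$, $\|\bfw\|\le 2\zeta\sqrt{dKH}$, a scalar $c$, and a PSD matrix $\Sigma^{-1}$ with norm at most $1$.

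Both min-shifting and the clipping map are 2-Lipschitz in sup norm, so the shifted functions $U$ are Lipschitz in the parameters. Crucially, the shift removes the large scalar $c$ and any $H$-scale offset, so the covering is done at scale $\epsilon=1/(KH)$ over a class whose effective range is $2\zeta$ rather than $H$. This gives a log covering number of order $d^2\log(\beta KH\zeta/(\slater\epsilon))$. A union bound over the net and over $(h,k)$, plus the discretization error $\calO(\epsilon\sqrt{KH}\cdot\sqrt{KH})$, yields a self-normalized term of order $\zeta d\sqrt{\log(\cdot)}$.

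\emph{Step 4: closing the argument.} The bound from Step 3 depends on $\beta$ only logarithmically, so choosing $\beta = C(\zeta+1)d\log(dT|\calA|\zeta/(\delta\slater))$ for a large enough constant $C$ is self-consistent. Adding the contributions of Steps 1 and 3 then gives the claimed inequality for all $(s,a,h,k)$ simultaneously, with probability $1-\delta$.
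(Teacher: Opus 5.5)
Your proposal follows essentially the same route as the paper's proof. It uses the same split into a ridge-bias term bounded by $2\zeta\sqrt d\,\|\phi(s,a)\|_{\bmSigma_k^{-1}}$ and a self-normalized noise term controlled by the covering-based concentration (\Cref{lem:concentration} with \Cref{lem:hong2024 lemma 15}), with $\epsilon$ of order $1/T$, a union bound over $(h,k)$, and a self-consistent choice of $\beta$ since $\beta$ enters only logarithmically.

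One small correction: because $Q_h^k$ is truncated at $(1+2/\slater)H$, the min-shift does not fully eliminate the scalar offset $c$ (it survives through the effective threshold $(1+2/\slater)H-c$), so the net must still cover one scalar parameter of range $\calO(H)$. As in the paper's $\log(1+4H(1+2/\slater)/\epsilon)$ term, this costs only a logarithmic factor; it is the $[0,2\zeta]$ range of the shifted target that yields the $\zeta^2$ prefactor.
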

\begin{lemma}\label{lem:optimism}
    Suppose that \Cref{lem:good event} holds. Then for all $(s,h,k) \in \calS \times[H] \times [K]$,
    \begin{align*}
     V_{r,h}^{*}(s) + \lambda_k  V_{g,h}^{*}(s) \leq V_{h}^k(s).
\end{align*}
\end{lemma}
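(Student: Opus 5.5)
Fix an episode $k$. Throughout, $V_{r,h}^*,V_{g,h}^*$ are the finite-horizon value functions of the optimal stationary policy $\pi^*$ from \Cref{assum:zeta}, evaluated with horizon $H$. Write $U_h := V_{r,h}^* + \lambda_k V_{g,h}^*$, which is the $H$-step value of $\pi^*$ under the composite reward $r+\lambda_k g$. The plan is backward induction on $h$ from $H+1$ down to $1$, proving $U_h(s)\le V_h^k(s)$ for all $s$. The base case is immediate, since $U_{H+1}=0=V_{H+1}^k$.

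For the inductive step, assume $U_{h+1}\le V_{h+1}^k$ pointwise. First I show $U_h(s)\le \widetilde V_h^k(s)$. Since $\pi^*$ is stationary,
\[
U_h(s)=\sum_a \pi^*(a|s)\bigl(r(s,a)+\lambda_k g(s,a)+PU_{h+1}(s,a)\bigr),
\]
so it suffices to show, for every $a$,
\[
r+\lambda_k g+PU_{h+1}\le Q_h^k .
\]
Then averaging over $a\sim\pi^*$ and using $\widetilde V_h^k=\max_a Q_h^k$ gives the claim.

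Line~\ref{line:Q} is a minimum of two terms, so I check each.

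For the unclipped term, write $m=\min_{s'}V_{h+1}^k(s')$. By linearity, $\phi^\top \bfw_{h+1}^{k,*}=P(V_{h+1}^k-m)$, and since $P(\cdot|s,a)$ sums to one, $\phi^\top\bfw_{h+1}^{k,*}+m=PV_{h+1}^k$. By \Cref{lem:good event},
\[
\phi^\top\bfw_{h+1}^k+m+\beta\|\phi\|_{\bmSigma_k^{-1}}\ \ge\ PV_{h+1}^k\ \ge\ PU_{h+1},
\]
where the last step uses the induction hypothesis and monotonicity of $P$.

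For the clipped term, note $r,g\in[0,1]$ and $\lambda_k\le 2/\slater$. Hence $U_h\le (1+2/\slater)(H-h+1)\le (1+2/\slater)H$, which bounds the left side.

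The remaining step, which I expect to be the main obstacle, is passing from $\widetilde V_h^k$ to the clipped $V_h^k=\widetilde V_h^k\wedge(\min_{s'}\widetilde V_h^k(s')+2\zeta)$. If the first argument of the minimum is active, we are done by the previous paragraph. Otherwise I need $U_h(s)\le \min_{s'}\widetilde V_h^k(s')+2\zeta$. This follows from the span bound~\eqref{eq:sp leq 2zeta}:
\[
U_h(s)\le \min_{s'}U_h(s')+\spn(U_h)\le \min_{s'}\widetilde V_h^k(s')+2\zeta,
\]
using $U_h\le\widetilde V_h^k$ pointwise.

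The span bound itself relies on \Cref{lem:chen2022 lemma 4} applied to $\pi^*$ with constant gains (\Cref{assum:zeta}), separately for $r$ and $g$, combined with $\lambda_k\in[0,2/\slater]$. This must hold for every $h$, including when $H-h+1$ is small, and \Cref{lem:chen2022 lemma 4} covers all $h\in[H]$.

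One subtlety needs checking. $U_h$ is the value of $\pi^*$, not the optimal composite value, but optimism only needs an upper bound on it, so this is fine. Combining the two cases gives $U_h\le V_h^k$, closing the induction.
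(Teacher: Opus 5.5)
Your proposal is correct and follows the paper's proof essentially step for step. Both arguments use backward induction on $h$, combine \Cref{lem:good event} with the induction hypothesis to dominate the unclipped part of $Q_h^k$, use the bound $(1+2/\slater)H$ for the truncation, and then apply the span bound \eqref{eq:sp leq 2zeta} from \Cref{lem:chen2022 lemma 4} to get through the value clipping; your explicit chain $U_h(s)\le \min_{s'}U_h(s')+\spn(U_h)\le \min_{s'}\widetilde V_h^k(s')+2\zeta$ simply spells out the final step that the paper states in one line.
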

Given these lemmas, we can decompose $\Regret(T) + \lambda \Violation(T)$ as follows.
\begin{align}\label{eq:decomp}
\begin{aligned}
    \Regret(T) + \lambda \Violation(T)
    &\leq K\zeta + \sum_{k=1}^K\sum_{h=1}^H 2\beta\|\phi(s_h^k,a_h^k)\|_{\bmSigma_k^{-1}}\\
    &\quad+ \sum_{k=1}^K\sum_{h=1}^H \left( P V_{h+1}^k(s_h^k,a_h^k) - V_{h+1}^k(s_{h+1}^k)\right)+\frac{\lambda^2}{2\eta} + \frac{\eta KH^2}{2}.
\end{aligned}
\end{align}
Here, the second term can be bounded by the elliptical potential lemma, and the third term is bounded, as it is a sum of martingale differences. Finally, with the parameter choice given in \Cref{thm:regret analysis}, we derive the desired bound, as stated in the following lemma.
\begin{lemma}\label{lem:Regret + lambda Volation}
    Suppose that \Cref{lem:good event} holds. With probability at least $1-\delta$, for any $\lambda \in [0, 2/\slater]$, 
    \[
        \Regret(T) + \lambda \Violation(T) = \bigO\left(\left(\zeta + \frac{2}{\gamma}\right) T^{2/3} + (\zeta+1)(\sqrt{d^3 T} + d^2T^{1/3})\right).
    \]
\end{lemma}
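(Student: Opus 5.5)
The plan is to establish the decomposition \eqref{eq:decomp} first and then bound each of its terms under the parameter choices $H=T^{1/3}$, $K=T^{2/3}$, $\eta = 2/(\slater\sqrt{KH^2})$ and $\beta=\bigO((\zeta+1)d)$.

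\textbf{Splitting the Lagrangian regret.} I would write
\[
\Regret(T)+\lambda\Violation(T) \;=\; \sum_{k=1}^K\Big[ H(J_r^*+\lambda_k J_g^*) - \sum_{h}(r+\lambda_k g)(s_h^k,a_h^k)\Big] \;+\; \sum_{k=1}^K (\lambda-\lambda_k)\Big(Hb-\sum_h g(s_h^k,a_h^k)\Big) \;+\; \sum_{k=1}^K \lambda_k H(b - J_g^*).
\]
Since $\pi^*$ is feasible, $J_g^*\ge b$ by Assumption~\ref{assum:zeta}, so the last sum is nonpositive.

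\textbf{Dual (middle) term.} This is standard projected online gradient descent on $[0,2/\slater]$ with per-episode gradients bounded by $H$. Expanding $(\lambda_{k+1}-\lambda)^2$ gives the bound $\frac{\lambda^2}{2\eta}+\frac{\eta KH^2}{2}$. With the chosen $\eta$ this is $\calO(\frac{1}{\slater}\sqrt{K}H)=\calO(\frac{1}{\slater}T^{2/3})$.

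\textbf{Primal (first) term.} By Lemma~\ref{lem:chen2022 lemma 4}, $H J_r^* \le V_{r,1}^*(s_1^k)+\spn(v_r^*)$, and the same holds for $g$. Since $\lambda_k\le 2/\slater$,
\[
H(J_r^*+\lambda_k J_g^*)\le V_{r,1}^*(s_1^k)+\lambda_kV_{g,1}^*(s_1^k)+\zeta\le V_1^k(s_1^k)+\zeta,
\]
where the last step is Lemma~\ref{lem:optimism}. Summed over episodes, this produces the $K\zeta$ term.

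\textbf{Telescoping within an episode.} Here $V_h^k(s_h^k)\le \widetilde V_h^k(s_h^k)=Q_h^k(s_h^k,a_h^k)$. I need the upper bound
\[
Q_h^k \le (r+\lambda_k g)+\phi^\top \bfw^{k,*}_{h+1}+\min V_{h+1}^k+2\beta\|\phi\|_{\bmSigma_k^{-1}} = (r+\lambda_kg)+PV_{h+1}^k+2\beta\|\phi\|_{\bmSigma_k^{-1}},
\]
which follows from Lemma~\ref{lem:good event}, the fact that $P$ sums to one, and the fact that taking a min with the cap only decreases $Q_h^k$. Telescoping over $h$ with $V_{H+1}^k=0$ then yields the martingale and bonus terms of \eqref{eq:decomp}.

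\textbf{Bonus term.} For the bonus I would use the elliptical potential lemma, but $\bmSigma_k$ is updated only once per episode. So I would bound $\sum_{k,h}\min(1,\|\phi\|_{\bmSigma_k^{-1}})$ in two steps. First, use Cauchy--Schwarz to get $\sqrt{KH\cdot\sum}$. Second, handle the intra-episode staleness: either compare $\bmSigma_k$ with the per-step Gram matrix, whose determinant ratio can be large only in $\calO(d\log T)$ episodes, each costing at most $H$; or use the standard batched argument. This gives $\bigO(\sqrt{dT}+dH)$. Since $\beta=\bigO((\zeta+1)d)$, the bonus term becomes $(\zeta+1)\bigO(\sqrt{d^3T}+d^2T^{1/3})$.

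One caveat: $\|\phi\|_{\bmSigma_k^{-1}}$ may exceed $1$, but in that case the cap $(1+2/\slater)H$ controls $Q_h^k$. I would handle this either by bounding such steps through the determinant count, or by noting $\|\phi\|\le1$ and $\bmSigma_k\succeq I$, so $\|\phi\|_{\bmSigma_k^{-1}}\le 1$ anyway.

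\textbf{Martingale term.} The differences $PV_{h+1}^k(s_h^k,a_h^k)-V_{h+1}^k(s_{h+1}^k)$ are unchanged if $V_{h+1}^k$ is shifted by its minimum. The clipping step ensures $\spn(V_{h+1}^k)\le 2\zeta$, so the increments are bounded by $2\zeta$, and they are adapted because $V_{h+1}^k$ is fixed at the start of episode $k$. Azuma--Hoeffding then gives $\calO(\zeta\sqrt{T\log(1/\delta)})$ with probability $1-\delta$, which is absorbed into the stated bound.

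\textbf{Collecting terms.} The total is $K\zeta=\zeta T^{2/3}$, plus $\frac{2}{\slater}T^{2/3}$ from the dual term, plus the bonus and martingale terms. This matches the claimed bound.

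\textbf{Main obstacle.} I expect the batched elliptical potential step to be the hardest, that is, bounding the bonus sum when $\bmSigma_k$ is frozen within each episode without losing a factor of $H$ in the leading $\sqrt{T}$ term.
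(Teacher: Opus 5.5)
Your proposal is correct and follows the paper's proof essentially step for step. It uses the same split into a primal part and a projected-OGD dual part. The primal part is bounded by $K\zeta$ via \Cref{lem:chen2022 lemma 4} and \Cref{lem:optimism}, then telescoped using $V_h^k(s_h^k)\le Q_h^k(s_h^k,a_h^k)$ and \Cref{lem:good event}. The dual part gives $\frac{\lambda^2}{2\eta}+\frac{\eta KH^2}{2}$. The martingale term is handled by Azuma--Hoeffding with $\spn(V_{h+1}^k)\le 2\zeta$, and the bonus by the determinant-doubling argument of \citet{wei2021learning}.

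The one thing to get right in the bonus step is the order of operations. As the paper does, split off the $\calO(d\log T)$ episodes with $\det(\bmSigma_{k+1})>2\det(\bmSigma_k)$ \emph{before} applying Cauchy--Schwarz, and use $\|\phi\|_{\bmSigma_k^{-1}}\le\sqrt{2}\,\|\phi\|_{\bmSigma_{k+1}^{-1}}$ on the remaining episodes. If you apply Cauchy--Schwarz to the whole sum first, as your ``first\ldots second'' wording literally says, the $\calO(dH\log T)$ contribution of those episodes lands inside the square root. That yields $\sqrt{dHT}$ instead of the $\sqrt{dT}+dH$ you state.
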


\paragraph{Step 2: Lower bound on $\bm{\Regret(T)}$} 
We achieve this by leveraging \Cref{thm:strong duality}, together with the Bellman optimality equation for weakly communicating average-reward unconstrained MDPs~\citep{puterman1994markov}. First, we consider a bounded optimal dual variable $\lambda^*$, which is guaranteed by \Cref{thm:strong duality} and \Cref{lem:optimal dual variable bound}. For a reward $r + \lambda^*(g-b)$, the Bellman optimality condition for weakly communicating unconstrained MDPs implies that there exists $J_{\lambda^*}^*(s_1) \in \bbR$, $v_{\lambda^*}^*: \calS\to\bbR$ such that for any $(s,a) \in\calS\times \calA$,
\begin{equation}\label{eq:bellman optimality equation}
    J_{\lambda^*}^*(s_1) - r(s,a) \geq -\lambda^*(b-g(s,a)) + Pv_{\lambda^*}^*(s,a) - v_{\lambda^*}^*(s).
\end{equation}
Given $\spn(v_{\lambda^*}^*)$, we are now ready to state a lemma about a lower bound on $\Regret(T)$.
\begin{lemma} \label{lem:regret lower bound}
Let $v_{\lambda^*}^*$ be defined in \eqref{eq:bellman optimality equation}. With probability at least $1-\delta$,
\begin{align*}
    \Regret(T) \geq -\lambda^* \Violation(T) - \spn({v_{\lambda^*}^*})\sqrt{2T\log(1/\delta)} - \spn({v_{\lambda^*}^*}).
\end{align*}    
\end{lemma}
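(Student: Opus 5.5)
Summing the Bellman optimality inequality \eqref{eq:bellman optimality equation} along the realized trajectory gives the bound directly, with a telescoping sum and a martingale term handled by Azuma--Hoeffding. First, by \Cref{thm:strong duality}, the constant $J_{\lambda^*}^*(s_1)$ appearing in \eqref{eq:bellman optimality equation} is the optimal gain of the unconstrained MDP with reward $r+\lambda^*(g-b)$ from $s_1$, which is $D(\lambda^*)=J_r^*(s_1)$. (If the Bellman equation is stated with $J_{\lambda^*}^*(s_1)$ meaning the optimal Lagrangian gain, strong duality is exactly what identifies it with $J_r^*(s_1)$, the quantity used in $\Regret(T)$.) So for every $(s,a)$,
\[
J_r^*(s_1) - r(s,a) \geq -\lambda^*\bigl(b-g(s,a)\bigr) + Pv_{\lambda^*}^*(s,a) - v_{\lambda^*}^*(s).
\]

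Next, I evaluate this at $(s_t,a_t)$ for $t=1,\dots,T$ and sum. The left side sums to $\Regret(T)$, and the first term on the right sums to $-\lambda^*\Violation(T)$. I split the remaining sum as
\[
\sum_{t=1}^T \bigl(Pv_{\lambda^*}^*(s_t,a_t) - v_{\lambda^*}^*(s_{t+1})\bigr) + \sum_{t=1}^T\bigl(v_{\lambda^*}^*(s_{t+1}) - v_{\lambda^*}^*(s_t)\bigr).
\]
The second sum telescopes to $v_{\lambda^*}^*(s_{T+1})-v_{\lambda^*}^*(s_1)\geq -\spn(v_{\lambda^*}^*)$.

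For the first sum, let $\calF_t$ be generated by the history up to and including $a_t$. Then $X_t = Pv_{\lambda^*}^*(s_t,a_t)-v_{\lambda^*}^*(s_{t+1})$ satisfies $\bbE[X_t\mid\calF_t]=0$. Since $v_{\lambda^*}^*$ can be shifted by a constant without changing \eqref{eq:bellman optimality equation}, $X_t$ lies in an interval of length $\spn(v_{\lambda^*}^*)$. Azuma--Hoeffding then gives, with probability at least $1-\delta$,
\[
\sum_t X_t \geq -\spn(v_{\lambda^*}^*)\sqrt{2T\log(1/\delta)}.
\]
Combining the three pieces yields the claim. This holds regardless of how the algorithm picks actions, since only the adaptedness of $a_t$ is used.

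The main obstacle is the first step: the Bellman inequality must hold with a constant gain equal to $J_r^*(s_1)$ for all $(s,a)$. For weakly communicating MDPs the optimal gain of the unconstrained problem is state-independent, which is what allows a single constant in \eqref{eq:bellman optimality equation}. Strong duality, together with the fact that the dual function $D(\lambda^*)$ over stationary policies equals this optimal gain, then pins the constant to $J_r^*(s_1)$. The remaining steps are routine.
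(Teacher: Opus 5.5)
Your proposal is correct and follows essentially the same route as the paper. Like the paper, you identify the Bellman gain $J_{\lambda^*}^*(s_1)$ with $D(\lambda^*)=J_r^*(s_1)$ via strong duality and the attainability of an optimal stationary policy for the Lagrangian reward $r+\lambda^*(g-b)$; you then sum \eqref{eq:bellman optimality equation} along the trajectory and split the remainder into a martingale-difference sum (bounded by Azuma--Hoeffding) and a telescoping sum (bounded by $\spn(v_{\lambda^*}^*)$).
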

The proof proceeds as follows. When the state and action spaces are finite, $J_{r+\lambda^* (g-b)}^\pi(s_1)$ can be decomposed as $J_r^\pi(s_1) + \lambda^* (J_g^\pi(s_1) - b)$. Therefore, maximizing $J_{r+\lambda^*(g-b)}^\pi(s_1)$ over $\pi \in \PiSR$ coincides with $D(\lambda^*)$. By strong duality, we deduce that $J_r^*(s_1) = D(\lambda^*) =  J_{\lambda^*}^*(s_1)$.

Now, recall the Bellman optimality condition~\eqref{eq:bellman optimality equation}. Since we showed that $J_r^*(s_1) = J_{\lambda^*}^*(s_1)$ by strong duality, summing the left-hand side along the trajectory recovers $\Regret(T)$, while the right-hand side provides its lower bound. To formalize this, given the trajectory $\{(s_t,a_t)\}_{t=1}^T$, we substitute $s=s_t, a=a_t$, and sum over $t=1$ to $T$. After further computation, we obtain the desired lower bound on $\Regret(T)$.


\section{Conclusion}\label{sec:conclusion}
This paper studied weakly communicating average-reward CMDPs and established strong duality under this general setting, despite the absence of an LP formulation and the resulting nonconvexity. The key insight is that the geometric structure of the occupation measure set is sufficient to recover strong duality. Building on this, we proposed a primal–dual algorithm for linear CMDPs that achieves $\bigO(T^{2/3})$ regret and constraint violation bounds by combining clipped value iteration with a strong duality–based analysis to control the composite Lagrangian regret and its decomposition. Despite these results, several important directions remain open. First, it is of interest to extend the strong duality result to infinite state and action spaces, as the current approach relies on finite-dimensional LP arguments. Second, it remains open whether the regret and constraint violation bounds can be further improved to $\bigO(\sqrt{T})$, which remains unresolved for both tabular and linear average-reward CMDPs under the weakly communicating assumption.

\bibliographystyle{plainnat}
\bibliography{ref}

\newpage

\appendix

\newpage

\section{Related work}

\paragraph{Average-reward CMDPs}
Learning in average-reward CMDPs has been studied under several algorithmic frameworks and structural assumptions. Early works considered tabular CMDPs with long-term average constraints and established regret and constraint-violation guarantees~\citep{agarwal2021markov, agarwal2022regret, wei2022provably, singh2022learning, chen2022learning}. In particular, \citet{chen2022learning} provided algorithms under both ergodic and weakly communicating assumptions, while \citet{ghosh2023achieving} extended this line to linear function approximation. More recently, \citet{pmlr-v235-provodin24a} studied posterior-sampling-based exploration for average-reward CMDPs, and \citet{wei2026nearoptimal} investigated near-optimal sample complexity bounds under a generative model. Related constrained formulations have also been studied through concave utility objectives and zero-violation guarantees~\citep{agarwal2022concave}. Another line of work studies average-reward CMDPs through policy-gradient and actor--critic methods. \citet{bai2024learning} considered general policy parameterization and developed a primal--dual policy-gradient algorithm. Subsequent works further studied convergence, regret, and actor--critic variants under different parameterization and structural assumptions~\citep{ xu2025global, satheesh2025primal, satheesh2026regret, satheesh2026global}. These works provide important progress for average-reward CMDPs, but they typically rely on stronger assumptions such as ergodicity or unichain structure, or focus on different learning models. In contrast, our work establishes strong duality under the weakly communicating assumption and uses it to obtain improved regret and constraint violation bounds for average-reward linear CMDPs.

\paragraph{Average-reward unconstrained MDPs}
There is also a large literature on learning unconstrained average-reward MDPs. In the tabular setting, \citet{jaksch2010near} proposed UCRL2 for communicating MDPs, while posterior-sampling approaches were studied in communicating and weakly communicating settings~\citep{agrawal2017optimistic, ouyang2017learning}. For weakly communicating MDPs, \citet{bartlett2012regal} introduced REGAL using bias-span regularization, and \citet{fruit2018efficient} developed a computationally efficient bias-span-constrained algorithm. Model-free algorithms for average-reward MDPs were studied by \citet{wei2020model, zhang2023sharper}, and recent works further investigate planning and sample-complexity aspects of average-reward MDPs~\citep{agrawal2024optimistic, zurek2024span, lee2023accelerating, lee2025near, lee2025optimal}. For linear function approximation, \citet{wei2021learning} proposed algorithms for infinite-horizon average-reward linear MDPs, including both statistically optimal and computationally efficient variants under different assumptions. More recently, \citet{hong2024reinforcement} studied average-reward linear MDPs via approximation by discounted-reward MDPs and introduced value function clipping to obtain sharp regret guarantees, while \citet{hong2025computationally} further improved computational efficiency by avoiding dependence on the state-space size in the clipping step. Policy-gradient methods have also been studied for unconstrained average-reward MDPs~\citep{bai2024regret, ganesh2024order, ganesh2025regret}, as well as extensions beyond the unichain setting~\citep{lee2026policy}. Our algorithm is inspired by the value-function clipping technique developed for unconstrained average-reward linear MDPs~\citep{hong2024reinforcement}, but extending this technique to CMDPs requires a primal--dual formulation and a strong-duality-based analysis.

\paragraph{Finite-horizon CMDPs}
Compared with the average-reward setting, finite-horizon CMDPs have been studied under a broader variety of models, feedback structures, and safety requirements. In the tabular setting, \citet{efroni2020exploration} studied the exploration--exploitation tradeoff in CMDPs through both LP-based and primal--dual approaches. Subsequent works refined the guarantees by considering zero or bounded constraint violation~\citep{liu2021learning, bura2022dope, yu2025improved}, and optimal strong regret and violation~\citep{muller2024truly,stradi2025optimal}. Another line of work studies online CMDPs with stochastic or adversarial losses and constraints~\citep{qiu2020upper, stradi2024online, stradi2025policy, stradi2025learning, zhu2025an}. Under function approximation, \citet{ding2021provably} and \citet{ghosh2022provably} developed provably efficient algorithms for finite-horizon linear CMDPs, and subsequent works further studied episode-wise safety, instantaneous constraints, non-stationarity, and adversarial losses in linear or related CMDP models~\citep{amani2021safe, wei2023provably, wei2024safe, kitamura2025provably, roknilamouki2025provably, yu2026primaldual, yu2026near}. These works focus on finite-horizon or episodic CMDPs, whereas our work studies infinite-horizon average-reward CMDPs under the weakly communicating assumption.



\section{Additional discussions}\label{appendix:additional discussion}

\paragraph{Absence of optimal stationary policies} 
We explain why \eqref{eq:average-reward CMDP} may fail to admit an optimal stationary policy under the weakly communicating assumption for a fixed initial state $s_1$. Equivalently, it suffices to show that \eqref{eq:occ average-reward CMDP} may have a finite supremum that is not attained. The main reason is that $\calPSR{s_1}$ may miss some boundary points (\Cref{lem:relint & ext}). As a result, the feasible set $\calPSR{s_1}\cap \{q: g^\top q \geq b\}$ may exclude some extreme points for certain $g \in \bbR^{|\calS|\times|\calA|}$ and $b \in \bbR$. Consequently, for such $g$ and $b$, the optimization problem may fail to attain its optimum for some $r \in \bbR^{|\calS| \times|\calA|}$. This phenomenon is illustrated in \Cref{fig:no optimal}.

\begin{figure}[H]
\centering
\includegraphics[width=0.3\linewidth]{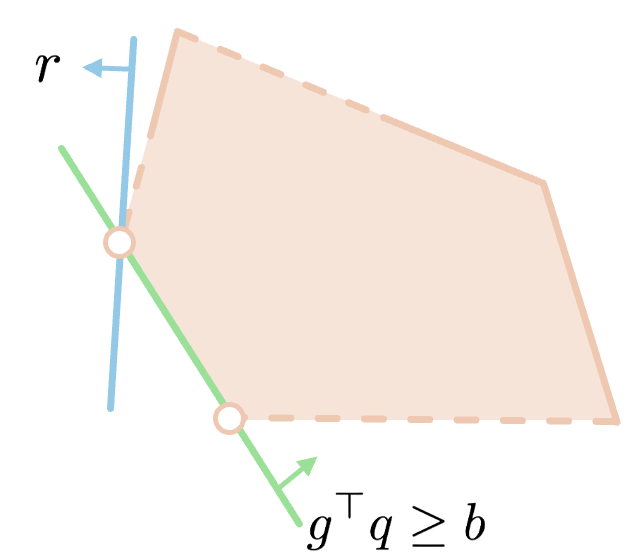}
\caption{{Illustration of an average-reward CMDP over the occupation measure space.} The green line represents the constraint $g^\top q \geq b$, the red region represents the feasible region $\calPSR{s_1}\cap\{q: g^\top q \geq b\}$, and the blue arrow indicates the direction of $r$.}
\label{fig:no optimal}
\end{figure}
We provide a more concrete example, inspired by Example 3.1 in \citet{mannor2005empirical}. Let $\calS = \{s_1, s_2\}$, $\calA = \{a_1, a_2\}$, $b = 1/2$, and let $r(s_2, a_2) = 1$, $g(s_1,a_1) = 1$, otherwise $0$. The transition kernel is deterministic, which is illustrated as follows.
\begin{figure}[H]
    \centering
    \begin{tikzpicture}[
    >=Stealth,
    node distance=4cm,
    state/.style={circle, draw, minimum size=1.1cm, thick},
    every edge/.style={draw, thick, ->},
    action/.style={midway, above, font=\small}
    ]
    
    \node[state] (s1) {$s_1$};
    \node[state, right of=s1] (s2) {$s_2$};
    
    \path (s1) edge[loop left] node[action] {$a_1$} (s1);
    \path (s2) edge[loop right] node[action] {$a_2$} (s2);
    
    \path (s1) edge[bend left=18] node[action] {$a_2$} (s2);
    \path (s2) edge[bend left=18] node[action] {$a_1$} (s1);
    \end{tikzpicture}
\end{figure}
Note that this MDP is weakly communicating because $s_1,s_2$ become recurrent under a policy taking $a_2$ at $s_1$, and $a_1$ at $s_2$. Given these, we consider the following average-reward CMDPs:
\begin{equation}\label{eq:counterexample}
    \sup_{\pi \in \PiSR}\; J_r^\pi(s_1) \quad \text{s.t.} \quad J_g^\pi(s_1) \geq 1/2.
\end{equation}
To find an optimal stationary policy, consider $\pi(a_2 | s_1) = \alpha$ and $\pi(a_1|s_2) = \beta$ for some $\alpha, \beta \in [0,1]$. Suppose that $\alpha, \beta > 0$. In this case, the resulting Markov chain is
\[
    P_\pi = \begin{bmatrix}
        1-\alpha & \alpha\\
        \beta & 1-\beta
    \end{bmatrix}.
\]
Since it is an irreducible chain, its (unique) stationary distribution satisfies $d_\pi(s_1) = \beta / (\alpha + \beta)$ and $d_\pi(s_2) = \alpha / (\alpha + \beta)$. By Proposition 8.9.1 in \citet{puterman1994markov}, we can deduce that $\occ{\pi}{s_1}(s,a) = d_\pi(s) \pi(a|s)$ for $(s,a) \in \calS \times \calA$. It follows that 
\begin{align*}
    \occ{\pi}{s_1}(s_1,a_1) = \frac{\beta(1-\alpha)}{\alpha+\beta}, \; \occ{\pi}{s_1}(s_2,a_2) = \frac{\alpha(1-\beta)}{\alpha+\beta}. 
\end{align*}
To achieve the supremum, for $\varepsilon \in (0,1/3)$, take 
\[
    \alpha = \varepsilon, \quad \beta = \frac{\varepsilon}{1-2\varepsilon}.
\]
Then, $\pi$ is feasible, as $g^\top \occ{\pi}{s_1} = 1/2$ for any $\varepsilon \in (0,1/3)$, and
\[
    r^\top \occ{\pi}{s_1} = \frac{1}{2} - \frac{\varepsilon}{1-\varepsilon}.
\]
As $\varepsilon \to 0+$, the supremum of \eqref{eq:counterexample} becomes $1/2$. However, there is no fixed $\alpha,\beta >0$ that achieves $r^\top\occ{\pi}{s_1} =1/2$ and $g^\top \occ{\pi}{s_1} \geq 1/2$, because $\occ{\pi}{s_1}(s_1,a_1) + \occ{\pi}{s_1}(s_2,a_2) < 1$ for any $\alpha,\beta >0$. Moreover, this can be easily shown when $\alpha = 0$ or $\beta =0$. Consequently, this example demonstrates that weakly communicating average-reward CMDPs can have no optimal stationary solutions.

\paragraph{Importance of strong duality over stationary policies}
We explain why strong duality over stationary policies is particularly useful for learning problems. Specifically, one may ask why we do not instead consider \eqref{eq:average-reward CMDP} over the history-dependent randomized policy class, which is a more general formulation:
\begin{align}\label{eq:HR average-reward CMDP}
    J_r^{*,\textnormal{HR}}(s_1)
    \quad\triangleq\quad
    \max_{\pi \in \Pi^{\textnormal{HR}}} \ J_r^\pi(s_1)
    \; \textnormal{s.t.}\;
    J_g^\pi(s_1) \geq b,
    \tag{HR-CMDP}
\end{align}
where $\Pi^{\textnormal{HR}}$ denotes the set of history-dependent policies whose occupation measures exist. One may then analyze regret and constraint violation defined with respect to $J_r^{*,\textnormal{HR}}(s_1)$ using strong duality over $\Pi^{\textnormal{HR}}$. While this approach is possible, it prevents us from leveraging favorable properties specific to stationary policies, such as \Cref{lem:chen2022 lemma 4}, thereby hindering tighter regret bounds. To resolve this issue, we must answer the following questions: (i) can we restrict the policy class to the stationary policy class~$\Pi$ without loss of optimality, and (ii) does strong duality hold over $\Pi$?

Our results in \Cref{sec:strong duality} answer these questions. In particular, we can show that $J_r^{*, \textnormal{HR}}(s_1) = J_r^{*}(s_1)$, where $J_r^{*}(s_1)$ denotes the supremum of \eqref{eq:average-reward CMDP} over $\Pi$. This follows from \Cref{lem:relint & ext} together with the fact that the occupation measure set induced by $\Pi^{\textnormal{HR}}$ coincides with $\calP$ (Theorem 3.1 in \citet{mannor2005empirical}). Consequently, this justifies restricting the policy class from $\Pi^{\textnormal{HR}}$ to $\Pi$ without loss of optimality, which in turn allows us to leverage favorable properties of stationary policies. Finally, \Cref{thm:strong duality} fills the remaining gap by establishing strong duality over $\Pi$.

\paragraph{Computational complexity}
We provide the computational complexity of \Cref{alg:main}. Since our algorithm is a primal--dual extension of Algorithm 2 in \citet{hong2024reinforcement}, its complexity is comparable to that of the original algorithm. The main difference arises from our use of the finite-horizon approximation. Specifically, in each episode $k \in [K]$, the algorithm performs $\calO(H)$ value iterations. In each step $h \in [H]$, it computes $\min_{s'} \widetilde V_h^k(s')$, which requires $\calO(d^2|\calS||\calA|)$ operations. In addition, other operations require $\calO(d^2|\calA|T)$ operations. Therefore, each episode requires $\calO(H d^2 |\calS||\calA| T)$ operations, and the total computational complexity is $\calO(d^2 K H |\calS||\calA| T)$. Since $HK = T$, this simplifies to $\calO(d^2 |\calS||\calA| T^2)$.

\paragraph{Limitations of Algorithm~\ref{alg:main}}
We discuss the limitations of \Cref{alg:main}. First, its computational complexity depends on $|\calS|$. In particular, the value function clipping step computes $\min_{s'} \widetilde V_h^k(s')$, requiring at least $|\calS|$ operations, which can be prohibitive when the state space is large. Second, its regret bound is suboptimal, as it does not achieve $\bigO(\sqrt{T})$, whereas optimal regret rates are attainable in the unconstrained setting~\citep{hong2024reinforcement, hong2025computationally}. In our case, the suboptimal $\bigO(T^{2/3})$ regret arises from controlling the dual variable. More precisely, in the regret decomposition~\eqref{eq:decomp}, the relevant terms are
\[
    \frac{\lambda^2}{2\eta} + \frac{\eta KH^2}{2}.
\]
Under the finite-horizon approximation framework, we have $KH = T$, and thus these terms can be written as $\frac{\lambda^2}{2\eta} + \frac{\eta H T}{2}$. Since $H$ depends on $T$, obtaining a $\calO(\sqrt{T})$ bound for these terms is nontrivial.

For the constrained setting, we illustrate the challenges in designing an optimal algorithm, while its computational complexity is independent of $|\calS|$. A natural approach is to extend algorithms designed for the unconstrained setting to the constrained setting. Indeed, in the unconstrained setting, \citet{hong2025computationally} proposed an optimal algorithm. In particular, their algorithm proceeds as follows. To avoid $\calO(|\calS|)$ computational complexity, the algorithm clips the value function estimates using only the visited states, replacing $\min_{s'} \widetilde V_h^k(s')$ with $\min_{s \in \calS_{\mathrm{visit}}} \widetilde V_h^k(s)$, where $\calS_{\mathrm{visit}}$ denotes the set of visited states. While this modification reduces the computational cost, it introduces additional errors. To control this error, it applies a deviation control technique to ensure that $\|V_{h+1}^k - V_{h+1}^{k+1}\|_\infty$ remains small.

However, extending this approach to the constrained setting introduces additional challenges, primarily due to the presence of the dual variable. In particular, when adapting their algorithm to a primal--dual framework, we may introduce a dual variable $\lambda_k$, for example through the modified reward $r + \lambda_k g$. Since $\lambda_k$ varies across iterations $k$, controlling the deviation term $\|V_{h+1}^k - V_{h+1}^{k+1}\|_\infty$ becomes significantly more difficult. Therefore, designing an optimal algorithm with $|\calS|$-independent computational complexity in our setting remains an open problem.

\section{Additional preliminaries}\label{appendix:additional preliminary}
We introduce some additional notions that are useful for proving strong duality and regret analysis. For more details, we refer the reader to \citet{puterman1994markov, norris1998markov}.

\paragraph{Definitions for Markov chains} Consider a Markov chain $P_\pi$ on a finite state $\calS$. Let $P_\pi^{(n)}(s'|s)$ denote $\bbP(s_{t+n} = s'|s_t = s)$. We say $s'$ is \emph{accessible} from $s$ if $P_\pi^{(n)}(s'|s) > 0$ for some $n\geq 0$, denoted by $s \to s'$, and $s$ \emph{communicates} with $s'$ if $s \to s',\ s' \to s$, denoted by $s \leftrightarrow s'$.  

A set $C \subseteq \calS$ is \emph{closed} if there is no state in $\calS \backslash C$ that is accessible from any state in $C$. A closed set $C$ is \emph{irreducible} if there is no proper subset that is closed. Let $\tau_s$ denote the time of first visit (return to $s$ starting from $s$). A state $s$ is \emph{recurrent} if $\bbP(\tau_s <\infty) = 1$, and \emph{transient} if $\bbP(\tau_s < \infty) < 1$. We say $d_\pi:\calS \to \bbR_+$ is a \emph{stationary distribution} for $P_\pi$ if $\sum_{s}d_\pi(s) = 1$ and $\sum_{s'}P_\pi(s|s')d_\pi(s') = d_\pi(s)$ for all $s$. Equivalently, we can write it as $\bm{1}^\top d_\pi = 1$ and $P_\pi^\top d_\pi = d_\pi$ using matrix notation.

Since $\leftrightarrow$ is an equivalence relation, $\calS$ can be partitioned into \emph{communicating classes}. Moreover, for each communicating class $C$, all states in $C$ are either recurrent or transient. A communicating class $C$ is \emph{recurrent class} if all states in $C$ are recurrent, and \emph{transient class} if all states in $C$ are transient. Let $\gcd\{n >0: P_\pi^{(n)}(s|s) > 0\}$ denote the period of state $s$, and all states in a recurrent class share the same period. We call a class \emph{periodic} if the period of states exceeds $1$, and \emph{aperiodic} if $1$.

A Markov chain $P_\pi$ is \emph{irreducible} if it consists of a single closed class, \emph{unichain} if it consists of a single set of closed irreducible states and a set of (possibly empty) transient states, and otherwise \emph{multichain}. 

\paragraph{MDP classifications}
An MDP $\calM$ is called \emph{ergodic} if, for every stationary policy, the induced transition matrix is irreducible and aperiodic.   An MDP $\calM$ is called \emph{unichain} if, for every deterministic stationary policy, the induced transition matrix is unichain. An MDP $\calM$ is called \emph{(general) multichain} if there exists at least one stationary policy whose induced transition matrix contains one or more closed irreducible recurrent classes. These classes satisfy {ergodic}$ \subseteq$  {unichain} $\subseteq$ {weakly communicating} $\subseteq$ {multichain}.


Next, from Lemmas \ref{lem:prop 8.9.1} to \ref{lem: J_r = sup J_r}, we provide standard results for average-reward MDPs with finite $\calS, \calA$.

\begin{lemma}[Proposition 8.9.1 in \citet{puterman1994markov}]\label{lem:prop 8.9.1}
    Let $\calM = (\calS, \calA, P, r)$ be an MDP with $|\calS|, |\calA| < \infty$, and let $s_1\in\calS$ be any initial state. For any $\pi \in \PiSR$, the occupation measure $\occ{\pi}{s_1}$ exists.
\end{lemma}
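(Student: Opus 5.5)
The plan is to prove the existence of the limit by an explicit construction. Fix $\pi\in\PiSR$ and write $P_\pi(s'|s)=\sum_a \pi(a|s)P(s'|s,a)$ for the induced transition matrix on the finite set $\calS$. By the Markov property, for each $t\geq 1$,
\[
\bbP_\pi(s_t=s,a_t=a\mid s_1)=P_\pi^{t-1}(s|s_1)\,\pi(a|s).
\]
It therefore suffices to show that the Cesàro averages $\frac{1}{T}\sum_{t=1}^{T}P_\pi^{t-1}$ converge to some matrix $P_\pi^*$. Once we have this, $\occ{\pi}{s_1}(s,a)=P_\pi^*(s|s_1)\pi(a|s)$.

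To prove Cesàro convergence, first note that the sequence $A_T\triangleq\frac1T\sum_{t=0}^{T-1}P_\pi^t$ consists of stochastic matrices. It thus lies in a compact set and has at least one limit point. Next, any limit point $L$ satisfies $P_\pi L=LP_\pi=L$. This follows because
\[
P_\pi A_T - A_T=\tfrac1T\bigl(P_\pi^T-I\bigr)\to 0,
\]
and the same holds on the right.

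Now take two limit points $L,L'$. Multiplying the identities $L=A_TL$ and $LP_\pi=L$ (which gives $L=LA_T$) along a subsequence converging to $L'$, we get $L=L'L$ and $L=LL'$. Exchanging the roles of $L$ and $L'$ gives $L'=LL'$ and $L'=L'L$. Combining these yields $L=LL'=L'$. So the limit point is unique, and by compactness the whole sequence $A_T$ converges to $P_\pi^*$.

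The main obstacle is handling periodic and multichain structure: $P_\pi^t$ itself need not converge. The Cesàro-averaging argument above sidesteps this without invoking any classification of states. Finally, $\occ{\pi}{s_1}$ is a probability vector, being a limit of probability vectors, which is consistent with its later use in the paper.
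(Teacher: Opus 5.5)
Your proof is correct and complete, and it takes a different route from the paper. The paper gives no argument of its own here: it imports the statement from \citet{puterman1994markov}, where the frequency is written as $P_\pi^*(s|s_1)\pi(a|s)$ and the Ces\`aro limiting matrix $P_\pi^*$ comes from the general theory of finite Markov chains.

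Your reduction to the averages $A_T$ is the same as Puterman's. The difference is that you prove the existence of $P_\pi^*$ directly, by compactness plus uniqueness of limit points. This is in effect the finite-dimensional mean ergodic theorem, and it needs no classification of states, so periodic and multichain policies are handled uniformly.

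The citation buys brevity, together with Puterman's structural description of $P_\pi^*$ through recurrent classes and absorption probabilities. Your route buys self-containedness and, as by-products, the identities $P_\pi P_\pi^*=P_\pi^*P_\pi=P_\pi^*P_\pi^*=P_\pi^*$. These have two further uses in the paper:
\begin{itemize}
\item The identity $P_\pi^*P_\pi=P_\pi^*$ alone shows that $\occ{\pi}{s_1}$ satisfies the flow-balance equations. This gives $\calPSR{s_1}\subseteq\calP$, which the paper instead cites from \citet{mannor2005empirical}.
\item The same identity makes $\mu_1^\top P_\pi^*$ a stationary distribution. Combined with uniqueness of the stationary distribution (first part of \Cref{lem:MC}), this gives the second part of \Cref{lem:MC} at once, without the almost-sure ergodic theorem and bounded-convergence argument used there.
\end{itemize}
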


\begin{corollary}\label{lem:J -> inner product}
    Let $\calM = (\calS, \calA, P, r)$ be an MDP with $|\calS|, |\calA| < \infty$, and let $s_1\in\calS$ be any initial state. Let $r:\calS \times \calA \to \bbR$ be any bounded reward function. For any $\pi \in \PiSR$, we have $J_r^\pi(s_1) = r^\top \occ{\pi}{s_1}$.
\end{corollary}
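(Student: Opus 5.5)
We need to prove Corollary \ref{lem:J -> inner product}: $J_r^\pi(s_1) = r^\top \occ{\pi}{s_1}$ for any stationary $\pi$ and any bounded $r$.

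The plan is to start from the definition of the gain and exchange the finite sum over state--action pairs with the expectation and the limit. For each $t$, write the reward as
\[
r(s_t,a_t)=\sum_{(s,a)\in\calS\times\calA} r(s,a)\,\mathds{1}\{s_t=s,a_t=a\}.
\]
This identity holds pointwise on every trajectory. Taking expectations under $\pi$ from $s_1$, using linearity of expectation, and summing over $t\le T$ gives
\[
\frac{1}{T}\bbE_\pi\Big[\sum_{t=1}^T r(s_t,a_t)\,\Big|\,s_1\Big]=\sum_{(s,a)} r(s,a)\,\frac{1}{T}\bbE_\pi\Big[\sum_{t=1}^T \mathds{1}\{s_t=s,a_t=a\}\,\Big|\,s_1\Big].
\]
This step is legitimate because $\calS\times\calA$ is finite and $r$ is bounded, so every expectation involved is finite.

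Next, let $T\to\infty$. By \Cref{lem:prop 8.9.1}, each Cesàro average on the right-hand side converges to $\occ{\pi}{s_1}(s,a)$. The right-hand side is a finite linear combination of convergent sequences, so it converges to $\sum_{s,a} r(s,a)\occ{\pi}{s_1}(s,a) = r^\top \occ{\pi}{s_1}$.

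It follows that the limit on the left-hand side exists, and it equals $J_r^\pi(s_1)$ by definition. This proves the claim.

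The only delicate point is that the gain is defined as a limit, not a limsup or liminf. Its existence is not assumed separately; it is inherited from the existence of the occupation-measure limits in \Cref{lem:prop 8.9.1} together with the finiteness of the sum. Beyond that, no obstacle is expected; the argument is routine linearity.
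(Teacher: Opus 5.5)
Your proposal is correct and follows essentially the same route as the paper: write $r(s_t,a_t)$ as a finite sum of indicators, apply linearity of expectation, and pass to the limit term by term using the existence of the occupation-measure limits from \Cref{lem:prop 8.9.1}. Your observation that the existence of the gain limit itself follows from those limits is a point of care the paper leaves implicit.
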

\begin{proof}
    Note that
    \begin{align*}
        J_r^\pi(s_1) = \lim_{T\to\infty} \frac{1}{T}\bbE_{P,\pi}\left[\sum_{t=1}^T r(s_t,a_t) | s_1\right] = \lim_{T\to\infty} \frac{1}{T}\bbE_{P,\pi}\left[\sum_{t=1}^T \sum_{s,a} r(s,a) \mathds{1}\{s_t=s,a_t=a\} | s_1\right].
    \end{align*}
    By \Cref{lem:prop 8.9.1}, since $\occ{\pi}{s_1}(s,a) = \lim_{T\to\infty} \frac{1}{T}\bbE_{P,\pi}\left[\sum_{t=1}^T\mathds{1}\{s_t=s,a_t=a\} | s_1\right]$ exists and $\calS,\calA$ are finite, we have
    \begin{align*}
        J_r^\pi(s_1) = \sum_{s,a} r(s,a)\lim_{T\to\infty} \frac{1}{T}\bbE_{P,\pi}\left[\sum_{t=1}^T \mathds{1}\{s_t=s,a_t=a\} | s_1\right] = r^\top \occ{\pi}{s_1}.
    \end{align*}
\end{proof}

\begin{lemma}[Theorem 8.9.3 in \citet{puterman1994markov}]\label{lem:thm 8.9.3}
    Let $\calM = (\calS, \calA, P, r)$ denote an MDP with $|\calS| < \infty$ and $|\calA| < \infty$, and let $s_1\in\calS$ be any initial state. Then we have 
    \[
        \conv(\calPSD{s_1})=\clconv(\calPSD{s_1}) = \clconv(\calPSR{s_1}).
    \]
\end{lemma}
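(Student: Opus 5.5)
Since $\calPSD{s_1}$ is finite, the first equality is immediate. For the second, I would avoid constructing explicit convex decompositions of randomized occupation measures. Instead, I would compare support functions: two closed convex sets in $\bbR^{|\calS|\times|\calA|}$ coincide exactly when their support functions agree in every direction $r$.

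\textbf{First equality.} There are only finitely many deterministic stationary policies, at most $|\calA|^{|\calS|}$ of them. Hence $\calPSD{s_1}$ is a finite set. Its convex hull is therefore a polytope, which is compact and in particular closed. This gives $\conv(\calPSD{s_1}) = \clconv(\calPSD{s_1})$.

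\textbf{Easy inclusion for the second equality.} Every deterministic stationary policy is a stationary randomized policy, so $\calPSD{s_1} \subseteq \calPSR{s_1}$. Taking closed convex hulls yields $\clconv(\calPSD{s_1}) \subseteq \clconv(\calPSR{s_1})$.

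\textbf{Reverse inclusion.} Fix an arbitrary direction $r \in \bbR^{|\calS|\times|\calA|}$ and view it as a bounded reward function.
\begin{itemize}
\item By \Cref{lem:J -> inner product}, $\sup_{q\in\calPSR{s_1}} r^\top q = \sup_{\pi\in\PiSR} J_r^\pi(s_1)$.
\item The key input is the classical existence theorem for finite average-reward MDPs with no structural assumption (Theorem 9.1.8 in \citet{puterman1994markov}). It states that there is a deterministic stationary policy $\pi^{\textnormal{SD}}$ that is gain-optimal simultaneously from every initial state, even among history-dependent policies.
\item Applying it gives $\sup_{\pi\in\PiSR} J_r^\pi(s_1) \le J_r^{\pi^{\textnormal{SD}}}(s_1) = \max_{q\in\calPSD{s_1}} r^\top q$.
\end{itemize}
The support function of a set equals that of its closed convex hull. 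Therefore $\sigma_{\clconv(\calPSR{s_1})}(r) \le \sigma_{\clconv(\calPSD{s_1})}(r)$ for every $r$. Combined with the easy inclusion, the support functions are equal. By the separating hyperplane theorem, the two closed convex sets coincide.

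\textbf{Main obstacle.} The only nontrivial step is invoking the existence of a deterministic stationary optimal policy in the general multichain setting. The rest is convex-analytic bookkeeping. If I wanted to avoid citing it, I would argue directly via the multichain Bellman (average-optimality) equations and policy iteration, which terminate at a deterministic policy. I would also double-check that the Cesàro-limit definition of $J_r^\pi(s_1)$ used here matches the $\liminf$ or $\limsup$ criterion in Puterman's theorem. For stationary policies on finite spaces the limit exists by \Cref{lem:prop 8.9.1}, so the two criteria agree.
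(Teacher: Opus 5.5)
Your proof is correct, but it takes a different route from the paper for the nontrivial equality. The first equality is handled the same way in both: $\calPSD{s_1}$ has at most $|\calA|^{|\calS|}$ points, so its convex hull is already closed.

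The difference is in $\clconv(\calPSD{s_1}) = \clconv(\calPSR{s_1})$. The paper does not prove it; it takes it directly from Theorem 8.9.3 of \citet{puterman1994markov}. You derive it from a different classical input, the existence of a deterministic stationary gain-optimal policy for finite MDPs with no structural assumption. You combine this with the fact that nonempty closed convex sets are determined by their support functions, via separation.

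What your route buys:
\begin{itemize}
\item It shows the mechanism explicitly: linear objectives over $\calPSR{s_1}$ are maximized within $\calPSD{s_1}$.
\item It rests on a more widely known theorem than the occupation-measure result the paper cites.
\item Your concern about $\liminf$ versus $\limsup$ is harmless. You only need $\pi^{\textnormal{SD}}$ to beat policies in $\PiSR$, and for those the Ces\`aro limit exists by Proposition 8.9.1.
\end{itemize}
The paper's route is shorter but treats the identity as a black box.

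Your route also changes the logical order of what follows in the paper. The paper uses this lemma to prove \Cref{lem: J_r = sup J_r}, namely that $\sup_{\pi\in\PiSR} J_r^\pi(s_1)$ is attained by some $\pi\in\PiSD$. On your route, that statement is essentially the input you cite. So the paper goes from the hull identity to deterministic optimality, and you go the other way. Neither is circular, since both rest on external results. But if your proof were adopted, \Cref{lem: J_r = sup J_r} would follow immediately from the cited existence theorem and would no longer need this lemma.
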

where $\conv(\cdot), \clconv(\cdot)$ denote the convex hull and closed convex hull, respectively, $\PiSD$ denotes the set of stationary deterministic policies, and $\calPSD{s_1} = \{\occ{\pi}{s_1}: \pi \in \PiSD\}$.
\begin{proof}
    Since $\clconv(\calPSD{s_1}) = \clconv(\calPSR{s_1})$ directly follows from Theorem 8.9.3 in \citet{puterman1994markov}, it is sufficient to show that $\conv(\calPSD{s_1})=\clconv(\calPSD{s_1})$. Since $|\calS|,|\calA| < \infty$, the number of stationary deterministic policies is at most $|\calA|^{|\calS|}$, which is finite. Moreover, $\calPSD{s_1}$ is finite, since it is defined as occupation measures with respect to stationary deterministic policies. Note that a closed convex hull of a finite set is equal to its convex hull. This concludes the proof.
\end{proof}


\begin{lemma}\label{lem: J_r = sup J_r}
    Let $\calM = (\calS, \calA, P, r)$ denote an MDP with $|\calS| < \infty$ and $|\calA| < \infty$, and let $s_1\in\calS$ be any initial state. Let $r:\calS \times \calA \to \bbR$ be any bounded reward function. There exists $\pi^* \in \PiSD$ such that $J_r^{\pi^*}(s_1) = \sup_{\pi \in \PiSR} J_r^\pi(s_1)$.
\end{lemma}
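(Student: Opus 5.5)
The plan is to reduce the statement to the occupation-measure picture and then combine Corollary~\ref{lem:J -> inner product} with Lemma~\ref{lem:thm 8.9.3}. By Corollary~\ref{lem:J -> inner product}, for every $\pi\in\PiSR$ we have $J_r^\pi(s_1)=r^\top \occ{\pi}{s_1}$. Hence
\[
\sup_{\pi\in\PiSR}J_r^\pi(s_1)=\sup_{q\in\calPSR{s_1}} r^\top q,
\]
and it suffices to show that this supremum is attained at an occupation measure of a stationary deterministic policy.

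First, since $q\mapsto r^\top q$ is linear, hence continuous and affine, the supremum over a set equals the supremum over its closed convex hull:
\[
\sup_{q\in\calPSR{s_1}} r^\top q=\sup_{q\in\clconv(\calPSR{s_1})} r^\top q.
\]
The inequality ``$\le$'' is immediate from inclusion. For ``$\ge$'', the value of $r^\top\cdot$ at a convex combination is at most the maximum of its values at the combined points, and limits preserve the bound. Next, Lemma~\ref{lem:thm 8.9.3} gives $\clconv(\calPSR{s_1})=\conv(\calPSD{s_1})$. The set $\calPSD{s_1}$ is finite, with at most $|\calA|^{|\calS|}$ elements, so $\conv(\calPSD{s_1})$ is a polytope. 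A linear function on a polytope attains its maximum at one of the generating points. Pick $\pi^*\in\PiSD$ with $r^\top\occ{\pi^*}{s_1}=\max_{q\in\calPSD{s_1}}r^\top q$.

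Combining the steps, we get
\[
\sup_{\pi\in\PiSR}J_r^\pi(s_1)=\max_{q\in\conv(\calPSD{s_1})}r^\top q=r^\top\occ{\pi^*}{s_1}=J_r^{\pi^*}(s_1).
\]
The last equality uses Corollary~\ref{lem:J -> inner product} again, since $\PiSD\subseteq\PiSR$. The supremum is finite because $r$ is bounded and occupation measures are probability vectors.

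There is no serious obstacle here; the substantive content is packaged in Lemma~\ref{lem:thm 8.9.3}. The only point needing care is the exchange of the supremum with the closed convex hull. I would verify it explicitly with the convex-combination-plus-limit argument above rather than cite it.
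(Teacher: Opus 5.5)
Your proposal is correct and follows essentially the same route as the paper. Both rewrite the objective via Corollary~\ref{lem:J -> inner product}, pass to $\clconv(\calPSR{s_1})=\conv(\calPSD{s_1})$ using Lemma~\ref{lem:thm 8.9.3}, and take a maximizer among the finitely many deterministic occupation measures. The only difference is that you prove $\sup_{q\in\clconv(\calPSR{s_1})} r^\top q \le \sup_{q\in\calPSR{s_1}} r^\top q$ by a convexity-and-limit argument, whereas the paper gets this for free from $q^*\in\calPSD{s_1}\subseteq\calPSR{s_1}$; your step is valid but redundant.
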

\begin{proof}
    Due to the definition of $\calPSR{s_1}$ and \Cref{lem:J -> inner product}, $\sup_{\pi \in \PiSR} J_r^\pi(s_1)$ can be rewritten as
    \begin{align*}
        \sup_{q \in \calPSR{s_1}} r^\top q.
    \end{align*}
    Since $\calPSR{s_1}\subseteq\clconv(\calPSR{s_1})$ and, by \Cref{lem:thm 8.9.3}, $\clconv(\calPSR{s_1})=\conv(\calPSD{s_1})$, it follows that
    \begin{align*}
        \sup_{q \in \calPSR{s_1}} r^\top q \leq \sup_{q \in \clconv(\calPSR{s_1})} r^\top q = \sup_{q \in \conv(\calPSD{s_1})} r^\top q.
    \end{align*}
    Note that $\calPSD{s_1}$ is finite. Then $\conv(\calPSD{s_1})$ is a non-empty polytope, and the right-hand side becomes an LP. Therefore, the maximizer is attained at an extreme point of $\conv(\calPSD{s_1})$---denoted by $q^*$---and it is clear that $q^* \in \calPSD{s_1}$, i.e., 
    \begin{align*}
        \exists q^* \in \calPSD{s_1}: \ r^\top q^* = \sup_{q \in \conv(\calPSD{s_1})} r^\top q.
    \end{align*}
    This implies that
    \begin{align*}
        \sup_{q \in \calPSR{s_1}} r^\top q \leq r^\top q^*.
    \end{align*}
    Moreover, note that $q^* \in \calPSD{s_1} \subseteq \calPSR{s_1}$. This implies that
    \begin{align*}
        r^\top q^* \leq \sup_{q \in \calPSR{s_1}} r^\top q.
    \end{align*}
    Finally, we show that there exists $q^* \in \calPSD{s_1}$ such that $r^\top q^* = \sup_{q \in \calPSR{s_1}} r^\top q$. By the definition of $\calPSD{s_1}$, there exists some stationary deterministic policy $\pi^*$ that induces $q^*$. By \Cref{lem:J -> inner product}, $J_r^{\pi^*}(s_1) = r^\top q^*$, and this completes the proof.
\end{proof}

\section{Missing proofs for Section~\ref{sec:strong duality}}\label{appendix:Strong duality}
In this section, we provide a rigorous proof for strong duality when the underlying MDP structure is weakly communicating. Recall that the strong duality result we are interested in is formulated as
\[
    \begin{array}{ccc}
         \begin{aligned}
             \sup_{\pi \in \PiSR} \quad& J_r^\pi(s_1) \\
             \textnormal{s.t.} \quad& J_g^\pi(s_1) \geq b
         \end{aligned}
         &=&
         \min_{\lambda\geq0}\sup_{\pi\in\PiSR}  J_r^\pi(s_1) + \lambda(J_g^\pi(s_1) - b).
    \end{array}
\]

To show this, we first introduce \Cref{lem:LP strong duality}, which states strong duality property for LPs, and Lemmas \ref{lem:relint equiv} - \ref{lem:relint}, which are useful for proving strong duality for average-reward CMDPs. Subsequently, in \Cref{subsec:proof of lem:relint & ext}, we provide a proof for \Cref{lem:relint & ext}, which characterizes the structure of $\calPSR{s_1}$ with arbitrarily chosen $s_1$. Finally, based on these results, we prove \Cref{thm:strong duality} in \Cref{subsec:proof of thm:strong duality}.

\begin{lemma}[LP Strong Duality]\label{lem:LP strong duality}
    Let $\calX \subseteq \bbR^n$ be a nonempty polytope, and let $r,g \in \bbR^n$ and $b\in\bbR$. Consider a primal LP $P^* = \max_{x\in\calX} r^\top x \ \textnormal{s.t.} \ g^\top x \geq b$. Define the corresponding dual functions as $D(\lambda) = \max_{x\in\calX} r^\top x + \lambda(g^\top x -b)$. If the primal LP is feasible, then there exists $\lambda^* \geq 0$ such that $\lambda^* \in \argmin_{\lambda \geq 0} D(\lambda)$, and moreover, $P^* = D(\lambda^*)$.
\end{lemma}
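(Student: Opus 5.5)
The plan is to derive this from classical LP duality, reducing the Lagrangian dual to the standard LP dual. Since $\calX$ is a nonempty polytope, $\calX=\{x:Ax\le c\}$ for some matrix $A$ and vector $c$, and it is bounded. The primal LP is then $\max\{r^\top x: Ax\le c,\ -g^\top x\le -b\}$. It is feasible by assumption and bounded above because $\calX$ is compact. Hence its optimum $P^*$ is finite and attained. Classical LP duality gives multipliers $(\mu^*,\lambda^*)\ge 0$ with $A^\top\mu^*-\lambda^* g=r$ and $c^\top\mu^*-\lambda^* b=P^*$. This $\lambda^*$ is my candidate.

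\textbf{Weak duality.} For any $\lambda\ge0$, take a primal optimal $x^\circ$. It is feasible, so $g^\top x^\circ-b\ge0$, and therefore
\[
D(\lambda)\ge r^\top x^\circ+\lambda(g^\top x^\circ-b)\ge P^*.
\]
The maximum defining $D(\lambda)$ is attained because $\calX$ is compact and the objective is linear, so $D$ is well defined and finite.

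\textbf{Upper bound at $\lambda^*$.} For any $x\in\calX$,
\[
r^\top x+\lambda^*(g^\top x-b)=(A^\top\mu^*)^\top x-\lambda^* b=\mu^{*\top}Ax-\lambda^* b .
\]
Since $\mu^*\ge0$ and $Ax\le c$, this is at most $\mu^{*\top}c-\lambda^* b=P^*$. Hence $D(\lambda^*)\le P^*$.

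\textbf{Conclusion.} Combining the two bounds gives $D(\lambda^*)=P^*\le D(\lambda)$ for every $\lambda\ge0$. So $\lambda^*\in\argmin_{\lambda\ge0}D(\lambda)$ and $P^*=D(\lambda^*)$.

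The only step needing care is invoking classical LP duality with an $H$-representation of $\calX$ (Minkowski–Weyl), and checking its hypothesis: the primal is feasible and bounded, so the dual is feasible with equal optimal value. An alternative route avoids representations. One considers $\phi(\lambda)=D(\lambda)$ as a finite max over the vertices of $\calX$, which is convex and piecewise linear. One would then argue via a separating hyperplane for the convex set $\{(g^\top x-b,\ r^\top x):x\in\calX\}$. I would present the LP-duality route as the cleaner one.
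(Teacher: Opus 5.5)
Your proof is correct, but it takes a different route from the paper's. The paper gives no argument of its own: it says the lemma is a special case of Proposition 5.3.6 in \citet{bertsekas2009convex}. That is a general convex-programming duality theorem for problems with an abstract set constraint $x\in\calX$ and dualized inequality constraints; here the data are linear and $\calX$ is polyhedral.

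You instead derive the partial-Lagrangian statement directly from textbook LP duality. You write $\calX=\{x:Ax\le c\}$, dualize every constraint, and observe that the multiplier $\lambda^*$ on $g^\top x\ge b$ already minimizes $D$. This works because $r^\top x+\lambda^*(g^\top x-b)=P^*-\mu^{*\top}(c-Ax)$, and the term $\mu^{*\top}(c-Ax)\ge 0$ absorbs the constraints defining $\calX$. Weak duality at a primal optimizer then closes the argument.

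What your route buys:
\begin{itemize}
\item It is self-contained, using only Minkowski--Weyl and finite-dimensional LP duality.
\item It shows transparently why no strictly feasible (Slater) point is needed: dual attainment comes for free from polyhedrality. This matters because the paper invokes this lemma in the proof of \Cref{thm:strong duality}, where no Slater condition is assumed, and where the feasible region may lie entirely in the hyperplane $g^\top q=b$.
\end{itemize}
What the citation buys is brevity and coverage of more general convex settings.

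You are right to de-emphasize the separating-hyperplane alternative. Without a Slater point that argument can return a vertical hyperplane, one with zero weight on the objective. You would then need polyhedrality, for example piecewise linearity of $u\mapsto\max\{r^\top x: x\in\calX,\ g^\top x-b\ge u\}$, to extract a finite $\lambda^*$.
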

\begin{proof}
    This is a special case of Proposition 5.3.6 in \citet{bertsekas2009convex}.
\end{proof}

\begin{lemma}\label{lem:relint equiv}
    Let $\calX = \{x \in \bbR^n: x\geq 0, Ax = b\}$ be a non-empty polyhedron, and let $I_+ = \{i \in [n]: \exists x \in \calX \textnormal{ s.t. } x_i > 0\}$. Suppose that $I_+$ is non-empty. For $x \in \calX$, we have $x \in \relint(\calX)$ if and only if $x_i > 0$ for all $i \in I_+$.
\end{lemma}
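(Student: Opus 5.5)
Let $J = [n]\setminus I_+$ denote the indices with $y_j = 0$ for every $y \in \calX$. The plan is to prove the two directions separately, using only the paper's definition of relative interior: $x \in \relint(\calX)$ iff for every $y \in \calX$ there is $\epsilon > 1$ with $\epsilon x + (1-\epsilon) y \in \calX$. Write $z_\epsilon = \epsilon x + (1-\epsilon)y = x + (\epsilon-1)(x-y)$. Since $Ax = Ay = b$, we get $A z_\epsilon = b$ automatically. So membership of $z_\epsilon$ in $\calX$ reduces to nonnegativity of its coordinates, and the whole argument is coordinatewise.

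\emph{($\Leftarrow$)} Assume $x_i > 0$ for all $i \in I_+$, and fix $y \in \calX$. For $j \in J$ we have $x_j = y_j = 0$, so $(z_\epsilon)_j = 0$ for every $\epsilon$. For $i \in I_+$, $(z_\epsilon)_i = x_i + (\epsilon - 1)(x_i - y_i)$. This quantity is positive at $\epsilon = 1$, and there are only finitely many coordinates. Hence some $\epsilon > 1$ works, e.g.
\[
\epsilon = 1 + \min_{i \in I_+} \frac{x_i}{2(|x_i - y_i| + 1)}.
\]
This gives $z_\epsilon \ge 0$, so $z_\epsilon \in \calX$.

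\emph{($\Rightarrow$)} Suppose $x \in \relint(\calX)$ but $x_i = 0$ for some $i \in I_+$. By definition of $I_+$, pick $y \in \calX$ with $y_i > 0$. Then for every $\epsilon > 1$,
\[
(z_\epsilon)_i = \epsilon \cdot 0 + (1-\epsilon) y_i < 0,
\]
so $z_\epsilon \notin \calX$. This contradicts $x \in \relint(\calX)$.

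There is no real obstacle. The only points to be careful about are the following.
\begin{itemize}
\item The equality constraints are preserved along the affine line through $x$ and $y$.
\item The coordinates in $J$ stay exactly zero, which is why only $I_+$ matters.
\item Finiteness of $n$ lets us take a uniform $\epsilon > 1$ over all coordinates.
\end{itemize}
The hypothesis that $I_+$ is nonempty is not needed by this argument. If $I_+ = \emptyset$, then $\calX = \{0\}$ and the equivalence holds trivially.
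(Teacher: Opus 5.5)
Your proof is correct and follows essentially the same argument as the paper. For ($\Rightarrow$) you test against a $y \in \calX$ with $y_i > 0$ and read off coordinate $i$ of $\epsilon x + (1-\epsilon)y$. For ($\Leftarrow$) you note that the equality constraints and the coordinates outside $I_+$ hold automatically, and you pick an explicit $\epsilon > 1$ that keeps the $I_+$ coordinates nonnegative. Your choice of $\epsilon$ differs from the paper's $1 + \min_{i \in I_+} x_i / \max\{1, \|y\|_\infty\}$, but both work, and your remark that nonemptiness of $I_+$ is not needed is also correct.
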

\begin{proof}
    $(\Rightarrow)$
    Let $x\in\relint(\calX)$. Fix $i \in I_+$. By the definition of $I_+$, we can always find $y \in \calX$ such that $y_i >0$. By the definition of $\relint(\calX)$, there exists $\epsilon >1$ such that $\epsilon x + (1-\epsilon) y \in \calX$. By the definition of $\calX$, it follows that $\epsilon x_i + (1-\epsilon)y_i \geq 0$. Since $\epsilon >1$ and $y_i >0$, we have $x_i \geq \frac{(\epsilon-1)y_i}{\epsilon} > 0$. This implies that $x_i > 0$ for all $i\in I_+$.

    $(\Leftarrow)$
    Let $x \in \calX$ and $x_i > 0$ for all $i \in I_+$. Fix any $y \in \calX$. By the definition of $I_+$, we know that $y_j = 0$ for all $j \in I_+^c$. Take $\epsilon = 1 + \frac{\min_{i\in I_+} x_i}{\max\{1,\|y\|_\infty\}}$, so $\epsilon > 1$. We claim that $\epsilon x + (1-\epsilon)y \in \calX$. To show this, first it is clear that $A(\epsilon x + (1-\epsilon)y) = b$. Moreover, for $i\in I_+$, $\epsilon x_i + (1-\epsilon) y_i \geq 0$, due to our choice of $\epsilon$. For $j\in I_+^c$, $\epsilon x_j + (1-\epsilon) y_j = 0$. Therefore, by the definition of $\relint(\calX)$, $x \in \relint(\calX)$.
\end{proof}

\begin{lemma}\label{lem:MC}
    Let $P_\pi$ be a Markov chain over finite states $\calS$. Suppose that $P_\pi$ induces a single set of closed irreducible states $\calS_C$. Let $\calS_T = \calS \backslash \calS_C$ be the set of transient states. Then the following properties hold:
    \begin{enumerate}
        \item There exists a unique stationary distribution $d_\pi$ for $P_\pi$ such that $d_\pi(s) >0$ for $s\in \calS_C$ and $d_\pi(s) = 0$ for $s \in \calS_T$.
        \item Let $\mu_t(s) = \bbE_{P_\pi}[\mathds{1}\{s_t = s\} | s_1 \sim \mu_1]$, where $\mu_1$ is the given initial state distribution. Then we have $\lim_{n\to \infty} \frac{1}{n}\sum_{t=1}^n \mu_t(s) = d_\pi(s)$ for all $s \in \calS$.
    \end{enumerate}
\end{lemma}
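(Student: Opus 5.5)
The proof rests on the canonical decomposition of a finite Markov chain into its closed class $\calS_C$ and the transient states $\calS_T$. Throughout, write the chain in block form with respect to $(\calS_C,\calS_T)$. Since $\calS_C$ is closed, the restriction $P_C$ of $P_\pi$ to $\calS_C$ is a stochastic matrix. It is also irreducible, because $\calS_C$ is a closed irreducible set.

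For part 1, existence of a positive stationary vector on $\calS_C$ comes from Perron--Frobenius, or equivalently from standard finite irreducible chain theory (e.g.\ \citet{norris1998markov}). Concretely, an irreducible stochastic matrix on a finite set has a unique stationary distribution $d_C$, and $d_C(s) = 1/\bbE_s[\tau_s] > 0$ for every $s \in \calS_C$. I would extend $d_C$ by zero on $\calS_T$ to get $d_\pi$. This is stationary for $P_\pi$, because no mass leaves $\calS_C$ and the zero entries on $\calS_T$ contribute nothing to $P_\pi^\top d_\pi$.

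For uniqueness, take any stationary $d$. Since $d = (P_\pi^{n})^\top d$ for every $n$, averaging gives $d = \big(\frac{1}{n}\sum_{t=1}^n P_\pi^{t}\big)^\top d$. Every state in $\calS_T$ is transient, so its expected number of visits is finite from any starting state, and hence $\frac1n\sum_{t} P_\pi^{(t)}(s'|s) \to 0$ for $s' \in \calS_T$. This forces $d(s') = 0$ on $\calS_T$. The restriction of $d$ to $\calS_C$ is then stationary for $P_C$, so it equals $d_C$ by uniqueness for irreducible chains.

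For part 2, write $\mu_t^\top = \mu_1^\top P_\pi^{t-1}$, so it suffices to show that the Cesàro limit $\bar P = \lim_n \frac1n\sum_{t=0}^{n-1} P_\pi^t$ has every row equal to $d_\pi^\top$. I would proceed in three steps.
\begin{enumerate}
\item The Cesàro limit exists for any finite stochastic matrix (the same fact behind \Cref{lem:prop 8.9.1}). Moreover, $\bar P P_\pi = \bar P$, so each row of $\bar P$ is a stationary distribution.
\item By part 1, each row of $\bar P$ is therefore exactly $d_\pi^\top$.
\item Consequently $\frac1n\sum_{t=1}^n \mu_t = \mu_1^\top \frac1n\sum_{t=0}^{n-1} P_\pi^t \to \mu_1^\top \mathbf{1} d_\pi^\top = d_\pi^\top$.
\end{enumerate}
Step 1 needs care: I must check that each row of $\bar P$ is a probability vector, i.e., that no mass is lost in the limit. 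Finiteness of $\calS$ makes this immediate, since each averaged row sums to one and the limit preserves this.

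The main obstacle is justifying the existence of the Cesàro limit without assuming aperiodicity, because the chain may be periodic. I would handle it by a direct argument. First, from a transient start the chain enters $\calS_C$ almost surely in finite time, with a geometrically decaying tail. Second, within $\calS_C$ the ergodic theorem for irreducible chains gives $\frac1n\sum_t \mathds{1}\{s_t = s\} \to d_C(s)$ almost surely from any start, and bounded convergence then yields convergence of the expectations. Combining these two facts via the strong Markov property at the hitting time of $\calS_C$, the finite entrance time contributes $O(1/n)$ to the averages, which vanishes in the limit. This proves part 2 for an arbitrary initial distribution $\mu_1$.
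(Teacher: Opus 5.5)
Your proof is correct, but it is organized differently from the paper's.

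\textbf{Part 1.} The paper removes the transient mass linear-algebraically. In block form, stationarity gives $(I-M_T^\top)d_{\pi,\calS_T}=0$, and invertibility of $I-M_T$ (Proposition~A.3 in \citet{puterman1994markov}) forces $d_{\pi,\calS_T}=0$. You instead average the stationarity identity over $n$ steps and use that transient states have finite expected visit counts, so the Cesàro averages of $P_\pi^{(t)}(s'|s)$ vanish. This trades the spectral fact for an elementary probabilistic one.

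\textbf{Part 2.} The paper computes the limit directly. Transient mass decays because the spectral radius of $M_T$ is below one. On $\calS_C$, it applies the almost-sure ergodic theorem after the hitting time of $\calS_C$ and passes to expectations by bounded convergence. Your main route identifies the limit abstractly instead: the Cesàro limit $\bar P$ has stationary rows, so the uniqueness you proved in part~1 forces every row to be $d_\pi^\top$. That route is shorter, reuses part~1, and needs neither the ergodic theorem nor bounded convergence. It only requires citing the existence of Cesàro limits for finite stochastic matrices (Appendix~A of \citet{puterman1994markov}, the same fact underlying Lemma~\ref{lem:prop 8.9.1}). That existence holds regardless of periodicity, so there is no real obstacle here.

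Note, though, that the ``direct argument'' you sketch for existence is essentially the paper's entire proof of part~2. It already yields the limit $d_\pi(s)$ on its own, so if you use it, your steps 1--3 become redundant. Pick one of the two routes rather than nesting them.
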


\begin{proof}

For simplicity, with some abuse of notation, let $P_\pi \in \bbR^{|\calS|\times|\calS|}$ denote the transition matrix, where $P_{\pi, ij}$ denotes the probability of transition from state $i$ to state $j$. Also, we use $d_\pi,\mu_t \in \bbR^{|\calS|}$ for the vector notation of the stationary distribution and the expected state distribution at $t$, respectively. 
Moreover, let $d_{\pi,\calS_C}, \mu_{t,\calS_C} \in \bbR^{|\calS_C|}$ denote subvectors such that $d_{\pi,\calS_C}(s) = d_\pi(s)$ and $\mu_{t,\calS_C} = \mu_{t}(s)$ for all $s\in \calS_C$. A similar rule is applied to $d_{\pi,\calS_T}, \mu_{t,\calS_T} \in \bbR^{|\calS_T|}$. 
Let $\bm{0}_{\calS}, \bm{1}_{\calS} \in \bbR^{|\calS|},\ \bm{0}_{\calS_C}, \bm{1}_{\calS_C} \in \bbR^{|\calS_C|}$, and $ \bm{0}_{\calS_T}, \bm{1}_{\calS_T} \in \bbR^{|\calS_T|}$ denote all-0 and all-1 vectors, respectively.

\vspace{5mm}\emph{Proof of the first statement. }
Without loss of generality, we can express $P_\pi$ as the following block matrix:
\begin{align*}
    P_\pi = \begin{bmatrix}
        M_C & 0 \\ M & M_T
    \end{bmatrix}
\end{align*}
where $M_T$ is the transition matrix between states in $\calS_T$, $M$ is from $\calS_T$ to $\calS_C$, and $M_C$ is between states in $\calS_C$. Note that a stationary distribution $d_\pi \in \bbR_+^{|\calS|}$ satisfies $P_\pi^\top d_\pi = d_\pi$ and $\bm{1}_{\calS}^\top d_\pi = 1$. Here, the condition $P_\pi^\top d_\pi = d_\pi$ can be rewritten as
\begin{align*}
    & (I - M_T^\top) d_{\pi,\calS_T} = \bm{0}_{\calS_T},
    \quad 
    M^\top d_{\pi,\calS_T} + M_C^\top d_{\pi,\calS_C} = d_{\pi,\calS_C}.
\end{align*}
It is known that $I-M_T$ is invertible, when $M_T$ is a transition matrix for transient states (Proposition A.3 in \citet{puterman1994markov}). Therefore, we have $d_{\pi,\calS_T} = \bm{0}_{\calS_T}$. Now, we focus on $d_{\pi,\calS_C}$ such that $M_C^\top d_{\pi,\calS_C} = d_{\pi,\calS_C}$ and $\bm{1}_{\calS_C}^\top d_{\pi,\calS_C} = 1$. Note that it has a unique solution that is $d_{\pi,\calS_C}(s) > 0$ for all $s\in \calS_C$ (Theorem 1.7.7 in \citet{norris1998markov}). This concludes the proof of the first statement.

\vspace{5mm}\emph{Proof of the second statement. }
First, we show that $\lim_{n\to\infty} \frac{1}{n}\sum_{t=1}^n\mu_t(s) = 0$ for $s\in \calS_T$. Note that $\mu_t = (P_\pi^\top)^{t-1}\mu_1$. Since $M_T$ is for transient states, its spectral radius is strictly less than $1$ (Proposition A.3 in \citet{puterman1994markov}). This leads to, as $t \to \infty$,
\begin{align}\label{eq:lem:MC: statement 2 transient}
    \lim_{t \to \infty} \mu_{t,\calS_T} = \lim_{t \to \infty} (M_T^\top)^{t-1}\mu_{1,\calS_T} \ = \ \bm{0}_{\calS_T}.
\end{align}
This implies that the limit of its average satisfies $\lim_{n\to\infty} \frac{1}{n}\sum_{t=1}^{n}\mu_t(s) = 0$ for all $s\in \calS_T$. Next, we consider states in $\calS_C$. Let $\bar\mu_t(s) = \mathds{1}\{s_t =s\}$. Let $\tau = \inf\{t \geq 1: s_t \in \calS_C\}$ denote the time of the first visit $\calS_C$. Since $\calS_T$ is a transient set and $\calS_C$ is the unique recurrent class, the chain visits $\calS_T$ only finite times a.s., thus, $\tau < \infty$ a.s. For $t \geq\tau$, since the transition only occurs in $\calS_C$, we can consider the Markov chain restricted to $\calS_C$, which is irreducible. Note that Theorem 1.10.2 in \citet{norris1998markov} implies that for $s\in \calS_C$,
\[
    \lim_{n\to \infty}\frac{1}{n-\tau + 1} \sum_{t=\tau}^n \bar\mu_{t}(s) \ = \ d_{\pi}(s) \quad \textnormal{a.s.}
\]
This leads to
\begin{align*}
    \lim_{n\to \infty} \frac{1}{n}\sum_{t=1}^n \bar \mu_{t}(s) = \lim_{n\to \infty} \left(\frac{1}{n}\sum_{t=1}^{\tau-1} \bar \mu_{t}(s) + \frac{n-\tau+1}{n}\frac{1}{n-\tau+1}\sum_{t=\tau}^n \bar \mu_{t}(s)\right) = d_{\pi}(s) \quad \textnormal{a.s.}
\end{align*}
Moreover, we know that $|\frac{1}{n}\sum_{t=1}^n \bar\mu_{t,\calS_C}| \leq 1$. Then by the bounded convergence theorem (Theorem 1.5.3 in \citet{durrett2019probability}), it follows that for $s\in \calS_C$,
\begin{align*}
    \lim_{n\to \infty} \frac{1}{n}\sum_{t=1}^n\bbE_{P_\pi}\left[\bar\mu_t(s) | s_1 \sim \mu_1\right] = d_{\pi}(s).
\end{align*}
Since $\bbE_{P_\pi}[\bar\mu_t(s)|s_1 \sim \mu_1] = \mu_t(s)$, this implies that $\lim_{n\to \infty} \frac{1}{n} \sum_{t=1}^n\mu_{t,\calS_C} = d_{\pi,\calS_C}$. By combining this with \eqref{eq:lem:MC: statement 2 transient}, we complete the proof of the second statement.
\end{proof}

\begin{lemma}\label{lem:hat P -> P}
    For $n \in \bbN$, let $r,g \in \bbR^n$ and $b \in \bbR$. Let $\calP\subseteq \bbR^n$ be a non-empty polytope. Let $\widehat\calP \subseteq \calP$ denote a subset of $\calP$, which is possibly not a polytope. Suppose that $\{q\in \widehat\calP: g^\top q \geq b\} \neq \varnothing$ and $\relint(\calP) \cup \ext(\calP) \subseteq \widehat\calP$. Then we have
    \begin{align}\label{eq:lem:hat P -> P}
    \begin{aligned}
        \sup_{q\in \widehat\calP}\quad & r^\top q \\
         \textnormal{s.t.} \quad& g^\top q \geq b 
    \end{aligned}
    \quad= \quad
    \begin{aligned}
        \max_{q\in \calP}\quad & r^\top q \\
         \textnormal{s.t.} \quad& g^\top q \geq b.
    \end{aligned}
    \end{align}
\end{lemma}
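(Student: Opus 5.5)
The inequality ``$\leq$'' in \eqref{eq:lem:hat P -> P} is immediate. Since $\widehat\calP \subseteq \calP$, every feasible point of the left problem is feasible for the right problem. Feasibility of the right LP follows from $\{q\in\widehat\calP: g^\top q\geq b\}\neq\varnothing$, and $\calP$ is a nonempty compact polytope, so the right-hand maximum is attained at some $q^\star\in\calP$ with $g^\top q^\star\geq b$. The work is in the reverse inequality: for every $\varepsilon>0$ we must exhibit $\hat q\in\widehat\calP$ with $g^\top\hat q\geq b$ and $r^\top\hat q\geq r^\top q^\star-\varepsilon$.

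We split into two cases according to the feasible set $F\triangleq\{q\in\calP: g^\top q\geq b\}$.

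\emph{Case 1: some point of $F$ lies in $\relint(\calP)$,} say $\bar q$ with $g^\top\bar q\geq b$. For $\theta\in(0,1]$ set $q_\theta=(1-\theta)q^\star+\theta\bar q$. By the standard fact that a segment from a point of $\calP$ to a relative interior point lies in $\relint(\calP)$ except possibly at its first endpoint, we get $q_\theta\in\relint(\calP)\subseteq\widehat\calP$. I would verify this fact directly from the paper's definition of relative interior, using the extension parameter $\epsilon>1$ available for $\bar q$. By linearity, $g^\top q_\theta\geq b$, and $r^\top q_\theta\to r^\top q^\star$ as $\theta\to0$. This gives the sup.

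\emph{Case 2: $F\cap\relint(\calP)=\varnothing$.} This is the main obstacle. Here the hyperplane $\{g^\top q=b\}$ only touches $\calP$ from outside the relative interior, so $g^\top q\leq b$ on all of $\calP$. To prove this, suppose some $y\in\calP$ has $g^\top y>b$. Take any $x\in\relint(\calP)$, which exists since nonempty polytopes have nonempty relative interior. Then points of the segment $[x,y]$ close to $y$ satisfy $g^\top>b$ and lie in $\relint(\calP)$ by the same segment fact, a contradiction. Hence $F=\{q\in\calP: g^\top q=b\}=\arg\max_{\calP} g^\top q$, which is a face of $\calP$, so $F$ is itself a polytope. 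Its extreme points are extreme points of $\calP$ and therefore lie in $\widehat\calP$.

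Now maximize $r^\top q$ over the polytope $F$. The maximum is attained at an extreme point $q^e$ of $F$, so $q^e\in\ext(\calP)\subseteq\widehat\calP$ and $g^\top q^e=b$. Hence the supremum on the left equals $r^\top q^e=r^\top q^\star$ and is even attained.

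The key lemmas I would state and prove inline are therefore: (i) the segment property of relative interiors under the paper's $\epsilon>1$ definition; (ii) that $\arg\max_{\calP} g^\top q$ is a face whose extreme points are extreme points of $\calP$; and (iii) that $\relint(\calP)\neq\varnothing$. Point (iii) could alternatively be obtained from Lemma \ref{lem:relint equiv} by averaging points witnessing each index of $I_+$.
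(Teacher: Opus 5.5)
Your proposal is correct and follows the paper's proof. It uses the same two cases: a segment toward a relative-interior feasible point with $\theta\to 0^+$, and, when no such point exists, the fact that $F$ is a face whose $r$-maximizing extreme point lies in $\ext(\calP)\subseteq\widehat\calP$. The differences are minor: you split on $g^\top \bar q \geq b$ where the paper uses the strict $g^\top \bar q > b$, you get $g^\top q\leq b$ on $\calP$ from the segment property instead of the paper's closure identity $\calP=\overline{\relint(\calP)}$, and you prove the segment fact from the $\epsilon>1$ definition rather than citing Rockafellar's Theorem 6.1.
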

\begin{proof}
    Let $F_\calP, F_{\widehat\calP}$ denote the feasible sets of each optimization problems, i.e.,
    \[
        F_\calP = \{q \in \calP: g^\top q \geq b\}, \ F_{\widehat\calP} = \{q \in \widehat\calP: g^\top q \geq b\}.
    \]
    Since $F_{\widehat\calP} \subseteq F_\calP$ and $F_{\widehat\calP}$ is non-empty by the assumption, we know that $F_\calP$ is a non-empty polytope. Then the right-hand side of \eqref{eq:lem:hat P -> P} becomes an LP, and its maximum is attainable by some $q^* \in \calP$. We define $P^*$ as
    \begin{equation*}
        P^* = r^\top q^* = \max_{q\in \calP}\ r^\top q \quad\textnormal{ s.t. }\quad g^\top q\geq b.
    \end{equation*}
    Since $F_{\widehat\calP}\subseteq F_\calP$, it is clear that 
    \[
        P^* \geq \sup_{q\in \widehat\calP}\ r^\top q \quad\textnormal{ s.t. }\quad g^\top q\geq b.
    \]
    Now, it is sufficient to prove that $P^* \leq \sup_{q\in \widehat\calP}\ r^\top q \quad\textnormal{ s.t. }\quad g^\top q\geq b.$

    \paragraph{Case 1: $\exists\bar q \in \{q\in\relint(\calP): g^\top q > b\}$.}
    In this case, for $\theta\in(0,1)$, we define
    \[
        \bar q_\theta = \theta\bar q + (1-\theta) q^*.
    \]
    Since $q^*\in\calP$, $\bar q \in \relint(\calP)$, we have $\bar q_\theta \in \relint(\calP)\subseteq \widehat\calP$ for $\theta \in (0,1)$ (Theorem 6.1 in \citet{rockafellar1970convex}). Moreover, it is clear that $g^\top \bar q_\theta > b$ for $\theta\in (0,1)$. Therefore, $\bar q_\theta \in F_{\widehat\calP}$ for $\theta\in(0,1)$. By taking $\theta \to 0+$, we have
    \begin{align*}
        \sup_{q \in F_{\widehat\calP}} r^\top q \geq \lim_{\theta \to 0+} r^\top \bar q_\theta = r^\top q^*.
    \end{align*}

    \paragraph{Case 2: $\{q\in\relint(\calP): g^\top q > b\} = \varnothing$.}
    Note that this implies that $\relint(\calP) = \{q \in \relint(\calP): g^\top q \leq b\}$. Based on this fact,
    \begin{align*}
        \calP = \overline{\relint(\calP)} = \overline{\{q\in \relint(\calP): g^\top q\leq b\}} \subseteq \{q \in \overline{\relint(\calP)}: g^\top q\leq b\} = \{q \in \calP: g^\top q\leq b\}
    \end{align*}
    where the second last relation follows from $\overline{A \cap B} \subseteq \overline A \cap \overline B$. This implies that $\calP = \{q\in\calP:g^\top q \leq b\}$. Moreover, it follows that
    \[
        F_{\calP} = \calP \cap \{q\in\bbR^n: g^\top q \geq b\} = \{q\in\calP: g^\top q = b\}.
    \]
    Thus, $F_{\calP}$ is a face of the polytope $\calP$, which is also a polytope. Then the maximizer of $\max_{q\in F_{\calP}} r^\top q$, denoted by $q^*$, is attained at an extreme point of $F_{\calP}$, i.e., $q^* \in \ext(F_{\calP})$. Additionally, we know that extreme points of a face of a polytope are also extreme points of that polytope, i.e., $\ext(F_{\calP}) \subseteq \ext(\calP)$. These imply that $q^* \in \ext(\calP)\subseteq \widehat\calP$. Note that $g^\top q^* = b$. Therefore, we have
    \[
        \sup_{q\in F_{\widehat\calP}}\ r^\top q \geq r^\top q^* = P^*.
    \]
    In both cases, we prove $\sup_{q\in F_{\widehat\calP}}\ r^\top q \geq P^*.$ This completes the proof.
\end{proof}

\begin{lemma}\label{lem:relint}
    Let $\calM = (\calS,\calA, P, r)$ be a weakly communicating MDP, and let $s_1 \in\calS$ be any initial state. For any $q^* \in \relint(\calP)$, there exists $\pi^*\in \PiSR$ that satisfies the following: (1) $P_{\pi^*}$ is unichain, (2) $q^* = \occ{\pi^*}{s_1}$.
\end{lemma}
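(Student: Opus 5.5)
Given $q^*\in\relint(\calP)$, I will read off the policy from the state marginal, $d^*(s)=\sum_a q^*(s,a)$. On the support $\calS^+=\{s: d^*(s)>0\}$ I set $\pi^*(a|s)=q^*(s,a)/d^*(s)$. Off the support, $\pi^*$ will be the action distribution of the weakly communicating policy $\bar\pi$, i.e., the stationary randomized policy inducing a single closed irreducible class $\calS_{\bar C}$ with all other states transient under every stationary policy. With this choice, the balance constraints in $\calP$ say exactly that $P_{\pi^*}^\top d^*=d^*$ with $\bm 1^\top d^*=1$, and $q^*(s,a)=d^*(s)\pi^*(a|s)$ for all $(s,a)$.

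\textbf{Step 1: identify the support of $q^*$ via \Cref{lem:relint equiv}.} First I would show that the set $I_+$ of coordinates that can be positive in $\calP$ is exactly $\calS_{\bar C}\times\calA$, i.e., all actions at the states of the communicating class $\calS_{\bar C}$. If some $q\in\calP$ had $q(s,a)>0$ with $s\notin \calS_{\bar C}$, then the policy extracted from $q$ would have stationary mass on $s$. Such a state would be recurrent for that chain, contradicting the fact that these states are transient under all stationary policies. Conversely, for each $s\in\calS_{\bar C}$ and each $a$, I would build a stationary policy that plays $a$ at $s$ with positive probability and mixes with $\bar\pi$ elsewhere. Since $\bar\pi$ reaches every state of $\calS_{\bar C}$ from everywhere, this policy is still unichain with recurrent class covering $s$, so its stationary occupation charges $(s,a)$. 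By \Cref{lem:relint equiv}, $q^*(s,a)>0$ for all $s\in\calS_{\bar C}$ and all $a$, and $q^*=0$ elsewhere. In particular $\calS^+=\calS_{\bar C}$ and $\pi^*(a|s)>0$ for every $a$ on $\calS_{\bar C}$.

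\textbf{Step 2: unichain property.} On $\calS_{\bar C}$, $\pi^*$ has full support, so every transition that is possible under $\bar\pi$ is possible under $\pi^*$. Off $\calS_{\bar C}$, $\pi^*$ coincides with $\bar\pi$. The accessibility graph of $P_{\pi^*}$ therefore contains that of $P_{\bar\pi}$. Hence every state reaches $\calS_{\bar C}$, and states inside $\calS_{\bar C}$ communicate. The set $\calS_{\bar C}$ is also closed under $\pi^*$, since stationary mass cannot escape and $d^*$ is stationary with full support there; alternatively, escaping would make an outside state recurrent, contradicting transience. So $\calS_{\bar C}$ is the single closed irreducible class and the rest are transient, i.e., $P_{\pi^*}$ is unichain.

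\textbf{Step 3: occupation measure.} By \Cref{lem:MC}, the unique stationary distribution is $d^*$ (it is stationary and supported on $\calS_{\bar C}$). The Ces\`aro limit of state distributions from $s_1$ equals $d^*$ regardless of the initial state. Hence $\occ{\pi^*}{s_1}(s,a)=d^*(s)\pi^*(a|s)=q^*(s,a)$.

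The main obstacle is Step 1: rigorously showing that every state–action pair on the communicating class is attainable by some $q\in\calP$, and that no pair outside it is. The delicate point is the argument that the mixed policy stays unichain and charges $(s,a)$; I would handle it by taking $\pi=(1-\epsilon)\bar\pi+\epsilon\,\delta_a$ at $s$ and checking irreducibility on $\calS_{\bar C}$ directly.
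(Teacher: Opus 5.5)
Your proof is correct, and its skeleton matches the paper's: extract $\pi^*$ from $q^*$, use \Cref{lem:relint equiv} to transfer positivity from the weakly communicating policy to $q^*$, show the result is unichain, and finish with \Cref{lem:MC}. The difference is how much of the support of $q^*$ you pin down.

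You characterize $I_+$ completely as $\calS_{\bar C}\times\calA$. That requires both a transience argument and the perturbed policy $(1-\epsilon)\bar\pi+\epsilon\,\delta_a$. The paper needs much less. The stationary occupation $q^{\bar\pi}(s,a)=\bar\pi(a|s)\,d_{\bar\pi}(s)$ of the weakly communicating policy itself lies in $\calP$. So every pair $(s,a)$ with $s\in\calS_{\bar C}$ and $\bar\pi(a|s)>0$ is in $I_+$ immediately, and hence $q^*(s,a)>0$ there. That alone ensures every $\bar\pi$-path inside $\calS_{\bar C}$ survives under $\pi^*$. The step you flag as the main obstacle can therefore be bypassed entirely.

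For closedness, the paper argues differently. If $\calS_{\bar C}$ could reach a state that is transient under every stationary policy, then all of $\calS_{\bar C}$ would be transient, and the finite chain would have no recurrent state. Your argument, via $d^*$ being stationary and vanishing off $\calS_{\bar C}$, is equally valid but relies on the first half of your Step 1. Likewise, setting $\pi^*=\bar\pi$ off the support is harmless but unnecessary: those states are transient under any stationary policy, so the paper simply picks arbitrary actions there.

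What your route buys is a sharper structural fact. Every relative-interior point of $\calP$ has support exactly $\calS_{\bar C}\times\calA$, so $\pi^*$ is fully supported on the recurrent class. The cost is an extra construction that the lemma itself does not require.
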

\begin{proof}
The outline of the proof is as follows. Given $q^* \in \relint(\calP)$, we construct the stationary randomized policy $\pi^*$. Next, based on properties of relative interior and weakly communicating MDPs, we show $P_{\pi^*}$ is unichain. Finally, we prove that the occupation measure $\pi^*$ is equal to $q^*$, i.e., $\occ{\pi^*}{s_1} = q^*$.

\emph{Step 1: Construct a stationary policy $\pi^*$.} 
    We define $\pi^*$ as 
    \begin{align*}
        \pi^*(a|s) = \frac{q^*(s,a)}{\sum_{a}q^*(s,a)} \quad \textnormal{ if }\quad \sum_{a}q^*(s,a) > 0.
    \end{align*}
    Otherwise, for $s$ such that $\sum_a q^*(s,a) = 0$, take an arbitrary action. Then, we define
    \[
        d^*(s) = \sum_{a} q^*(s,a).
    \]    
Next, we introduce a useful property that $d^*(s)$ is a stationary distribution under $P_{\pi^*}$. This is because for any $s\in\calS$,
    \begin{align*}
        \sum_{s'}P_{\pi^*}(s|s')d^*(s') 
        &= \sum_{s'}\sum_{a'}P(s|s',a')\pi^*(a'|s')d^*(s')\\
        &= \sum_{s': d^*(s')>0}\sum_{a'}P(s|s',a')\pi^*(a'|s')d^*(s') + \sum_{s': d^*(s')=0}\sum_{a'}P(s|s',a')\pi^*(a'|s')d^*(s')\\
        &= \sum_{s': d^*(s')>0}\sum_{a'}P(s|s',a')\frac{q^*(s',a')}{d^*(s')}d^*(s') + \sum_{s': d^*(s')=0}\sum_{a'}P(s|s',a')\pi^*(a'|s')d^*(s')\\
        &= \sum_{s': d^*(s')>0}\sum_{a'}P(s|s',a')\frac{q^*(s',a')}{d^*(s')}d^*(s') + \sum_{s': d^*(s')=0}\sum_{a'}P(s|s',a')q^*(s',a') \\
        &= \sum_{s'}\sum_{a'}P(s|s',a')q^*(s',a')\\
        &= \sum_{a}q^*(s,a) \\
        &= d^*(s)
    \end{align*}
    where the third equality is due to the definition of $\pi^*$, the fourth equality follows from $\pi^*(a'|s')d^*(s') = q^*(s',a')=0$ for all $s'$ such that $d^*(s') = 0$, and the second last equality follows from $q^* \in \calP$.
    
    \emph{Step 2: Show $P_{\pi^*}$ is unichain.}
    Now, we show $P_{\pi^*}$ has a set of single closed irreducible states with transient states. 

    Since $\calM$ is a weakly communicating MDP, there exists a stationary randomized policy $\check\pi$ whose induced Markov chain $P_{\check\pi}$ has a set of single closed irreducible states, denoted by $\calS_C$. Moreover, $\calS_T = \calS \backslash \calS_C$ is the set of states that are transient under all stationary policies. By \Cref{lem:MC}, there exists a unique stationary distribution $d_{\check\pi}$ such that $d_{\check\pi}(s) > 0$ for $s\in \calS_C$ and $d_{\check\pi}(s) = 0$ for $s \in \calS_T$.
    Based on this, we define
    \begin{align*}
        q^{\check\pi}(s,a) = \check\pi(a|s)d_{\check\pi}(s).
    \end{align*}
    Note that $q^{\check\pi} \in \calP$ can be shown as follows. We have $\sum_{s,a} q^{\check\pi}(s,a) = \sum_s d_{\check\pi}(s) \sum_a \check\pi(a|s) = 1$ and $q^{\check\pi}(s,a)\geq 0$. Moreover, 
    \begin{align*}
        \sum_{s',a'}P(s|s',a')q^{\check\pi}(s',a') = \sum_{s',a'}P(s|s',a')\check\pi(a'|s')d_{\check\pi}(s') = \sum_{s'}P(s|s')d_{\check\pi}(s') = d_{\check\pi}(s)
    \end{align*}
    where the last equality follows from that $d_{\check\pi}$ is a stationary distribution. Furthermore, it is trivial that $d_{\check\pi}(s) = \sum_{a} q^{\check\pi}(s,a)$. These imply that $q^{\check\pi} \in \calP$.

    Consider a Markov chain $P_{\pi^*}$ induced by $\pi^*$. First, we prove that $\calS_C$ is closed irreducible under $P_{\pi^*}$. Since $\calS_C$ is closed irreducible under $P_{\check\pi}$, for any $s', s'' \in \calS_C$, there exists a sequence of states in $\calS_C$, denoted by $(s_1, s_2, \ldots, s_m)$ for some $m \in \bbN$, such that $s' = s_1, s''=s_m$, and $P_{\check\pi}(s_{i+1}|s_i) > 0$ for all $i = 1, \ldots, m-1$. It can be rewritten as $\sum_{a}P(s_{i+1}|s_i, a) \check\pi(a|s_i) >0$ for $i=1,\ldots, m-1$. This implies that $\check\pi (a_{s_i}| s_i) > 0$ and $P(s_{i+1}|s_i,a_{s_i}) > 0$ for some $a_{s_i}$ for all $i=1,\ldots, m-1$. By the definition of $q^{\check\pi}$, for each $i = 1,\ldots, m-1$, we have $q^{\check\pi}(s_i, a_{s_i}) > 0$. By \Cref{lem:relint equiv}, $q^*(s_i,a_{s_i}) > 0$ for all $i=1,\ldots,m-1$. This implies that
    \begin{align*}
        P_{\pi^*}(s_{i+1}|s_i) = \sum_a P(s_{i+1}|s_i,a)\pi^*(a|s_i) = \sum_a P(s_{i+1}|s_i,a)\frac{q^*(s_i,a)}{\sum_a q^*(s_i,a)} > 0.
    \end{align*}
    Since $P_{\pi^*}(s_{i+1}|s_i) > 0$ for all $i =1,\ldots,m-1$, we have any $s', s'' \in \calS_C$ communicate under $P_{\pi^*}$. Moreover, $\calS_C$ is closed. If not, there exist $s \in \calS_C, \ s' \in \calS_T$ such that $s \to s'$. Since $s'$ is transient, so is $s$, as it is known that a state $s$ is transient if $s \to s'$ for some transient state $s'$ (Appendix A.3 in \citet{puterman1994markov}). Since all states in $\calS_C$ communicate, $\calS_C$ becomes transient. This implies that all states in $\calS$ are transient, which contradicts the fact that any finite Markov chain has at least one positive recurrent class. Finally, this implies that $P_{\pi^*}$ induces a single closed irreducible set $\calS_C$ and transient set $\calS_T$.

    \emph{Step 3: Show $\occ{\pi^*}{s_1} = q^*$.}
    Let $d_{\pi^*}$ denote the stationary distribution of $P_{\pi^*}$. Since $P_{\pi^*}$ induces a single recurrent class and a transient set, by \Cref{lem:MC},  for all $s\in \calS$,
    \begin{align*}
        \lim_{T\to \infty}\frac{1}{T} \sum_{t=1}^T\bbE_{P_{\pi^*}} [\mathds 1\{s_t=s\} | s_1] = d_{\pi^*}(s).
    \end{align*}
    Multiplying both sides by $\pi^*(a|s)$ yields
    \begin{align*}
        \lim_{T\to \infty} \frac{1}{T} \sum_{t=1}^T \bbE_{{\pi^*}}[\mathds 1\{s_t=s, a_t=a\}|s_1] = d_{\pi^*}(s)\pi^*(a|s).
    \end{align*}
    To further deduce, we derive the following property: for any $s\in\calS$,
    \begin{align*}
        d_{\pi^*}(s) = d^*(s).
    \end{align*}
    Note that this property follows from that (i) by \Cref{lem:MC}, $P_{\pi^*}$ has the unique stationary distribution $d_{\pi^*}(s)$ and (ii) by the argument in Step 1, $d^*(s) = \sum_a q^*(s,a)$ is also a stationary distribution of $P_{\pi^*}$. This leads to for all $(s,a) \in \calS \times \calA$,
    \[
        d^*(s)\pi^*(a|s) = q^*(s,a),
    \]
    since for $s$ such that $d^*(s) >0$ we have $d^*(s)\pi^*(a|s) = d^*(s)\frac{q^*(s,a)}{d^*(s)} = q^*(s,a)$ , and for $s$ such that $d^*(s)=0$ we have $d^*(s)\pi^*(a|s) = q^*(s,a) = 0$.
    
    Finally, we show that
    \begin{align*}
        \lim_{T\to \infty} \frac{1}{T} \sum_{t=1}^T \bbE_{{\pi^*}}[\mathds 1\{s_t=s, a_t=a\}|s_1] = q^*(s,a).
    \end{align*}
    Recall that the left-hand side is the definition of $\occ{\pi^*}{s_1}(s,a)$. This implies that for any $q^*\in \relint(\calP)$, there exists a stationary policy $\pi^*$ whose occupation measure satisfies $\occ{\pi^*}{s_1} = q^*$.
\end{proof}

\subsection{Proof of Lemma~\ref{lem:relint & ext}}\label{subsec:proof of lem:relint & ext}

\textbf{Proof of $\bm{ \calPSR{s_1} \subseteq \calP}. \quad$} 
The statement is a direct consequence of Proposition 3.1 in \citet{mannor2005empirical}.

\textbf{Proof of $\bm{\relint(\calP) \subseteq \calPSR{s_1}}. \quad$} 
By \Cref{lem:relint}, for any $q^* \in \relint(\calP)$, there exists $\pi^* \in \PiSR$ such that $q^* = \occ{\pi^*}{s_1}$. Since $\occ{\pi^*}{s_1}$ is an occupation measure induced by stationary policy $\pi^*$ starting from $s_1$, $\occ{\pi^*}{s_1} \in \calPSR{s_1}$. This concludes the proof.

\textbf{Proof of $\bm{\ext(\calP) \subseteq \calPSR{s_1}}. \quad$}
    Fix $q^* \in \ext(\calP)$. Then there exists a vector $r \in \bbR^{|\calS|\times |\calA|}$ such that $\argmax_{q \in \calP} r^\top q = \{q^*\}$. For such $r$, we consider the following MDP:
    \begin{align*}
        \sup_{\pi\in\PiSR} J_r^{\pi}(s_1).
    \end{align*}
    It can be rewritten as
    \begin{align*}
        \sup_{q\in\calPSR{s_1}} r^\top q.
    \end{align*}
    Note that since $\relint(\calP)$ is the relative interior of $\calP$, 
    \begin{align*}
        \sup_{q\in\relint(\calP)}r^\top  q = \max_{q \in \calP} r^\top q = r^\top q^*.
    \end{align*}
    We know that $\relint(\calP) \subseteq \calPSR{s_1}$ and $\calPSR{s_1} \subseteq \calP$. This implies that
    \begin{align*}
        r^\top q^*= \sup_{q \in \relint(\calP)} r^\top q \leq \sup_{q \in \calPSR{s_1}} r^\top q \leq \sup_{q\in\calP}r^\top q = r^\top q^*.
    \end{align*}
    This implies that
    \begin{align*}
        \sup_{\pi\in\PiSR} J_r^\pi(s_1) = \sup_{q \in \calPSR{s_1}} r^\top q = r^\top q^*.
    \end{align*}
    By \Cref{lem: J_r = sup J_r}, there exists $\pi^* \in \PiSD$ such that $J_r^{\pi^*}(s_1) = \sup_{\pi \in \PiSR}J_r^\pi(s_1)$. This implies that $r^\top \occ{\pi^*}{s_1} = J_r^{\pi^*}(s_1) = r^\top q^*$. Moreover, we have $\occ{\pi^*}{s_1} \in \calPSR{s_1} \subseteq \calP$. Then $\occ{\pi^*}{s_1}$ is also an optimal solution to $\max_{q\in\calP} r^\top q$. However, we pick $r$ that induces a unique optimal solution, it follows that $\occ{\pi^*}{s_1} = q^*$. Therefore, any $q^* \in \ext(\calP)$ is achievable by some $\pi^* \in \PiSR$, i.e.,
    \begin{equation}\label{eq:ext}
        \ext(\calP) \subseteq \calPSR{s_1}.
    \end{equation}


\subsection{Proof of Theorem~\ref{thm:strong duality}}\label{subsec:proof of thm:strong duality}

Recall that 
\[
    \calP=\Biggr\{q \in \bbR_+^{|\calS|\times |\calA|}: \sum_{s,a} q(s,a) = 1,\ \sum_{s',a'}P(s|s',a')q(s',a') = \sum_{a}q(s,a) \ \forall s\Biggl\}
\]
and $\calPSR{s_1}$ is the set of occupation measures induced by stationary policies. By \Cref{lem:relint & ext}, 
\begin{align}\label{eq:strong duality int+ext}
    \relint(\calP) \cup \ext(\calP) \subseteq \calPSR{s_1} \subseteq \calP.
\end{align}
Based on this, we introduce a well-known fact from the linear programming literature. Since $\calP$ is a non-empty polytope and $\textnormal{ext}(\calP) \subseteq \calPSR{s_1}$, for any bounded function $\ell: \calS\times \calA \to \bbR$
\begin{align}\label{eq:any}
    \sup_{q \in \calPSR{s_1}} \ell^\top q = \max_{q \in \calPSR{s_1}} \ell^\top q = \max_{q\in \calP} \ell^\top q.
\end{align}
By the definition of $\calPSR{s_1}$, the primal problem \eqref{eq:average-reward CMDP} can be rewritten as
\begin{equation}\label{eq:strong duality 2}
\begin{array}{ccc}
\begin{aligned}
    \sup_{\pi\in\PiSR}\quad & J_r^\pi(s_1) \\
    \text{s.t.}\quad & J_g^\pi(s_1) \geq b
\end{aligned}
&\quad=\quad&
\begin{aligned}
    \sup_{q \in \calPSR{s_1}}\quad & r^\top q  \\
    \text{s.t.}\quad &  g^\top q \geq b
\end{aligned}
\end{array}
\end{equation}
Note that \eqref{eq:strong duality int+ext} and that $\{q \in \calPSR{s_1}: g^\top q \geq b\}$ is non-empty by the assumption. Then by \Cref{lem:hat P -> P},
\begin{equation*}
    \begin{array}{ccc}
\begin{aligned}
    \sup_{q \in \calPSR{s_1}}\quad & r^\top q  \\
    \text{s.t.}\quad &  g^\top q \geq b
\end{aligned}
&\quad=\quad&
\begin{aligned}
    \max_{ q \in \calP}\quad & r^\top q  \\
    \text{s.t.}\quad &  g^\top q \geq b
\end{aligned}
\end{array}
\end{equation*}
Then we observe that the right-hand side is an LP, as $\calP$ is a polytope. Moreover, we know that its feasible region $\{q\in \calP: g^\top q \geq b\}$ is non-empty, because $\{q\in \calPSR{s_1}: g^\top q \geq b\}$ is non-empty and $\{q\in \calPSR{s_1}: g^\top q \geq b\} \subseteq \{q\in \calP: g^\top q \geq b\}$. Then by the strong duality of LP (\Cref{lem:LP strong duality}), $\lambda^* \in \argmin_{\lambda\geq 0} \max_{q \in \calP} r^\top q + \lambda(g^\top q -b)$ and
\begin{equation}\label{eq:strong duality 5}
\begin{array}{ccc}
\begin{aligned}
    \max_{ q \in \calP}\quad & r^\top q  \\
    \text{s.t.}\quad &  g^\top q \geq b
\end{aligned}
&\quad=\quad&
\begin{aligned}
    \min_{\lambda\geq0}\max_{ q \in \calP}\quad & r^\top q + \lambda(g^\top q - b).
\end{aligned}
\end{array}
\end{equation}
By \eqref{eq:any}, for any $\lambda\geq 0$, we have $\max_{q \in \calP}r^\top q + \lambda(g^\top q - b) = \sup_{q \in \calPSR{s_1}} r^\top q + \lambda (g^\top q - b)$. Thus, the minimizer $\lambda^*$ to the left-hand side is also a minimizer to the right-hand side. Taking $\min_{\lambda\geq0}$ on both sides yields
\begin{equation}\label{eq:strong duality 6}
    \min_{\lambda\geq0}\max_{ q \in \calP}\  r^\top q + \lambda(g^\top q - b) = \min_{\lambda\geq0} \sup_{q \in \calPSR{s_1}} r^\top q + \lambda (g^\top q - b)
\end{equation}
and 
\begin{equation}\label{eq:strong duality lambda min}
    \lambda^* \in \argmin_{\lambda\geq0}\sup_{q \in \calPSR{s_1}} r^\top q + \lambda (g^\top q - b).
\end{equation}
By the definition of $\calPSR{s_1}$, we have
\begin{equation}\label{eq:strong duality 7}
\begin{array}{ccc}
\begin{aligned}
    \min_{\lambda\geq0} \sup_{\widehat q \in \calPSR{s_1}} r^\top \widehat q + \lambda (g^\top \widehat q - b)
\end{aligned}
&\quad=\quad&
\begin{aligned}
\min_{\lambda\geq0}\sup_{\pi\in\PiSR}  J_r^\pi(s_1) + \lambda(J_g^\pi(s_1) - b)
\end{aligned}
\end{array}
\end{equation}
The existence of optimal dual variable is guaranteed by \eqref{eq:strong duality lambda min}. Moreover, by \eqref{eq:strong duality 2} - \eqref{eq:strong duality 7}, we prove that
\[
        \begin{array}{ccc}
             \begin{aligned}
                 \sup_{\pi \in \PiSR} \quad& J_r^\pi(s_1) \\
                 \textnormal{s.t.} \quad& J_g^\pi(s_1) \geq b
             \end{aligned}
             &=&
             \min_{\lambda\geq0}\sup_{\pi\in\PiSR}  J_r^\pi(s_1) + \lambda(J_g^\pi(s_1) - b).
        \end{array}
        \]

\section{Missing proofs for Section~\ref{sec:average-reward linear CMDP}}\label{appendix:Regret analysis}

\subsection{Proof of Lemma~\ref{lem:optimal dual variable bound}} 
Since the primal problem~\eqref{eq:average-reward CMDP} is assumed to be feasible, we can use \Cref{thm:strong duality}. Thus, by strong duality, there exists $\lambda^* \in \argmin_{\lambda\geq 0} D(\lambda)$, where $D(\lambda) = \sup_{\pi\in\PiSR}\ J_r^\pi(s_1) + \lambda(J_g^\pi(s_1)-b)$. Moreover, $J_r^* = D(\lambda^*)$, where $J_r^*(s_1) = \sup_{\pi \in \PiSR} J_r^\pi(s_1) \ \text{s.t} \ J_g^\pi(s_1) \geq b$. Then it follows that
\begin{align*}
    J_r^*(s_1) = D(\lambda^*) 
    &= \sup_{\pi\in\PiSR} J_r^\pi(s_1) + \lambda^*(J_g^\pi(s_1)-b)\\
    &\geq J_r^{\bar\pi}(s_1) + \lambda^*(J_g^{\bar\pi}(s_1) - b) \\
    &\geq J_r^{\bar\pi}(s_1) + \lambda^*\slater
\end{align*}
where $\bar \pi$ is the Slater policy satisfying $J_g^{\bar\pi}(s_1) \geq b+ \slater$. Since $0 \leq J_r^*(s_1), J_r^{\bar\pi}(s_1) \leq 1$, it can be rewritten as
\begin{align*}
    \lambda^* \leq \frac{J_r^*(s_1) - J_r^{\bar\pi}(s_1)}{\slater} \leq \frac{1}{\slater}.
\end{align*}

\subsection{Proof of Lemma~\ref{lem:good event}}

Recall that
\begin{align*}
    &\bfw_{h+1}^k = \bmSigma_k^{-1} \sum_{\tau=1}^{k-1} \sum_{j=1}^H \phi(s_j^\tau, a_j^\tau) (V_{h+1}^k(s_{j+1}^\tau) - \min_{s'} V_{h+1}^k(s'))\\
    &\bfw_{h+1}^{k,*} = \sum_{s\in \calS} \psi(s)(V_{h+1}^k(s) - \min_{s'}V_{h+1}^k(s')).
\end{align*}
By the definition of linear MDP, we have $\phi(s,a)^\top \bfw_{h+1}^{k,*} = PV_{h+1}^k(s,a) - \min_{s'}V_{h+1}^k(s')$, as $\sum_{s'\in \calS} \phi(s,a)^\top \psi(s') = \sum_{s'}P(s'|s,a) =1$. Note that $V_{h+1}^k(s) - \min_{s'} V_{h+1}^k(s') \leq \spn(V_{h+1}^k) \leq 2\zeta$ for all $s\in\calS$. Then, by Lemmas \ref{lem:hong2024 lemma 12} and \ref{lem:hong2024 lemma 13}, we have
\begin{align}\label{eq:bfw l2-norm bound}
    \|\bfw_{h+1}^k\|_2 \leq 2\zeta\sqrt{dT},\quad \|\bfw_{h+1}^{k,*}\|_2 \leq 2\zeta\sqrt{d}.
\end{align}

Fix $(h,k) \in [H] \times [K]$. For simplicity, let $\bar V_{h+1}^k(s) = V_{h+1}^k(s) - \min_{s'} V_{h+1}^k(s')$. Note that
\begin{align*}
    &\left|\phi(s,a)^\top (\bfw_{h+1}^k - \bfw_{h+1}^{k,*})\right|\\
    &= \left|\phi(s,a)^\top\bmSigma_k^{-1} \left(\sum_{\tau=1}^{k-1}\sum_{j=1}^H \phi(s_j^\tau, a_j^\tau)(V_{h+1}^k(s_{j+1}^\tau) - \min_{s'} V_{h+1}^k(s')) - \bmSigma_k \bfw_{h+1}^{k,*}\right)\right|\\
    &\leq \|\phi(s,a)\|_{\bmSigma_k^{-1}} \left\|\sum_{\tau=1}^{k-1}\sum_{j=1}^H \phi(s_j^\tau,a_j^\tau) \left((V_{h+1}^k(s_{j+1}^\tau) - \min_{s'}V_{h+1}^k(s')) - \phi(s_j^\tau,a_j^\tau)^\top \bfw_{h+1}^{k,*}\right)\right\|_{\bmSigma_k^{-1}} \\
    &\quad+ \left|\phi(s,a)^\top \bmSigma_k^{-1} \bfw_{h+1}^{k,*}\right|\\
    &= \|\phi(s,a)\|_{\bmSigma_k^{-1}} \left\|\sum_{\tau=1}^{k-1}\sum_{j=1}^H \phi(s_j^\tau,a_j^\tau) \left(\bar V_{h+1}^k(s_{j+1}^\tau) - P\bar V_{h+1}^k(s_j^\tau, a_j^\tau))\right)\right\|_{\bmSigma_k^{-1}} \\
    &\quad+ \left|\phi(s,a)^\top \bmSigma_k^{-1} \bfw_{h+1}^{k,*}\right|
\end{align*}
where the inequality follows from the fact that $\bmSigma_k = I + \sum_{\tau=1}^{k-1}\sum_{j=1}^H \phi(s_j^\tau,a_j^\tau)\phi(s_j^\tau,a_j^\tau)^\top$ and the triangle inequality. By \eqref{eq:bfw l2-norm bound}, we have
\begin{align*}
    \left|\phi(s,a)^\top \bmSigma_k^{-1} \bfw_{h+1}^{k,*}\right| \leq \|\bfw_{h+1}^{k,*}\|_{\bmSigma_k^{-1}}\|\phi(s,a)\|_{\bmSigma_k^{-1}} \leq 2\zeta\sqrt{d}\|\phi(s,a)\|_{\bmSigma_k^{-1}}.
\end{align*}
By \Cref{lem:concentration}, with probability at least $1-\delta / T$,
\begin{align*}
    &\left\|\sum_{\tau=1}^{k-1}\sum_{j=1}^H \phi(s_j^\tau,a_j^\tau) \left(\bar V_{h+1}^k(s_{j+1}^\tau) - P\bar V_{h+1}^k(s_j^\tau, a_j^\tau))\right)\right\|_{\bmSigma_k^{-1}}^2 \\
    &\leq 16\zeta^2 \left[\frac{d}{2}\log(T+1) + \log\frac{\calN_\epsilon T}{\delta}\right] + 8T^2\epsilon^2.
\end{align*}
where $\calN_\epsilon$ denotes the $\epsilon$-covering number of the value function class, which is the set of all possible $\bar V_{h+1}^k$. Note that the centering mapping $V(\cdot) \mapsto V(\cdot) - \min_{s'}V(s')$ is $2$-Lipschitz. Therefore, this transformation affects the covering number only up to constant factors. Then, to bound $\log(\calN_\epsilon)$, we first deduce that
\begin{align*}
    r(s,a) + \lambda_k g(s,a) + \phi(s,a)^\top \bfw_{h+1}^k = \phi(s,a)^\top \left(\theta_r + \lambda_k \theta_g + \bfw_{h+1}^k\right).
\end{align*}
It follows that
\begin{align*}
    \|\theta_r + \lambda_k \theta_g + \bfw_{h+1}^k\|_2 \leq \sqrt{d} + \frac{2}{\slater}\sqrt{d} + 2\zeta\sqrt{dT} \leq \frac{3}{\slater}\sqrt{d} + 2\zeta\sqrt{dT}.
\end{align*}
Then, by \Cref{lem:hong2024 lemma 15}, 
\begin{align*}
    \log(\calN_\epsilon) 
    &\leq d\log\left(1 + \frac{8\sqrt{d}(3/\slater + 2\zeta\sqrt{T})}{\epsilon}\right) + \log(1+4H(1+2/\gamma)/\epsilon) \\
    &\quad+ d^2\log\left(1 + 8d^{1/2}\beta^2/\epsilon^2\right).
\end{align*}
It follows that
\begin{align*}
    &\left\|\sum_{\tau=1}^{k-1}\sum_{j=1}^H \phi(s_j^\tau,a_j^\tau) \left(\bar V_{h+1}^k(s_{j+1}^\tau) - P\bar V_{h+1}^k(s_j^\tau, a_j^\tau))\right)\right\|_{\bmSigma_k^{-1}}^2 \\
    &\leq 16\zeta^2 \Biggl[\frac{d}{2}\log(T+1) + d\log\left(1 + \frac{8\sqrt{d}(3/\slater + 2\zeta\sqrt{T})}{\epsilon}\right) \\
    &\quad + \log(1+4H(1+2/\gamma)/\epsilon) + d^2\log\left(1 + 8d^{1/2}\beta^2/\epsilon^2\right) +\log\frac{T}{\delta}\Biggr] + 8T^2\epsilon^2.
\end{align*}
By taking $\epsilon = d/T$, it follows that
\begin{align*}
    &\left\|\sum_{\tau=1}^{k-1}\sum_{j=1}^H \phi(s_j^\tau,a_j^\tau) \left(\bar V_{h+1}^k(s_{j+1}^\tau) - P\bar V_{h+1}^k(s_j^\tau, a_j^\tau))\right)\right\|_{\bmSigma_k^{-1}}^2 \\
    &\leq 16\zeta^2 \Biggl[\frac{d}{2}\log(2T) + d\log\left(16T^2(3/\slater + 2\zeta) \right) + \log(8H(1+2/\gamma)T) + d^2\log\left(16\beta^2 T^2\right) +\log\frac{T}{\delta}\Biggr] + 8d^2 \\
    &\leq 72\zeta^2 d^2\log\left(16HT^2\beta^2(3/\slater + 2\zeta)/\delta \right) + 8d^2.
\end{align*}
Consider $\beta = c_\beta (\zeta +1)d\log(16HT^2(1/\slater + 2\zeta)/\delta)$. Then there exists $c_\beta = \bigO(1)$ such that
\begin{align*}
    \sqrt{72\zeta^2 d^2\log\left(16HT^2\beta^2(3/\slater + 2\zeta)/\delta \right) + 8d^2} + 2\zeta\sqrt{d} \leq \beta.
\end{align*}
This implies that by taking $\beta = \bigO((\zeta+1)d)$, where $\bigO(\cdot)$ hides polynomial factors in $\log(dT|\calA|\zeta/(\delta\gamma))$, we have with probability at least $1-\delta/T$, for any $(s,a) \in \calS \times\calA$,
\begin{align*}
    |\phi(s,a)^\top(\bfw_{h+1}^{k} - \bfw_{h+1}^{k,*})| \leq  \beta\|\phi(s,a)\|_{\bmSigma_k^{-1}}.
\end{align*}
By taking union bound over all $(h,k) \in [H] \times [K]$, the statement holds for all $(h,k)$, with probability at least $1-\delta$.

\subsection{Proof of Lemma~\ref{lem:optimism}}
Fix $k$. For all $s \in \calS$, we have 
\[
    V_{r,H+1}^{*}(s) + \lambda_k  V_{g,H+1}^{*}(s) = V_{H+1}^k(s) =0.
\]
Assume that $V_{r,h+1}^{*}(s) + \lambda_k  V_{g,h+1}^{*}(s) \leq V_{h+1}^k(s)$. Note that for any $(s,a)$
\begin{align*}
    &Q_{r,h}^{*}(s,a) + \lambda_k  Q_{g,h}^{*}(s,a) \\
    &= r(s,a) + \lambda_k g(s,a) +  P \left(V_{r,h+1}^{*} + \lambda_k  V_{g,h+1}^{*}\right)(s,a) \\
    &\leq r(s,a) + \lambda_k g(s,a) +  P V_{h+1}^k(s,a) \\
    &\leq r(s,a) + \lambda_k g(s,a) + \phi(s,a)^\top\bfw_{h+1}^k+\min_{s'}V_{h+1}^k(s') \\
    &\quad+ |PV_{h+1}^k(s,a)- \min_{s'} V_{h+1}^k(s') -\phi(s,a)^\top\bfw_{h+1}^k| \\
    &= r(s,a) + \lambda_k g(s,a) + \phi(s,a)^\top\bfw_{h+1}^k+\min_{s'}V_{h+1}^k(s') \\
    &\quad+ |\phi(s,a)^\top(\bfw_{h+1}^{k,*} - \bfw_{h+1}^k)| \\
    &\leq r(s,a) + \lambda_k g(s,a) + \phi(s,a)^\top\bfw_{h+1}^k+\min_{s'}V_{h+1}^k(s') + \beta\|\phi(s,a)\|_{\bmSigma_k^{-1}} \\
    &\leq \left(r(s,a) + \lambda_k g(s,a) + \phi(s,a)^\top\bfw_{h+1}^k+\min_{s'}V_{h+1}^k(s') + \beta\|\phi(s,a)\|_{\bmSigma_k^{-1}}\right)\wedge H(1+\frac{2}{\slater}) \\
    &= Q_{h}^k(s,a)
\end{align*}
where the second inequality is due to the triangle inequality, the second equality follows from the definition of $\bfw_{h+1}^{k,*}$, the third inequality follows from \Cref{lem:good event}, and the fourth inequality follows from the fact that $Q_{r,h}^{*}(s,a) + \lambda_k  Q_{g,h}^{*}(s,a) \leq H(1+2/\gamma)$. Moreover,
\[
    V_{r,h}^{*}(s) + \lambda_k  V_{g,h}^{*}(s) \leq \max_{a} \ Q_{r,h}^{*}(s,a) + \lambda_k  Q_{g,h}^{*}(s,a) \leq \max_a \ Q_{h}^k(s,a) = \widetilde V_{h}^k(s).
\]
Note that $J_r^{*}(s), J_g^{*}(s)$ are constants in $s \in \calS$. Then, by \Cref{lem:chen2022 lemma 4}, we have for any $s,s' \in \calS$ and $h\in [H]$,
\[
    |V_{r,h}^{*}(s) - V_{r,h}^{*}(s')| \leq |V_{r,h}^{*}(s) - (H-h+1)J_r^{*}| + |V_{r,h}^{*}(s') - (H-h+1)J_r^{*}| \leq 2\spn(v_r^*).
\]
This implies that, for all $h\in [H]$, $\spn(V_{r,h}^{*}) \leq 2\spn(v_r^*)$, and similarly, we have $\spn(V_{g,h}^{*}) \leq 2\spn(v_g^*)$. Moreover, it follows that $\spn(V_{r,h}^{*} + \lambda_k  V_{g,h}^{*}) \leq \spn(V_{r,h}^{*}) + \lambda_k  \spn(V_{g,h}^{*}) \leq 2\spn(v_r^*) + \frac{4}{\slater} \spn(v_g^*) = 2\zeta$. Then we have
\begin{align*}
    V_{r,h}^{*}(s) + \lambda_k  V_{g,h}^{*}(s) \leq \widetilde V_{h}^k(s) \wedge (\min_{s'}\widetilde V_{h}^k(s') + 2\zeta) = V_{h}^k(s).
\end{align*}
By induction, we have for all $(s,h,k)$,
\begin{align*}
     V_{r,h}^{*}(s) + \lambda_k  V_{g,h}^{*}(s) \leq V_{h}^k(s)
\end{align*}
as desired.

\subsection{Decomposition}

In this subsection, we provide the detailed decomposition~\eqref{eq:decomp} of $\Regret(T) + \lambda \Violation(T)$, which is introduced in \Cref{subsec:analysis}. Throughout the decomposition procedure, we assume \Cref{lem:good event}, which holds with high probability. Then $\Regret(T) + \lambda \Violation(T)$ can be decomposed as follows.
\begin{align}\label{eq:decomp main}
    &\sum_{t=1}^T \left(J_r^*(s_1)  - r(s_t,a_t)\right) + \lambda \sum_{t=1}^T (b-g(s_t,a_t)) \\
    &= \sum_{k=1}^K\sum_{h=1}^H \left(J_r^*(s_1)  - r(s_h^k,a_h^k)\right) + \lambda \sum_{k=1}^K\sum_{h=1}^H (b-g(s_h^k,a_h^k)) \notag\\
    &\leq \sum_{k=1}^K\underbrace{\sum_{h=1}^H \left[\left(J_r^{*}(s_1)  - r(s_h^k,a_h^k)\right) + \lambda_k  \left(J_g^{*}(s_1)- g(s_h^k,a_h^k)\right)\right]}_{\textnormal{(I)}} + \underbrace{\sum_{k=1}^K (\lambda-\lambda_k )\left(Hb-\sum_{h=1}^H g(s_h^k,a_h^k)\right)}_{\textnormal{(II)}}\notag
\end{align}
where the inequality follows from the fact that $J_g^*(s_1) \geq b$ and $\lambda_k \geq 0$.
Note that (II) can be simply bounded as follows. Due to the dual update rule, for any $\lambda \in [0, \frac{2}{\slater}],$
\begin{align*}
    (\lambda_{k+1} - \lambda)^2 
    &\leq \left(\lambda_k + \eta \left(\sum_{h=1}^H (b-g(s_h^k,a_h^k))\right) - \lambda\right)^2 \\
    &= (\lambda_k - \lambda)^2 + \eta^2\left(\sum_{h=1}^H (b-g(s_h^k,a_h^k))\right)^2 + 2\eta(\lambda_k-\lambda)\left(\sum_{h=1}^H (b-g(s_h^k,a_h^k))\right).
\end{align*}
where the first inequality is due to the fact that $[\cdot]_{[0,\frac{2}{\slater}]}$ is a contraction operator. By rearranging it and summing over $k=1,\ldots,K$, we have
\begin{align}\label{eq:decomp (II)}
\begin{aligned}
    \textnormal{(II)}
    &\leq \frac{\lambda^2}{2\eta} + \frac{\eta \sum_{k=1}^K\left(\sum_{h=1}^H (b-g(s_h^k,a_h^k))\right)^2}{2}\\
    &\leq \frac{\lambda^2}{2\eta} + \frac{\eta KH^2}{2}.
\end{aligned}
\end{align}
where the second inequality follows from the fact that $|b - g(s_h^k,a_h^k)| \leq 1$. Next, we focus on bounding (I). Note that our design of $Q_h^k$ satisfies that
\begin{align}\label{eq:Q_h^k uppder bound}
    Q_h^k(s_h^k,a_h^k) \leq r(s_h^k,a_h^k) + \lambda_k  g(s_h^k,a_h^k) + \phi(s_h^k,a_h^k)^\top\bfw_{h+1}^k + \min_{s'}V_{h+1}^k(s') +  \beta\|\phi(s_h^k,a_h^k)\|_{\bmSigma_k^{-1}}.
\end{align}
Let $V_{r,h}^{*},V_{g,h}^{*}$ denote the value functions with respect to $\pi^{*}$. Then (I) can be bounded as follows.
\begin{align*}
    \textnormal{(I)}&=\sum_{h=1}^H \left(J_r^{*}(s_1)  - r(s_h^k,a_h^k)\right) +  \sum_{h=1}^H \lambda_k  \left(J_g^{*}(s_1)- g(s_h^k,a_h^k)\right) \\
    &\leq \sum_{h=1}^H \left(J_r^{*}(s_1) + \lambda_k  J_g^{*}(s_1) - Q_h^k(s_h^k,a_h^k) +\phi(s_h^k,a_h^k)^\top\bfw_{h+1}^k+ \min_{s'}V_{h+1}^k(s') + \beta\|\phi(s_h^k,a_h^k)\|_{\bmSigma_k^{-1}} \right) \\
    &\leq HJ_r^{*}(s_1) - V_{r,1}^{*}(s_1^k) + \lambda_k  \left(HJ_g^{*}(s_1) - V_{g,1}^{*}(s_1^k)\right) \\
    &\quad+ V_{r,1}^{*}(s_1^k) + \lambda_k  V_{g,1}^{*}(s_1^k)+\sum_{h=1}^H \left( - Q_h^k(s_h^k,a_h^k) +  PV_{h+1}^k(s_h^k,a_h^k)\right) + \sum_{h=1}^H 2\beta\|\phi(s_h^k,a_h^k)\|_{\bmSigma_k^{-1}}
\end{align*}
where the first inequality follows from \eqref{eq:Q_h^k uppder bound}, and the last inequality is because \Cref{lem:good event} implies that $\phi(s_h^k,a_h^k)^\top\bfw_{h+1}^k + \min_{s'} V_{h+1}^k(s') \leq PV_{h+1}^k(s_h^k,a_h^k) + \beta\|\phi(s_h^k,a_h^k)\|_{\bmSigma_k^{-1}}$. Note that $J_r^*,J_g^*$ are assumed to be constant across $s\in \calS$ by \Cref{assum:zeta}. Then, by \Cref{lem:chen2022 lemma 4}, we have for all $h\in [H]$,
\begin{align}\label{eq:chen2022 lemma 4}
    |(H-h+1)J_r^{*} - V_{r,h}^{*}(s)| \leq \spn(v_r^*), \ |(H-h+1)J_g^{*} - V_{g,h}^{*}(s)| \leq \spn(v_g^*).
\end{align}
By taking $h=1$ for \eqref{eq:chen2022 lemma 4}, it leads to 
\begin{align*}
    &HJ_r^{*}(s_1) - V_{r,1}^{*}(s_1^k) + \lambda_k  \left(HJ_g^{*}(s_1) - V_{g,1}^{*}(s_1^k)\right) \\
    &\quad+ V_{r,1}^{*}(s_1^k) + \lambda_k  V_{g,1}^{*}(s_1^k)+\sum_{h=1}^H \left( - Q_h^k(s_h^k,a_h^k) +  P  V_{h+1}^k(s_h^k,a_h^k)\right) + \sum_{h=1}^H 2\beta\|\phi(s_h^k,a_h^k)\|_{\bmSigma_k^{-1}}\\
    &\leq \spn(v_r^*) + \lambda_k  \spn(v_g^*) \\
    &\quad+ V_{r,1}^{*}(s_1^k) + \lambda_k  V_{g,1}^{*}(s_1^k) + \sum_{h=1}^H \left(- Q_h^k(s_h^k,a_h^k) +  P  V_{h+1}^k(s_h^k,a_h^k)\right) + \sum_{h=1}^H 2\beta\|\phi(s_h^k,a_h^k)\|_{\bmSigma_k^{-1}} \\
    &\leq \spn(v_r^*) + \frac{2}{\slater}  \spn(v_g^*) \\
    &\quad+ \underbrace{V_{r,1}^{*}(s_1^k) + \lambda_k  V_{g,1}^{*}(s_1^k) + \sum_{h=1}^H \left(- Q_h^k(s_h^k,a_h^k) +  P  V_{h+1}^k(s_h^k,a_h^k)\right)}_{\textnormal{(III)}} + \sum_{h=1}^H 2\beta\|\phi(s_h^k,a_h^k)\|_{\bmSigma_k^{-1}}.
\end{align*}
where the last inequality follows from $\lambda_k \leq \frac{2}{\slater}$ for all $k$. Term (III) can be further bounded as follows.
\begin{align*}
    \textnormal{(III)} &= V_{r,1}^{*}(s_1^k) + \lambda_k  V_{g,1}^{*}(s_1^k) + \sum_{h=1}^H \left(- Q_h^k(s_h^k,a_h^k) +  P  V_{h+1}^k(s_h^k,a_h^k)\right)\\
    &= V_{r,1}^{*}(s_1^k) + \lambda_k  V_{g,1}^{*}(s_1^k) - V_1^k(s_1^k) + V_1^k(s_1^k) + \sum_{h=1}^H \left(- Q_h^k(s_h^k,a_h^k) +  P  V_{h+1}^k(s_h^k,a_h^k)\right) \\
    &= V_{r,1}^{*}(s_1^k) + \lambda_k  V_{g,1}^{*}(s_1^k) - V_1^k(s_1^k)\\
    &\quad+V_1^k(s_1^k) - Q_1^k(s_1^k,a_1^k) +  P V_{2}^k(s_1^k,a_1^k) - V_2^k(s_2^k)+ V_2^k(s_2^k)  + \sum_{h=2}^H \left(-Q_h^k(s_h^k,a_h^k) +  P  V_{h+1}^k (s_h^k,a_h^k)\right) \\
    &\quad \vdots\\
    &= V_{r,1}^{*}(s_1^k) + \lambda_k  V_{g,1}^{*}(s_1^k) - V_1^k(s_1^k)+\sum_{h=1}^H \left(V_h^k(s_h^k) - Q_h^k(s_h^k,a_h^k)\right) + \sum_{h=1}^H \left( P V_{h+1}^k(s_h^k,a_h^k) - V_{h+1}^k(s_{h+1}^k)\right)
\end{align*}
This implies that for each $k \in [K]$,
\begin{align}\label{eq:decomp (I)}
\begin{aligned}
    \textnormal{(I)} 
    &\leq \spn(v_r^*) + \frac{2}{\slater}  \spn(v_g^*) \\
    &\quad+ V_{r,1}^{*}(s_1^k) + \lambda_k  V_{g,1}^{*}(s_1^k) - V_1^k(s_1^k)+\sum_{h=1}^H \left(V_h^k(s_h^k) - Q_h^k(s_h^k,a_h^k)\right) + \sum_{h=1}^H \left( P V_{h+1}^k(s_h^k,a_h^k) - V_{h+1}^k(s_{h+1}^k)\right) \\
    &\quad+ \sum_{h=1}^H 2\beta\|\phi(s_h^k,a_h^k)\|_{\bmSigma_k^{-1}}.
\end{aligned}    
\end{align}
Note that taking sum over $k=1,\ldots,K$ to \eqref{eq:decomp (I)} yields the first term in \eqref{eq:decomp main}. In addition to this, by applying \eqref{eq:decomp (II)}, we have for any $\lambda \in [0, \frac{2}{\slater}]$,
\begin{align*}
    &\sum_{t=1}^T \left(J_r^*(s_1)  - r(s_t,a_t)\right) + \lambda \sum_{t=1}^T (b-g(s_t,a_t)) \\
    &\leq K \left(\spn(v_r^*) + \frac{2}{\slater}\spn(v_g^*)\right) + \sum_{k=1}^K\sum_{h=1}^H 2\beta\|\phi(s_h^k,a_h^k)\|_{\bmSigma_k^{-1}} \\
    &\quad+  \sum_{k=1}^K\left(V_{r,1}^{*}(s_1^k) + \lambda_k  V_{g,1}^{*}(s_1^k) - V_1^k(s_1^k)\right)\\
    &\quad+\sum_{k=1}^K \sum_{h=1}^H \left(V_h^k(s_h^k) - Q_h^k(s_h^k,a_h^k)\right) + \sum_{k=1}^K\sum_{h=1}^H \left( P V_{h+1}^k(s_h^k,a_h^k) - V_{h+1}^k(s_{h+1}^k)\right)\\
    &\quad+\frac{\lambda^2}{2\eta} + \frac{\eta KH^2}{2}.
\end{align*}
Note that $V_h^k(s_h^k) \leq \widetilde V_h^k(s_h^k) = Q_h^k(s_h^k,a_h^k)$. It follows that $\sum_{k=1}^K \sum_{h=1}^H \left(V_h^k(s_h^k) - Q_h^k(s_h^k,a_h^k)\right) \leq 0$. This concludes the proof.

\subsection{Proof of Lemma~\ref{lem:Regret + lambda Volation}}
Assume that the statement of \Cref{lem:good event} holds, which occurs with probability at least $1-\delta$. By \eqref{eq:decomp}, we have
\begin{align}\label{eq:Regret + lambda Volation decomp}
\begin{aligned}
    \Regret(T) + \lambda \Violation(T)
    &\leq K\zeta+ \underbrace{\sum_{k=1}^K\sum_{h=1}^H 2\beta\|\phi(s_h^k,a_h^k)\|_{\bmSigma_k^{-1}}}_{\textnormal{(I)}} \\
    &\quad+  \underbrace{\sum_{k=1}^K\left(V_{r,1}^{*}(s_1^k) + \lambda_k  V_{g,1}^{*}(s_1^k) - V_1^k(s_1^k)\right)}_{\textnormal{(II)}}\\
    &\quad+ \underbrace{\sum_{k=1}^K\sum_{h=1}^H \left( P V_{h+1}^k(s_h^k,a_h^k) - V_{h+1}^k(s_{h+1}^k)\right)}_{\textnormal{(III)}} +\frac{\lambda^2}{2\eta} + \frac{\eta KH^2}{2}.
\end{aligned}
\end{align}

By \Cref{lem:optimism}, we have 
\begin{equation}\label{eq:Regret + lambda Volation (II)}
    \textnormal{(II)}\leq 0.
\end{equation}
Moreover, since (III) is the sum of a martingale difference sequence, it can be bounded using the Azuma-Hoeffding inequality. Note that $\spn(V_{h+1}^k) \leq 2\spn(v_r^*) + \frac{4}{\slater}\spn(v_g^*) = 2\zeta$. Then, with probability at least $1-\delta$,
\begin{align}\label{eq:Regret + lambda Volation (III)}
    \textnormal{(III)} \leq 2\zeta\sqrt{2KH\log(2/\delta)}.
\end{align}
Now, we bound (I). The standard elliptical potential lemma cannot be directly applied due to the definition of $\bmSigma_k$, i.e., $\bmSigma_{k+1} \neq \bmSigma_k + \phi(s_h^k,a_h^k)\phi(s_h^k,a_h^k)^\top$. Therefore, our proof uses a similar technique from the proof of Theorem 5 in \citet{wei2021learning}, which decomposes the desired sum into two terms as follows.
\begin{align*}
    &\sum_{k=1}^K\sum_{h=1}^H\|\phi(s_h^k,a_h^k)\|_{\bmSigma_k^{-1}} \\
    &= \sum_{k: 2\det(\bmSigma_k) \geq \det(\bmSigma_{k+1})}\sum_{h=1}^H\|\phi(s_h^k,a_h^k)\|_{\bmSigma_k^{-1}}  + \sum_{k: 2\det(\bmSigma_k) < \det(\bmSigma_{k+1})}\sum_{h=1}^H\|\phi(s_h^k,a_h^k)\|_{\bmSigma_k^{-1}}.
\end{align*}
Note that 
\begin{align*}
    \sum_{k: 2\det(\bmSigma_k) \geq \det(\bmSigma_{k+1})}\sum_{h=1}^H\|\phi(s_h^k,a_h^k)\|_{\bmSigma_k^{-1}} 
    &\leq \sqrt{2}\sum_{k: 2\det(\bmSigma_k) \geq \det(\bmSigma_{k+1})}\sum_{h=1}^H\|\phi(s_h^k,a_h^k)\|_{\bmSigma_{k+1}^{-1}} \\
    &\leq \sqrt{2T}\sqrt{\sum_{k=1}^K\sum_{h=1}^H\|\phi(s_h^k,a_h^k)\|^2_{\bmSigma_{k+1}^{-1}}}.
\end{align*}
Since $\bmSigma_{k+1} = I + \sum_{\tau=1}^k \sum_{h=1}^H \phi(s_h^\tau,a_h^\tau)\phi(s_h^\tau,a_h^\tau)^\top \succeq \bar\bmSigma_{k,h}$, where $\bar\bmSigma_{k,h}$ is defined as $\bar\bmSigma_{k,h} = I + \sum_{\tau=1}^{k-1}\sum_{h'=1}^H \phi(s_{h'}^\tau, a_{h'}^\tau)\phi(s_{h'}^\tau, a_{h'}^\tau)^\top + \sum_{h''=1}^h \phi(s_{h''}^k,a_{h''}^k)\phi(s_{h''}^k,a_{h''}^k)^\top$. Then it follows that
\begin{align*}
    \sum_{k=1}^K\sum_{h=1}^H\|\phi(s_h^k,a_h^k)\|^2_{\bmSigma_{k+1}^{-1}} \leq \sum_{k=1}^K\sum_{h=1}^H\|\phi(s_h^k,a_h^k)\|^2_{\bar\bmSigma_{k,h}^{-1}} \leq 2\log\frac{\det(\bmSigma_{K+1})}{\det(\bmSigma_1)} \leq 2d\log(T+1).
\end{align*}
where the first inequality follows from that $\|\cdot\|_{\bmSigma_{k+1}^{-1}} \leq \|\cdot\|_{\bar\bmSigma_{k,h}^{-1}}$, the second inequality follows from \Cref{lem:elliptical potential}, and the last inequality follows from that $\bmSigma_1 = I$ and
\begin{align*}
    \det(\bmSigma_{K+1}) \leq \|\bmSigma_{K+1}\|_2^d \leq \left(1 + \sum_{k,h}\|\phi(s_h^k,a_h^k)\|_2\right)^d \leq (T+1)^d.
\end{align*}
It follows that
\begin{align*}
    \sum_{k: 2\det(\bmSigma_k) \geq \det(\bmSigma_{k+1})}\sum_{h=1}^H\|\phi(s_h^k,a_h^k)\|_{\bmSigma_k^{-1}}  \leq 2\sqrt{dT\log(T+1)}.
\end{align*}
Moreover, by \Cref{lem:doubling trick boundt}, $|\{k\in [K]: 2\det(\bmSigma_k) < \det(\bmSigma_{k+1})\}| \leq \frac{3}{2}d\log (1+T)$. Since $\|\phi(s_h^k,a_h^k)\|_{\bmSigma_k^{-1}} \leq 1$, it follows that
\begin{align*}
    \sum_{k: 2\det(\bmSigma_k) < \det(\bmSigma_{k+1})}\sum_{h=1}^H\|\phi(s_h^k,a_h^k)\|_{\bmSigma_k^{-1}} \leq \frac{3}{2}dH\log (1+T).
\end{align*}
Then we have
\begin{align*}
    \sum_{k=1}^K\sum_{h=1}^H\|\phi(s_h^k,a_h^k)\|_{\bmSigma_k^{-1}} \leq 2\sqrt{dT\log(T+1)} + \frac{3}{2}dH\log (1+T).
\end{align*}
Thus, we have
\begin{align}\label{eq:Regret + lambda Volation (I)}
    \textnormal{(I)} \leq 4\beta\sqrt{dT\log(T+1)} + 3\beta dH\log (1+T).
\end{align}
By applying \eqref{eq:Regret + lambda Volation (II)}, \eqref{eq:Regret + lambda Volation (III)}, and \eqref{eq:Regret + lambda Volation (I)} to \eqref{eq:Regret + lambda Volation decomp}, for all $\lambda \in [0, \frac{2}{\gamma}]$,
\begin{align*}
    \Regret(T) + \lambda \Violation(T) 
    &\leq 1 + K\zeta + 4\beta\sqrt{dT\log(T+1)} + 3\beta dH\log (1+T) \\
    &\quad+ 2\zeta\sqrt{2KH\log(2/\delta)} + \frac{\lambda^2}{2\eta} + \frac{\eta KH^2}{2}\\
    &\leq 1 + K\zeta + 4\beta\sqrt{dT\log(T+1)} + 3\beta dH\log (1+T) \\
    &\quad+ 2\zeta\sqrt{2KH\log(2/\delta)} + \frac{(2/\gamma)^2}{2\eta} + \frac{\eta KH^2}{2}.
\end{align*}
Recall that $K = T^{2/3}$, $H = T^{1/3}$, $\beta = \bigO((\zeta+1)d)$, $\eta = \frac{2}{\gamma\sqrt{KH^2}} $. It follows that for all $\lambda \in [0,\frac{2}{\gamma}]$
\[
    \Regret(T) + \lambda \Violation(T) = \bigO\left(\left(\zeta + \frac{2}{\gamma}\right) T^{2/3} + (\zeta+1)(\sqrt{d^3 T} + d^2T^{1/3})\right).
\]

\subsection{Proof of Lemma~\ref{lem:regret lower bound}}
By \Cref{thm:strong duality} and \Cref{lem:optimal dual variable bound}, there exists a bounded optimal dual variable $\lambda^*$. Given $\lambda^*$, consider a reward function $r +\lambda^*(g-b):\calS\times\calA \to \bbR$ such that $[r +\lambda^*(g-b)](s,a) = r(s,a) + \lambda^*(g(s,a)-b)$ for each $(s,a) \in \calS\times\calA$. Then, by the Bellman optimality condition for weakly communicating unconstrained MDPs, there exists $J_{\lambda^*}^*(s_1) \in \bbR$, $v_{\lambda^*}^*:\calS \to \bbR$ such that for any $(s,a) \in \calS \times\calA$,
\begin{align}\label{eq:lem:regret lower bound 1}
    r(s,a)+\lambda^*(g(s,a)-b) + Pv_{\lambda^*}^*(s,a) - J_{\lambda^*}^*(s_1) \leq v_{\lambda^*}^*(s).
\end{align}
Next, we show that $J_r^*(s_1) = J_{\lambda^*}^*(s_1)$. Note that
\begin{align*}
    D(\lambda^*) 
    &= \sup_{\pi\in\PiSR} J_r^\pi(s_1) + \lambda^*(J_g^\pi(s_1) - b) \\
    &= \sup_{\pi\in\PiSR} J_{r+\lambda^* (g-b)}^{\pi}(s_1) \\
    &= \max_{\pi\in\PiSR} J_{r+\lambda^* (g-b)}^{\pi}(s_1) \\
    &= J_{\lambda^*}^*(s_1)
\end{align*}
where the second equality follows from that for any stationary policy $J_{r+\lambda^*(g-b)}^\pi(s_1) = (r+\lambda^*(g - b\bm{1}))^\top \occ{\pi}{s_1} = r^\top \occ{\pi}{s_1}+\lambda^*(g^\top \occ{\pi}{s_1} -b) = J_r^\pi(s_1) + \lambda^*(J_g^\pi(s_1) -b)$ with the all-1 vector $\bm{1}\in \bbR^{|\calS|\times|\calA|}$, and the third equality is due to the fact that an optimal stationary policy is attainable for weakly communicating MDPs. Moreover, by \Cref{thm:strong duality}, we have $J_r^*(s_1) = D(\lambda^*)$. Therefore, it follows that $J_r^*(s_1) = J_{\lambda^*}^*(s_1)$ as desired. Applying this to \eqref{eq:lem:regret lower bound 1}, it can be written as
\[
    r(s,a)+\lambda^*(g(s,a)-b) + Pv_{\lambda^*}^*(s,a) - J_{r}^*(s_1) \leq v_{\lambda^*}^*(s).
\]
Given the trajectory $\{(s_t,a_t)\}_{t=1}^T$ generated by our algorithm, taking $s=s_t,a=a_t$ yields
\begin{align*}
    r(s_t,a_t) - J_{r}^*(s_1) 
    &\leq \lambda^*(b-g(s_t,a_t)) + v_{\lambda^*}^*(s_{t}) - Pv_{\lambda^*}^*(s_t,a_t)\\
    &= \lambda^*(b-g(s_t,a_t)) + v_{\lambda^*}^*(s_{t+1}) - Pv_{\lambda^*}^*(s_t,a_t) + v_{\lambda^*}^*(s_t) - v_{\lambda^*}^*(s_{t+1})
\end{align*}
Summing over $t=1,\ldots,T$ yields
\begin{align*}
    \sum_{t=1}^T (r(s_t,a_t) - J_{r}^*(s_1)) 
    &\leq \lambda^* \sum_{t=1}^T (b-g(s_t,a_t)) \\
    &\quad+ \underbrace{\sum_{t=1}^Tv_{\lambda^*}^*(s_{t+1}) - Pv_{\lambda^*}^*(s_t,a_t)}_{\textnormal{(I)}} + \underbrace{\sum_{t=1}^Tv_{\lambda^*}^*(s_t) - v_{\lambda^*}^*(s_{t+1})}_{\textnormal{(II)}}.
\end{align*}
For (I), since it is a martingale difference sequence, by the Azuma-Hoeffding inequality, we have 
$\textnormal{(I)}\leq \spn(v_{\lambda^*}^*)\sqrt{T\log(1/\delta)}$ with probability at least $1-\delta$. For (II), since it is a telescoping sum, we have $\textnormal{(II)}\leq v_{\lambda^*}^*(s_1) - v_{\lambda^*}^*(s_{T+1}) \leq \spn(v_{\lambda^*}^*)$. Multiplying both sides by $(-1)$, with probability at least $1-\delta$,
\begin{align*}
    \sum_{t=1}^T (J_r^*(s_1)-r(s_t,a_t)) \geq -\lambda^* \sum_{t=1}^T (b-g(s_t,a_t)) - \spn({v_{\lambda^*}^*})\sqrt{2T\log(1/\delta)} - \spn({v_{\lambda^*}^*}).
\end{align*}
Since $\Regret(T) = \sum_{t=1}^T (J_r^*(s_1) - r(s_t,a_t))$ and $\Violation(T) = \sum_{t=1}^T (b - g(s_t,a_t))$, this completes the proof.

\subsection{Proof of Theorem~\ref{thm:regret analysis}}
Suppose that \Cref{lem:good event} holds, which occurs with probability at least $1-\delta$. Then, by \Cref{lem:Regret + lambda Volation}, for any $\lambda \in [0,\frac{2}{\gamma}]$, with probability at least $1-\delta$,
\begin{align}\label{eq:thm2 0}
    \sum_{t=1}^T \left(J_r^*  - r(s_t,a_t)\right) + \lambda \sum_{t=1}^T (b-g(s_t,a_t)) =\bigO\left(\left(\zeta + \frac{2}{\gamma}\right) T^{2/3} + (\zeta+1)(\sqrt{d^3 T} + d^2T^{1/3})\right).
\end{align}
By taking $\lambda=0$,
\begin{align*}
    \sum_{t=1}^T \left(J_r^*  - r(s_t,a_t)\right) = \bigO\left(\left(\zeta + \frac{2}{\gamma}\right) T^{2/3} + (\zeta+1)(\sqrt{d^3 T}+ d^2T^{1/3})\right).
\end{align*}
If $\sum_{t=1}^T (b-g(s_t,a_t)) \leq 0$, then the proof is finished. Otherwise, we show a violation upper bound as follows. With probability at least $1-\delta$, by \Cref{lem:regret lower bound},
\begin{align}\label{eq:thm2 1}
\begin{aligned}
    &\sum_{t=1}^T \left(J_r^*  - r(s_t,a_t)\right) + \frac{2}{\gamma} \sum_{t=1}^T (b-g(s_t,a_t)) \\
    &\geq \left(\frac{2}{\gamma}-\lambda^*\right) \sum_{t=1}^T (b-g(s_t,a_t)) - \spn({v_{\lambda^*}^*})\sqrt{T\log(1/\delta)} - \spn({v_{\lambda^*}^*})\\
    &\geq \frac{1}{\gamma} \sum_{t=1}^T (b-g(s_t,a_t)) - \spn({v_{\lambda^*}^*})\sqrt{T\log(1/\delta)} - \spn({v_{\lambda^*}^*})
\end{aligned}
\end{align}
where the second inequality follows from $\lambda^* \leq 1/\slater$ (\Cref{lem:optimal dual variable bound}) and taking $\lambda=\frac{2}{\slater}$. By union bound, \Cref{lem:good event}, \eqref{eq:thm2 0}, and \eqref{eq:thm2 1} hold with probability at least $1-3\delta$. Finally, with probability at least $1-3\delta$, we have
\begin{align*}
    \sum_{t=1}^T(b-g(s_t,a_t)) = \bigO\left(\left(\gamma\zeta + 2\right) T^{2/3} + \gamma(\zeta+1)(\sqrt{d^3 T} + d^2T^{1/3} )+\gamma\spn({v_{\lambda^*}^*})\sqrt{T}\right).
\end{align*}

\section{Numerical experiments}\label{appendix:numerical}
We evaluate \Cref{alg:main} on a tabular CMDP, constructed by modifying the MDP of \citet{he2022near} to incorporate constraints and an infinite-horizon setting. The state space is $\mathcal{S}=\{0,\ldots,H+1\}$, and the action space is $\mathcal{A}=\{-1,1\}^{m-1}$. For each state $s \in \{0,\ldots,H-1\}$ and action $a \in \mathcal{A}$, the reward and constraint functions are defined as $r(s,a)=0.4 \cdot \mathrm{score}(a)$ and $g(s,a)=\mathrm{score}(a)$, where $\mathrm{score}(a)$ denotes the fraction of positive components in $a$. State $H$ yields reward 0 and state $H+1$ yields reward 1; both states incur zero constraint value. For each $s \in \{0, \cdots, H-1\}$, the process transitions to state $s+1$ with probability $0.95- 0.01 \cdot \mathbf{1}_{m-1}^\top a$ and transitions to state $H+1$ otherwise. The terminal/reset states $H$ and $H+1$ transition back to the initial state, thereby inducing an infinite-horizon process.

In the experiment, we choose $H=6, m=2$, $d=|\calS||\calA| = 12$, and the constraint threshold to $b=0.7$. We compare \Cref{alg:main} with a random agent and Algorithm 2 of \citet{ghosh2023achieving}, which is a canonical primal-dual algorithm based on a finite-horizon approximation. For \Cref{alg:main}, we set the hyperparameters according to the choices specified in \Cref{thm:regret analysis}, while we use the adaptive clipping parameter $\zeta_k= \spn(v_r^*) + \lambda_k\spn(v_g^*)$ and set $\beta_k=(\zeta_k+1)d$ to improve learning stability. For Algorithm 2 in \citet{ghosh2023achieving}, we follow the default parameter choices, except that we choose $H=T^{1/4}$, omitting the factor $d^{3/4}$ in the denominator, and set $\beta=dH$, omitting the constant and logarithmic terms. We conduct 4 simulations with $K=1000$ under different random seeds.

As shown in \Cref{fig:exp}, \Cref{alg:main} achieves sublinear regret and constraint violation, providing empirical support for the main claim of \Cref{thm:regret analysis}. In contrast, Algorithm 2 of \citet{ghosh2023achieving} exhibits near-random performance. This behavior can be attributed to the choice of $\beta$, which induces a large bonus term and consequently leads to overly optimistic value-function estimates. These results suggest that the clipping mechanism in \Cref{alg:main}, characterized by the clipping parameter $\zeta$ and its dependence on $\spn(v_r^*)$ and $\spn(v_g^*)$, provides improved adaptivity in this CMDP. Overall, the results highlight the empirical efficiency of \Cref{alg:main}.

\begin{figure}[hbt]
\begin{subfigure}{0.45\linewidth}
\centering
\includegraphics[width=\linewidth]{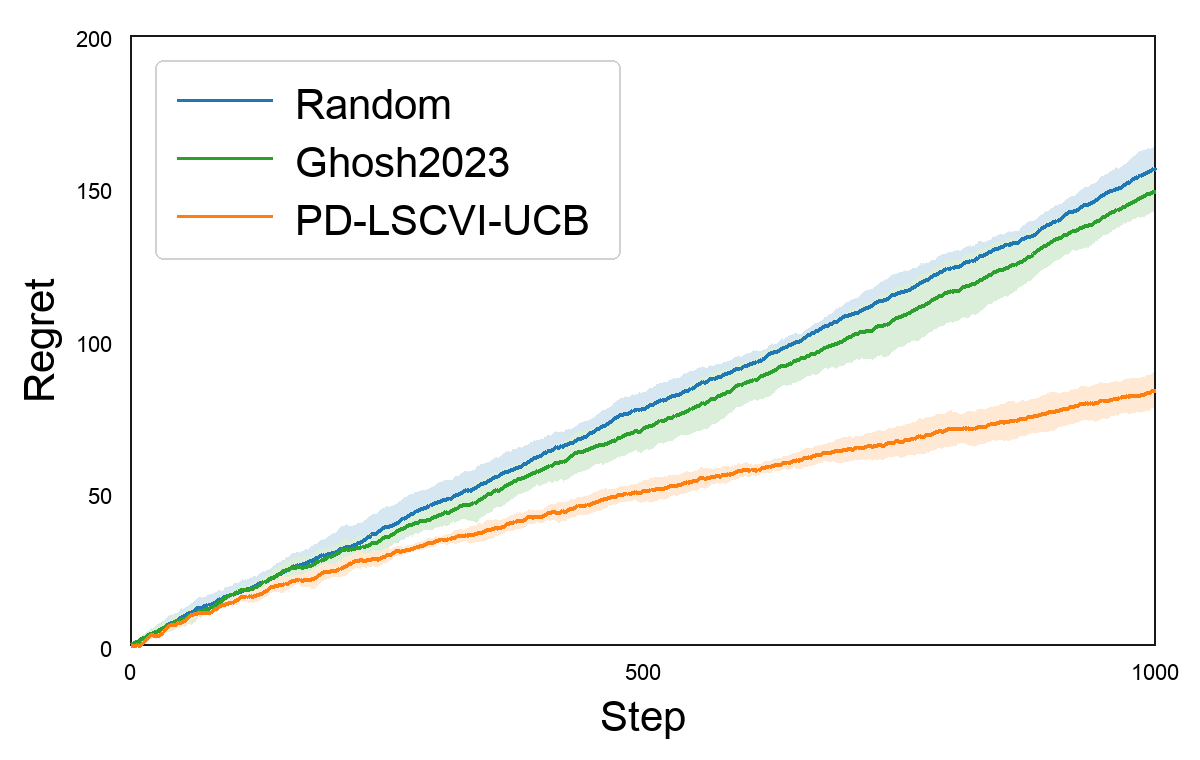}
\end{subfigure}
\hfill
\begin{subfigure}{0.45\linewidth}
\centering
\includegraphics[width=\linewidth]{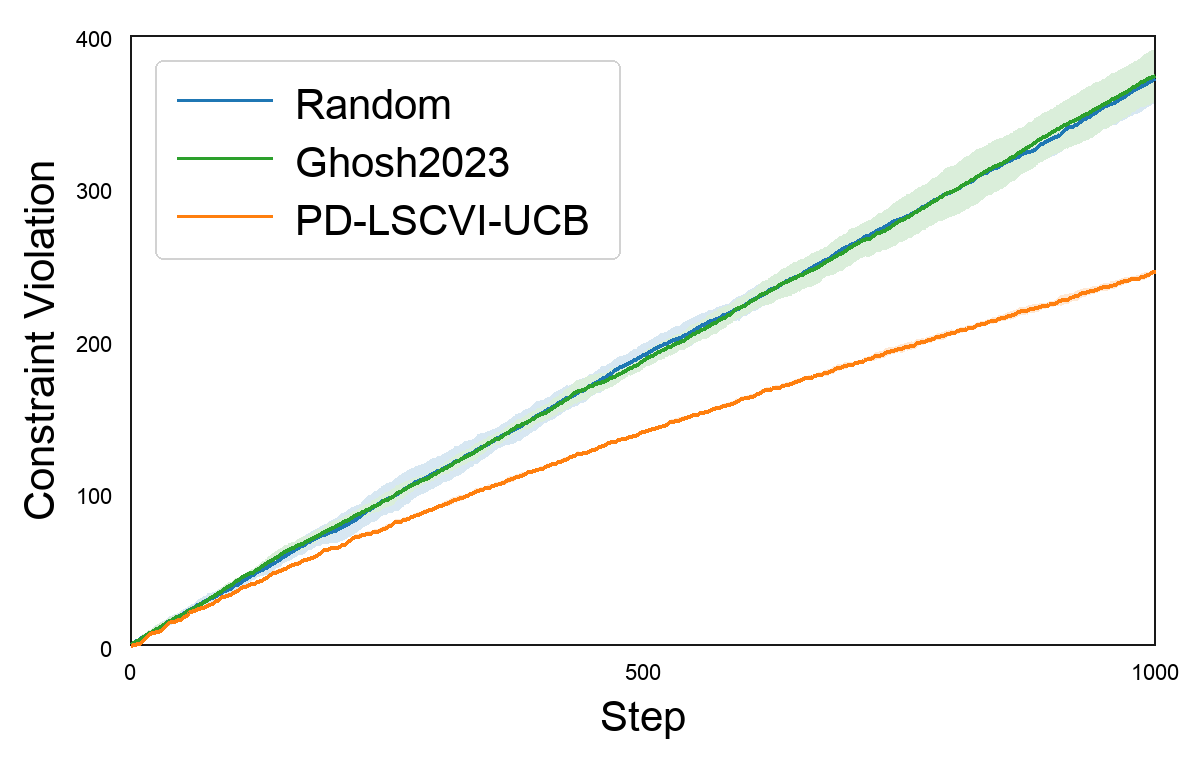}
\end{subfigure}
\caption{Comparison of random agent, Algorithm 2 in \citet{ghosh2023achieving}, and PD-LSCVI-UCB (\Cref{alg:main}). Each plot displays regret and constraint violation for $K=1000$ episodes. The shaded regions indicate 95\% confidence intervals.}
\label{fig:exp}
\end{figure}

\section{Auxiliary lemmas}
\begin{lemma}[Lemma D.1 in \citet{jin2020provably}]\label{lem:jin2020 lemma D.1}
Let $\bmSigma_t = I + \sum_{i=1}^t \phi_i\phi_i^\top$ where $\phi_i \in \bbR^d$. Then $\sum_{i=1}^t \phi_i^\top \bmSigma_t^{-1}\phi_i \leq d$.    
\end{lemma}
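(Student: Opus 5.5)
This bound is the standard elliptical potential lemma, and the plan is to reduce it to a trace computation. Write $\bmSigma_t = I + \sum_{i=1}^t \phi_i\phi_i^\top$. Using $\phi_i^\top \bmSigma_t^{-1}\phi_i = \mathrm{tr}(\bmSigma_t^{-1}\phi_i\phi_i^\top)$ and linearity of the trace, summing over $i$ gives
\[
\sum_{i=1}^t \phi_i^\top \bmSigma_t^{-1}\phi_i = \mathrm{tr}\Bigl(\bmSigma_t^{-1}\sum_{i=1}^t \phi_i\phi_i^\top\Bigr) = \mathrm{tr}\bigl(\bmSigma_t^{-1}(\bmSigma_t - I)\bigr) = \mathrm{tr}(I) - \mathrm{tr}(\bmSigma_t^{-1}) = d - \mathrm{tr}(\bmSigma_t^{-1}).
\]

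It then remains to show that $\mathrm{tr}(\bmSigma_t^{-1}) \geq 0$. Since $\bmSigma_t \succeq I \succ 0$, the matrix $\bmSigma_t$ is symmetric positive definite. Hence $\bmSigma_t^{-1}$ is also positive definite, and its trace is the sum of its eigenvalues, all of which are positive. This yields $\sum_i \phi_i^\top\bmSigma_t^{-1}\phi_i \leq d$, and in fact strictly less than $d$.

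None of the steps is a real obstacle. The only point to check is that $\bmSigma_t$ is invertible, which follows from $\bmSigma_t \succeq I$.
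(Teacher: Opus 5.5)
Your proof is correct. The paper gives no argument of its own and simply cites Lemma D.1 of \citet{jin2020provably}, whose proof is this same trace identity $\sum_i \phi_i^\top \bmSigma_t^{-1}\phi_i = \mathrm{tr}\bigl(\bmSigma_t^{-1}\sum_i \phi_i\phi_i^\top\bigr)$, finished by diagonalizing $\sum_i\phi_i\phi_i^\top$ to get $\sum_{j=1}^d \lambda_j/(\lambda_j+1)\le d$ — the same quantity as your $d-\mathrm{tr}(\bmSigma_t^{-1})$.
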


\begin{lemma}[Lemma 12 in \citet{hong2024reinforcement}]\label{lem:hong2024 lemma 12}
    Let $V:\calS \to [0,B]$ be a bounded function. Then, there exists $\bfw^* \in \bbR^d$ such that $PV(s,a) = \phi(s,a)^\top \bfw^*$ for all $(s,a) \in \calS\times \calA$ and $\|\bfw^*\|_2 \leq B\sqrt{d}$.
\end{lemma}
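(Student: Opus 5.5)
The statement is Lemma 12 of \citet{hong2024reinforcement}. I would prove it directly from the linear CMDP structure. The natural candidate is
\[
\bfw^* \triangleq \sum_{s'\in\calS}\psi(s')V(s').
\]
By linearity of $P$, for every $(s,a)$,
\[
\phi(s,a)^\top\bfw^* = \sum_{s'}\phi(s,a)^\top\psi(s')V(s') = \sum_{s'}P(s'|s,a)V(s') = PV(s,a).
\]
This gives the representation immediately, so the only content is the norm bound.

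For the norm, I would use that $\psi(s')\in\bbR_+^d$ has nonnegative entries and $0\le V(s')\le B$. Coordinatewise this gives
\[
0\le \bfw^*_i=\sum_{s'}\psi_i(s')V(s')\le B\sum_{s'}\psi_i(s').
\]
Since both vectors are nonnegative and ordered entrywise, their Euclidean norms are ordered as well. Hence
\[
\|\bfw^*\|_2\le B\Bigl\|\sum_{s'}\psi(s')\Bigr\|_2\le B\sqrt{d},
\]
using the standing assumption $\|\sum_{s'}\psi(s')\|_2\le\sqrt d$.

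The only step needing care is the monotonicity of the $\ell_2$ norm under entrywise domination. This relies essentially on the nonnegativity of $\psi$ and of $V$, which is why the lemma assumes $V$ takes values in $[0,B]$ rather than only having bounded span. In the paper this is what the centering $V_{h+1}^k-\min_{s'}V_{h+1}^k(s')\in[0,2\zeta]$ achieves. Without nonnegativity, I would instead bound $|\bfw^*_i|\le \|V\|_\infty\sum_{s'}\psi_i(s')$, which still works because $\psi\ge0$. So the argument reduces to this one inequality and involves no real obstacle.
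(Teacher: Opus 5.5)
Your proof is correct and is the standard argument: the paper does not reprove this lemma but only cites Lemma 12 of \citet{hong2024reinforcement}, and your candidate $\bfw^*=\sum_{s'}\psi(s')V(s')$ is exactly the vector the paper uses as $\bfw_{h+1}^{k,*}$ in the proof of \Cref{lem:good event}. The coordinatewise bound $0\le \bfw^*_i\le B\sum_{s'}\psi_i(s')$ uses $\psi\ge 0$, and combined with $\|\sum_{s'}\psi(s')\|_2\le\sqrt d$ it gives $\|\bfw^*\|_2\le B\sqrt d$ exactly as you argue.
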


\begin{lemma}[Lemma 13 in \citet{hong2024reinforcement}]\label{lem:hong2024 lemma 13}
    Let $\bfw$ be a ridge regression coefficient obtained by regressing $y \in [0,B]$ on $\bfx \in \bbR^d$ using the dataset $\{(\bfx_i,y_i)\}_{i=1}^n$ so that $\bfw = \bmSigma^{-1} \sum_{i=1}^n \bfx_i y_i$, where $\bmSigma = I+\sum_{i=1}^n \bfx_i\bfx_i^\top$. Then, $\|\bfw\|_2 \leq B\sqrt{dn}$.
\end{lemma}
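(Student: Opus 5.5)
The plan is a duality argument: bound $\bfw$ in the direction of an arbitrary unit vector, then use Cauchy--Schwarz together with the matrix inequalities that $\bmSigma = I+\sum_i \bfx_i\bfx_i^\top$ provides. Since $\|\bfw\|_2 = \sup_{\|v\|_2 = 1} v^\top \bfw$, it suffices to fix a unit vector $v\in\bbR^d$ and show $|v^\top \bfw| \le B\sqrt{dn}$.

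First, I will expand the coefficient:
\[
v^\top \bfw = \sum_{i=1}^n y_i\, v^\top \bmSigma^{-1}\bfx_i .
\]
Because $y_i\in[0,B]$, this gives
\[
|v^\top \bfw| \le B\sum_{i=1}^n |v^\top \bmSigma^{-1}\bfx_i| .
\]

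Second, I will apply Cauchy--Schwarz in the $\bmSigma^{-1}$ inner product, $|v^\top\bmSigma^{-1}\bfx_i| \le \|v\|_{\bmSigma^{-1}}\|\bfx_i\|_{\bmSigma^{-1}}$, followed by Cauchy--Schwarz over the index $i$. This yields
\[
|v^\top \bfw| \le B\,\|v\|_{\bmSigma^{-1}}\sqrt{n}\,\Big(\sum_{i=1}^n \bfx_i^\top\bmSigma^{-1}\bfx_i\Big)^{1/2}.
\]

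Third, I will bound each of the two factors separately:
\begin{itemize}
\item Since $\bmSigma\succeq I$, we have $\bmSigma^{-1}\preceq I$, so $\|v\|_{\bmSigma^{-1}}\le \|v\|_2 = 1$.
\item By \Cref{lem:jin2020 lemma D.1}, $\sum_{i=1}^n \bfx_i^\top\bmSigma^{-1}\bfx_i \le d$. As a self-contained check, this sum equals $\mathrm{tr}\big(\bmSigma^{-1}(\bmSigma - I)\big)\le \mathrm{tr}(I)=d$.
\end{itemize}
Combining the two factors gives $|v^\top\bfw|\le B\sqrt{dn}$, and taking the supremum over unit $v$ finishes the proof.

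There is no real obstacle here; the only point needing care is the choice of which Cauchy--Schwarz split to use. An alternative route bounds $\sum_i (v^\top\bmSigma^{-1}\bfx_i)^2 = v^\top\bmSigma^{-1}\big(\sum_i\bfx_i\bfx_i^\top\big)\bmSigma^{-1}v \le v^\top\bmSigma^{-1}v\le 1$. This gives the even stronger bound $B\sqrt{n}$, which implies the stated one since $d\ge 1$. I would present the version built on \Cref{lem:jin2020 lemma D.1} to match the stated form, and mention the sharper bound in a remark.
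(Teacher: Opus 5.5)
Your proof is correct and essentially the standard argument: the paper gives no proof of its own and simply cites Lemma 13 of \citet{hong2024reinforcement}, and the usual proof of this bound (going back to Lemma B.2 of \citet{jin2020provably}) is exactly your unit-vector test, Cauchy--Schwarz in the $\bmSigma^{-1}$-norm, $\|v\|_{\bmSigma^{-1}}\le 1$, and \Cref{lem:jin2020 lemma D.1} for the factor $d$, which is presumably why that lemma is listed among the auxiliary results. Your sharper remark is also valid, since $\bmSigma^{-1}(\bmSigma-I)\bmSigma^{-1}=\bmSigma^{-1}-\bmSigma^{-2}\preceq I$ gives $\|\bfw\|_2\le B\sqrt{n}$.
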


\begin{lemma}[Lemma 14 in \citet{hong2024reinforcement}] \label{lem:concentration}
    Let $\{x_t\}_{t=1}^\infty$ be a stochastic process on state space $\calS$ with corresponding filtration $\{\calF_t\}_{t=0}^\infty$. Let $\{\phi_t\}_{t=0}^\infty$ be a $\bbR^d$-valued stochastic process where $\phi_t \in \calF_{t-1}$ and $\|\phi_t\|_2 \leq 1$. Let $\bmSigma_n = I + \sum_{t=1}^n \phi_t\phi_t^\top$. Then for any $\delta > 0$ and any given function class $\calV$, with probability at least $1-\delta$, for all $n\geq 0$, and any $V \in \calV$ satisfying $\spn(V) \leq 2\zeta$, we have
    \begin{align*}
        \left\|\sum_{t=1}^n \phi_t(V(x_t) - \bbE[V(x_t) | \calF_{t-1}])\right\|_{\bmSigma_n^{-1}}^2 \leq 16\zeta^2 \left[\frac{d}{2}\log(n+1) + \log\frac{\calN_\epsilon}{\delta}\right] + 8n^2\epsilon^2
    \end{align*}
    where $\calN_\epsilon$ is the $\epsilon$-covering number of $\calV$ with respect to the distance $\textnormal{dist}(V,V') = \sup_x |V(x) - V'(x)|$.
\end{lemma}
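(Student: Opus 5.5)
The plan is to prove a self-normalized bound for a single fixed function, take a union bound over an $\epsilon$-net of $\calV$, and then pass to arbitrary $V\in\calV$ by approximation.

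\emph{Single fixed function.} Fix $V$ with $\spn(V)\le 2\zeta$. The noise term $\eta_t \triangleq V(x_t)-\bbE[V(x_t)\mid\calF_{t-1}]$ does not change if $V$ is shifted by a constant. So we may assume $V$ takes values in $[0,2\zeta]$. Then $\eta_t$ is $\calF_t$-measurable, conditionally mean zero, and lies in an interval of length $2\zeta$. By Hoeffding's lemma it is conditionally $\zeta$-sub-Gaussian. Since $\phi_t$ is $\calF_{t-1}$-measurable, the self-normalized martingale bound of Abbasi-Yadkori et al.\ applies. With probability at least $1-\delta'$, simultaneously for all $n\ge 0$,
\[
\Bigl\|\sum_{t=1}^n\phi_t\eta_t\Bigr\|_{\bmSigma_n^{-1}}^2\le 2\zeta^2\log\frac{\det(\bmSigma_n)^{1/2}}{\delta'} .
\]
Using $\|\phi_t\|_2\le1$ gives $\det(\bmSigma_n)\le(1+n)^d$, as in the proof of Lemma~\ref{lem:Regret + lambda Volation}. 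Hence the right-hand side is at most $2\zeta^2[\tfrac d2\log(n+1)+\log(1/\delta')]$.

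\emph{Union bound over a net.} Take an $\epsilon$-net $\calV_\epsilon$ of $\calV$ in the sup-distance, of size $\calN_\epsilon$, and set $\delta'=\delta/\calN_\epsilon$. The net elements may not themselves satisfy the span constraint. Each has span at most $2\zeta+2\epsilon$, because it lies within $\epsilon$ of some admissible $V$. Alternatively, one can choose the centers inside the admissible class at the cost of a covering at scale $\epsilon/2$. Either way the sub-Gaussian parameter is at most $\zeta+\epsilon$. This change is absorbed into the generous constant $16$ whenever $\epsilon\lesssim\zeta$; when $\epsilon\gtrsim\zeta$, the term $8n^2\epsilon^2$ dominates trivially. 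This bookkeeping is the only delicate point I expect, and I would handle it by the inside-the-class choice of centers. After the union bound, every net element satisfies the single-function bound simultaneously for all $n$, with probability at least $1-\delta$.

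\emph{Approximation.} For arbitrary admissible $V$, pick a net element $V'$ with $\sup_x|V(x)-V'(x)|\le\epsilon$. Write $\Delta_t$ for the difference of the two noise terms, so $|\Delta_t|\le2\epsilon$. Then
\[
\Bigl\|\sum_{t=1}^n\phi_t\Delta_t\Bigr\|_{\bmSigma_n^{-1}}\le\sum_{t=1}^n|\Delta_t|\,\|\phi_t\|_{\bmSigma_n^{-1}}\le 2\epsilon n,
\]
since $\|\phi_t\|_{\bmSigma_n^{-1}}\le\|\phi_t\|_2\le1$ because $\bmSigma_n\succeq I$. Combining with $\|a+b\|^2\le2\|a\|^2+2\|b\|^2$ gives
\[
\Bigl\|\sum_{t=1}^n\phi_t\eta_t\Bigr\|_{\bmSigma_n^{-1}}^2\le 4(\zeta+\epsilon)^2\Bigl[\tfrac d2\log(n+1)+\log\tfrac{\calN_\epsilon}{\delta}\Bigr]+8n^2\epsilon^2 .
\]
This is within the claimed $16\zeta^2[\cdots]+8n^2\epsilon^2$ after the case split on $\epsilon$ described above.
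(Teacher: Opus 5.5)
Your proposal is correct and follows the standard argument for this result: a fixed-function self-normalized bound, a union bound over an $\epsilon$-net, and a sup-norm approximation step costing $8n^2\epsilon^2$. The paper itself gives no proof and imports the lemma from \citet{hong2024reinforcement}, which follows Lemma D.4 of \citet{jin2020provably}. Of your two options, it is the span-$(2\zeta+2\epsilon)$ route with the case split that yields the statement exactly with $\calN_\epsilon$: $4(\zeta+\epsilon)^2\le 16\zeta^2$ for $\epsilon\le\zeta$, and the deterministic bound $\|\sum_t\phi_t\eta_t\|_{\bmSigma_n^{-1}}^2\le 4\zeta^2n^2\le 8n^2\epsilon^2$ for $\epsilon>\zeta$. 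The inside-the-class centers you say you would prefer would instead replace $\calN_\epsilon$ by $\calN_{\epsilon/2}$.
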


\begin{lemma}[Lemma 15 in \citet{hong2024reinforcement}]\label{lem:hong2024 lemma 15}
Let $\calV$ be a class of functions mapping from $\calS$ to $\bbR$ with the following parametric form:
\begin{align*}
    V(\cdot) = \left(\max_a \bfw^\top \phi(\cdot,a) + v + \beta\|\phi(\cdot,a)\|_{\bmSigma^{-1}}\right) \wedge M
\end{align*}
where the parameters $(\bfw, \beta, v, \bmSigma, M)$ satisfy $\|\bfw\|_2 \leq L$, $\beta \in [0,B]$, $v\in [0,D]$, $M\geq 0$ and the minimum eigenvalue satisfies $\lambda_{\min}(\bmSigma)\geq 1$. Assume $\|\phi(s,a)\|_2 \leq 1$ for all $(s,a)$, and let $\calN_\epsilon$ be the $\epsilon$-covering number of $\calV$ with respect to $\textnormal{dist}(V,V') \leq \sup_s |V(s) - V'(s)|$. Then
\[
    \log(\calN_\epsilon) \leq d\log(1 + 8L/\epsilon) + \log(1+4D/\epsilon) + d^2\log\left(1 + 8d^{1/2}B^2/\epsilon^2\right).
\]
    
\end{lemma}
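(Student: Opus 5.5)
The plan is a standard parametric covering argument: show that the map from parameters to functions in $\calV$ is Lipschitz (or Hölder) in suitable norms, then take the product of nets in each parameter. Throughout, I treat the truncation level $M$ as fixed for the class, as in the application (Lemma~\ref{lem:good event}, where $M=(1+2/\slater)H$); the stated bound has no $M$-dependence, so this reading seems to be intended.

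First I would reparametrize the bonus term as $\beta\|\phi\|_{\bmSigma^{-1}}=\sqrt{\phi^\top A\phi}$ with $A=\beta^2\bmSigma^{-1}$. Since $\lambda_{\min}(\bmSigma)\ge 1$, we have $\|\bmSigma^{-1}\|_2\le 1$, hence $\|A\|_F\le \sqrt d\,\|A\|_2\le \sqrt d\,B^2$. So each $V\in\calV$ is determined by a triple $(\bfw,v,A)$ with $\|\bfw\|_2\le L$, $v\in[0,D]$ and $\|A\|_F\le\sqrt d B^2$.

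Next I would establish the perturbation bound. Truncation $x\mapsto x\wedge M$ and $\max_a$ are both $1$-Lipschitz in sup norm, so it suffices to compare the inner expressions for fixed $(s,a)$. Using $\|\phi\|_2\le 1$, Cauchy--Schwarz, and $|\sqrt x-\sqrt y|\le\sqrt{|x-y|}$, I get
\[
\sup_s|V(s)-V'(s)|\le \|\bfw-\bfw'\|_2+|v-v'|+\sqrt{\|A-A'\|_F}.
\]
The middle inequality uses $|\phi^\top(A-A')\phi|\le\|A-A'\|_2\le\|A-A'\|_F$.

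Finally I would build the nets. Take an $\epsilon/4$-net of the $L$-ball in $\bbR^d$, of size at most $(1+8L/\epsilon)^d$. Take an $\epsilon/4$-net of $[0,D]$, of size at most $1+2D/\epsilon\le 1+4D/\epsilon$. Take an $\epsilon^2/4$-net, in Frobenius norm, of the ball of radius $\sqrt d B^2$ in $\bbR^{d\times d}$, of size at most $(1+8\sqrt d B^2/\epsilon^2)^{d^2}$. For any $V$, choosing the nearest net points gives error at most $\epsilon/4+\epsilon/4+\epsilon/2=\epsilon$. Taking logarithms of the product of the three net sizes yields the claimed bound.

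Net points of $A$ need not have the form $\beta^2\bmSigma^{-1}$, and the function they define might fail to lie in $\calV$. This is harmless: an external cover suffices for Lemma~\ref{lem:concentration}, or one can pass to an internal cover at radius $2\epsilon$ at the cost of constants only. The only delicate point is the square-root step: it forces the matrix net to have scale $\epsilon^2$, which is exactly what produces the $\epsilon^{-2}$ term inside the $d^2\log(\cdot)$.
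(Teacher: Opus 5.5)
Your argument is correct. The paper gives no proof of its own; it simply cites Hong et al., whose argument is the one you give (in the style of Jin et al.'s Lemma D.6): the reparametrization $A=\beta^2\bmSigma^{-1}$ with $\|A\|_F\le\sqrt d B^2$, followed by a product of nets at scales $\epsilon/4$, $\epsilon/4$ and $\epsilon^2/4$. Reading $M$ as fixed is necessary, since a free truncation level would add a parameter the stated bound does not account for.

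One small refinement to your last paragraph: you do not need an external cover or any loss of constants. A maximal $\delta$-separated subset of any set contained in a radius-$R$ ball of $\bbR^n$ has at most $(1+2R/\delta)^n$ points. So you can choose all net points inside the actual parameter set, e.g.\ matrices of the form $\beta^2\bmSigma^{-1}$. This gives an internal cover with exactly the stated constants and avoids $\sqrt{\phi^\top A'\phi}$ being undefined for a non-PSD net point $A'$.
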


\begin{lemma}[Lemma D.2 in \citet{jin2020provably}]\label{lem:elliptical potential}
    Let $\{\phi_t\}_{t\geq 0}$ be a bounded sequence in $\bbR^d$ satisfying $\sup_{t\geq 0} \|\phi_t\|\leq 1$. Let $\bmSigma_0 \in \bbR^{d\times d}$ be a positive definite matrix. For any $t \geq 0$, we define $\bmSigma_t = \bmSigma_0 + \sum_{j=1}^t \phi_j \phi_j^\top$. Then if the smallest eigenvalue of $\bmSigma_0$ satisfies $\lambda_{\min}(\bmSigma_0) \geq 1$, we have
    \begin{align*}
        \log\frac{\det(\bmSigma_t)}{\det(\bmSigma_0)} \leq \sum_{j=1}^t \phi_j^\top \bmSigma_{j-1}^{-1} \phi_j \leq 2\log\frac{\det(\bmSigma_t)}{\det(\bmSigma_0)}.
    \end{align*}
\end{lemma}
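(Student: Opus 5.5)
The plan is to reduce the determinant ratio to a telescoping product of rank-one updates and then compare $\log(1+x)$ with $x$ termwise. Write $x_j \triangleq \phi_j^\top \bmSigma_{j-1}^{-1}\phi_j$ for $j=1,\ldots,t$. Since $\bmSigma_j = \bmSigma_{j-1} + \phi_j\phi_j^\top$, the matrix determinant lemma gives
\[
\det(\bmSigma_j) = \det(\bmSigma_{j-1})\bigl(1 + \phi_j^\top \bmSigma_{j-1}^{-1}\phi_j\bigr) = \det(\bmSigma_{j-1})(1+x_j).
\]
Taking logarithms and telescoping over $j=1,\ldots,t$ yields
\[
\log\frac{\det(\bmSigma_t)}{\det(\bmSigma_0)} = \sum_{j=1}^t \log(1+x_j).
\]
Positive definiteness of every $\bmSigma_{j-1}$ is inherited from $\bmSigma_0$, so each $x_j \ge 0$.

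Next I would show $x_j \le 1$. We have $\bmSigma_{j-1} \succeq \bmSigma_0$, because we only add positive semidefinite terms. Hence $\lambda_{\min}(\bmSigma_{j-1}) \ge \lambda_{\min}(\bmSigma_0) \ge 1$, so $\bmSigma_{j-1}^{-1} \preceq I$. Combined with $\|\phi_j\|\le 1$, this gives $x_j \le \|\phi_j\|_2^2 \le 1$.

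It then suffices to prove the scalar inequality $\log(1+x) \le x \le 2\log(1+x)$ for all $x\in[0,1]$. The left inequality is the standard concavity bound. For the right one, set $f(x)=2\log(1+x)-x$. Then $f(0)=0$ and $f'(x)=\frac{2}{1+x}-1 \ge 0$ on $[0,1]$, so $f\ge 0$ there. Applying both inequalities to each $x_j$ and summing over $j$ gives the two claimed bounds.

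I expect no real obstacle. The only point needing care is the restriction $x_j\le 1$, which is where the hypothesis $\lambda_{\min}(\bmSigma_0)\ge 1$ is used. Without it the upper bound $x\le 2\log(1+x)$ fails for large $x$.
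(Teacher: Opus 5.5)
Your proof is correct and follows the same route as the standard argument in \citet{jin2020provably}, which the paper cites here without reproducing: telescope $\det(\bmSigma_j)=\det(\bmSigma_{j-1})(1+x_j)$ via the rank-one determinant identity, bound $x_j\le 1$ using $\lambda_{\min}(\bmSigma_0)\ge 1$ and $\|\phi_j\|\le 1$, and apply $\log(1+x)\le x\le 2\log(1+x)$ on $[0,1]$. Your remark is also accurate that the hypothesis $\lambda_{\min}(\bmSigma_0)\ge 1$ is needed only for the upper bound.
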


\begin{lemma}\label{lem:doubling trick boundt}
    $|\{k\in [K]: 2\det(\bmSigma_k) < \det(\bmSigma_{k+1})\}| \leq \frac{3}{2} d\log(1+T).$
\end{lemma}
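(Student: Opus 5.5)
The statement is a counting bound on the number of episodes in which the Gram matrix's determinant more than doubles. The plan is a telescoping argument on $\log\det(\bmSigma_k)$.

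First, I will record the basic facts. We have $\bmSigma_1 = I$, and the update $\bmSigma_{k+1} = \bmSigma_k + \sum_{h=1}^H \phi(s_h^k,a_h^k)\phi(s_h^k,a_h^k)^\top$ adds a positive semidefinite matrix. Hence $\bmSigma_{k+1}\succeq \bmSigma_k$, and so $\det(\bmSigma_{k+1}) \geq \det(\bmSigma_k) \geq 1$ for every $k$. Consequently each ratio $\det(\bmSigma_{k+1})/\det(\bmSigma_k)$ is at least $1$, and its logarithm is nonnegative.

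Next, I will define the set of bad episodes $\calO = \{k\in[K]: 2\det(\bmSigma_k) < \det(\bmSigma_{k+1})\}$. Telescoping gives
\[
\log\det(\bmSigma_{K+1}) = \sum_{k=1}^K \log\frac{\det(\bmSigma_{k+1})}{\det(\bmSigma_k)} \geq \sum_{k\in\calO}\log\frac{\det(\bmSigma_{k+1})}{\det(\bmSigma_k)} \geq |\calO|\log 2 .
\]
Dropping the episodes outside $\calO$ is valid because their terms are nonnegative.

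For the upper bound, I will reuse the estimate already used in the proof of \Cref{lem:Regret + lambda Volation}. Since $\|\phi\|_2\le 1$ and there are $KH = T$ rank-one updates, $\|\bmSigma_{K+1}\|_2 \le 1+T$, and therefore $\det(\bmSigma_{K+1}) \le (1+T)^d$. Combining the two displays yields $|\calO| \le d\log(1+T)/\log 2$.

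The only delicate point is the constant. We have $1/\log 2 \approx 1.4427 \le 3/2$, which gives exactly $\frac{3}{2}d\log(1+T)$, assuming $\log$ is the natural logarithm. No step is a real obstacle; the only care needed is the monotonicity of the determinant, which justifies discarding the non-doubling episodes.
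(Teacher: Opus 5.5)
Your proof is correct and follows essentially the same route as the paper: both bound $\det(\bmSigma_{K+1})$ below by $2^{N}$ via the telescoping product with monotone determinants (you work with logarithms) and above by $(1+T)^d$, then conclude using $1/\log 2 \le 3/2$.
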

\begin{proof}
    For simplicity, let $N = |\{k\in [K]: 2\det(\bmSigma_k) < \det(\bmSigma_{k+1})\}|$. Note that since $\det(\bmSigma_{k+1}) \geq \det(\bmSigma_k)$, we have
    \begin{align*}
        \det(\bmSigma_{K+1}) = \prod_{k=1}^{K} \frac{\det(\bmSigma_{k+1})}{\det(\bmSigma_k)} \geq 2^N.
    \end{align*}
    Moreover,
    \begin{align*}
        \det(\bmSigma_{K+1}) = \det\left(I + \sum_{k,h}\phi(s_h^k,a_h^k)\phi(s_h^k,a_h^k)^\top\right) \leq (1 + T)^d.
    \end{align*}
    Finally, we have $N \leq d\log_2 (1+T) \leq \frac{3}{2} d\log(1+T)$.
\end{proof}





\end{document}